\documentclass[11pt]{article}
\usepackage[papersize={8.5in,11in},top=1in,left=1in,right=1in,bottom=1in]{geometry}

\usepackage{wrapfig}

\usepackage{todonotes}

\usepackage{microtype} 

\usepackage[utf8]{inputenc} 
\usepackage[T1]{fontenc}    
\usepackage{hyperref}       
\usepackage{url}            
\usepackage{booktabs}       
\usepackage{amsfonts}       
\usepackage{nicefrac}       

\usepackage{xcolor}

\usepackage{algorithm}
\usepackage{algpseudocode}
\usepackage{amsmath}
\allowdisplaybreaks[3] 

\usepackage{amssymb}
\usepackage{amsthm}
\usepackage{mathtools}
\usepackage{braket}
\usepackage[utf8]{inputenc}
\usepackage{colortbl}
\usepackage{enumitem}
\usepackage{lipsum}
\usepackage{wrapfig}
\usepackage[square,numbers]{natbib}
\usepackage{physics}
\usepackage{xparse}

\usepackage{tabularx} 
\usepackage{booktabs} 
\usepackage{ragged2e} 

\newcolumntype{Y}{>{\Centering\arraybackslash}X}

\algnewcommand{\LineComment}[1]{\State $\triangleright$ #1}

\DeclareMathOperator*{\R}{\mathbb{R}}

\DeclareMathOperator{\E}{\mathbb{E}}
\DeclareMathOperator*{\cO}{\mathcal{O}}

\DeclareMathOperator*{\cS}{\mathcal{S}}
\DeclareMathOperator*{\cA}{\mathcal{A}}

\DeclareMathOperator{\KL}{\mathrm{KL}}
\DeclareMathOperator{\J}{\mathbf{J}}

\NewDocumentCommand{\pd}{o}{%
  \mathord{\partial}\IfValueT{#1}{_{#1}}\mkern-0.5mu%
}

\usepackage{authblk}

\newtheorem{assumption}{Assumption}

\newtheorem{theorem}{Theorem}

\newtheorem{lemma}{Lemma}

\title{Breaking the Bias Barrier in Concave Multi-Objective Reinforcement Learning}

\author{Swetha Ganesh and Vaneet Aggarwal}

\affil{Purdue University, USA 47907}
\date{\vspace{-.1in}}

\begin{document}

\maketitle

\begin{abstract}
While standard reinforcement learning optimizes a single reward signal, many applications require optimizing a nonlinear utility $f(J_1^\pi,\dots,J_M^\pi)$ over multiple objectives, where each $J_m^\pi$ denotes the expected discounted return of a distinct reward function. A common approach is concave scalarization, which captures important trade-offs such as fairness and risk sensitivity. However, nonlinear scalarization introduces a fundamental challenge for policy gradient methods: the gradient depends on $\partial f(J^\pi)$, while in practice only empirical return estimates $\hat J$ are available. Because $f$ is nonlinear, the plug-in estimator is biased ($\mathbb{E}[\partial f(\hat J)] \neq \partial f(\mathbb{E}[\hat J])$), leading to persistent gradient bias that degrades sample complexity. 

In this work we identify and overcome this bias barrier in concave-scalarized multi-objective reinforcement learning. We show that existing policy-gradient methods suffer an intrinsic $\widetilde{\mathcal{O}}(\epsilon^{-4})$ sample complexity due to this bias. To address this issue, we develop a Natural Policy Gradient (NPG) algorithm equipped with a multi-level Monte Carlo (MLMC) estimator that controls the bias of the scalarization gradient while maintaining low sampling cost. We prove that this approach achieves the optimal $\widetilde{\mathcal{O}}(\epsilon^{-2})$ sample complexity for computing an $\epsilon$-optimal policy. Furthermore, we show that when the scalarization function is second-order smooth, the first-order bias cancels automatically, allowing vanilla NPG to achieve the same $\widetilde{\mathcal{O}}(\epsilon^{-2})$ rate without MLMC. Our results provide the first optimal sample complexity guarantees for concave multi-objective reinforcement learning under policy-gradient methods.

\end{abstract}

\section{Introduction}

Standard reinforcement learning (RL) studies how to learn a policy $\pi$ that maximizes a single scalar objective $J^\pi$, defined as the expected discounted return of a reward signal. However, many modern decision-making systems must balance multiple competing objectives rather than optimize a single reward. Examples include trading off throughput and energy consumption in communication systems \cite{Aggarwal2017JointEA}, latency and power usage in queueing and computing systems \cite{10.1109/TNET.2020.2973224}, and efficiency-safety trade-offs in robotic control \cite{10.1109/IROS45743.2020.9341519}. In such settings, the performance of a policy is naturally described by a vector of returns $\{J_m^\pi\}_{m=1}^M$, or equivalently $\mathbf{J}^\pi \in \mathbb{R}^M$.

A widely used approach for handling multiple objectives is to introduce a scalarization function and optimize the utility $
f(\mathbf{J}^\pi)$, where $f:\mathbb{R}^M \to \mathbb{R}$ is a concave function that captures the trade-offs among objectives. Standard RL is recovered as the special case $M=1$ with $f(x)=x$. Concave scalarizations are particularly appealing because they naturally encode preferences such as fairness or risk sensitivity. For example, $\alpha$-fair utilities widely used in networking, queueing systems, and multi-task learning take the form
\begin{align}\label{eq:alph-fairness}
f(\mathbf{J}^\pi) = \sum_{m=1}^M u_\alpha(J_m^\pi), 
\qquad 
u_\alpha(x)=\frac{x^{1-\alpha}}{1-\alpha},
\end{align}
where $\alpha$ determines the desired fairness-efficiency trade-off \cite{fairness_network,ban2024fair}.

Despite its conceptual simplicity, optimizing nonlinear utilities in reinforcement learning poses significant theoretical challenges. In particular, value-based methods rely heavily on the additive structure of scalar rewards and Bellman equations, which breaks down under nonlinear scalarization \cite{agarwal2022multi,agarwal2023reinforcement}. As a result, policy-gradient methods provide a natural alternative. However, obtaining sharp sample-complexity guarantees for policy-gradient algorithms in this setting has proven to be considerably more difficult \cite{10.1613/jair.1.13981,barakat2025on,NEURIPS2024_010c855d}.

The main challenge stems from estimating the gradient of the scalarized objective. The policy gradient depends on $\partial f(\mathbf{J}^\pi)$ evaluated at the true return vector. In practice, however, $\mathbf{J}^\pi$ is unknown and must be estimated from sampled trajectories. Consequently, policy-gradient estimators rely on plug-in estimates $\widehat{\mathbf{J}}$ and evaluate $\partial f(\widehat{\mathbf{J}})$. Because the scalarization $f$ is nonlinear, this introduces a fundamental bias: even when $\widehat{\mathbf{J}}$ is an unbiased estimate of $\mathbf{J}^\pi$, the gradient estimator satisfies
\[
\mathbb{E}[\partial f(\widehat{\mathbf{J}})] \neq \partial f(\mathbb{E}[\widehat{\mathbf{J}}]).
\]
This bias persists across iterations and must be controlled through large batch sizes, which substantially increases the sample complexity of policy-gradient methods.

This phenomenon explains the gap in existing theoretical guarantees. The first model-based analysis of concave multi-objective RL was developed in \cite{agarwal2022multi,agarwal2023reinforcement}, which study the average-reward setting and establish regret guarantees using explicit model estimation. Further, \cite{10.1613/jair.1.13981} proposed a model-free policy-gradient algorithm and proved that computing an $\epsilon$-optimal policy requires $\widetilde{\cO}(\epsilon^{-4})$ samples. This rate is significantly worse than the optimal $\widetilde{\cO}(\epsilon^{-2})$ sample complexity known for standard reinforcement learning \cite{mondal2024improved,fatkhullin2023stochastic}. The degradation arises precisely from the bias introduced by nonlinear scalarization.

These observations motivate the central question of this work:
\[
\fbox{\begin{minipage}{0.9\linewidth}\centering
Can we overcome the bias introduced by nonlinear scalarization and obtain an $\epsilon$-optimal policy for concave multi-objective reinforcement learning using only $\widetilde{\cO}(\epsilon^{-2})$ samples?
\end{minipage}}
\]

In this paper we answer this question in the affirmative. Our approach combines natural policy gradient (NPG) methods with bias-controlled gradient estimators that mitigate the effects of nonlinear scalarization.

\paragraph{Technical overview and novelty.}
Our analysis reveals that nonlinear scalarization fundamentally alters the bias-variance trade-off in policy gradient estimation. We show that under standard Lipschitz assumptions the plug-in gradient estimator incurs a bias of order $\mathcal{O}(1/\sqrt{B})$ for batch size $B$, which forces existing policy-gradient methods to use $\cO(\epsilon^{-2})$ samples per iteration and leads to the overall $\widetilde{\cO}(\epsilon^{-4})$ complexity.

Our key idea is to explicitly control this bias. We develop a multi-level Monte Carlo (MLMC) estimator that effectively simulates large-batch gradient estimates while requiring only logarithmic expected sampling cost. When combined with natural policy gradient updates, this estimator reduces the scalarization bias sufficiently to recover the order-optimal $\widetilde{\cO}(\epsilon^{-2})$ sample complexity. Furthermore, we show that when the scalarization function is twice differentiable, the leading-order bias cancels automatically through a second-order expansion, enabling vanilla NPG to achieve the same $\widetilde{\cO}(\epsilon^{-2})$ rate without MLMC. These results establish the first optimal sample complexity guarantees for policy-gradient methods in concave multi-objective reinforcement learning.

\subsection{Main Contributions}

Our main contributions are as follows.

\begin{itemize}
\item \textbf{Optimal sample complexity via MLMC-NPG.}
We develop a natural policy gradient algorithm equipped with a multi-level Monte Carlo (MLMC) estimator that controls the bias arising from nonlinear scalarization. We show that this method achieves $\widetilde{\cO}(\epsilon^{-2})$ sample complexity for computing an $\epsilon$-optimal policy, matching the optimal rate known for standard reinforcement learning.

\item \textbf{Bias cancellation under second-order smooth scalarizations.}
We further show that when the scalarization function $f$ satisfies a second-order smoothness condition, the leading-order bias in the plug-in estimator cancels automatically. In this regime, even vanilla NPG achieves $\widetilde{\cO}(\epsilon^{-2})$ sample complexity without requiring MLMC.
\end{itemize}

\subsection{Related Works}

\paragraph{Concave multi-objective reinforcement learning.}
The formulation of reinforcement learning with concave scalarized utilities was first studied in the model-based setting by \cite{agarwal2022multi,agarwal2023reinforcement}, which consider the average-reward setting and establish regret guarantees using explicit model estimation. Further, \cite{agarwal2022concave} extends the results with constraints. While these approaches provide strong theoretical guarantees, they rely on estimating the underlying MDP model, which can be computationally expensive in large state-action spaces. A related formulation of scalarized multi-objective RL has been studied in \cite{10.5555/3709347.3743798}. However, this paper provides an $\epsilon$-approximation guarantee with runtime complexity for a model-based planner, rather than a sample complexity guarantee. 

\paragraph{Policy gradient methods for concave utilities.}
To address the scalability limitations of model-based methods, \cite{10.1613/jair.1.13981} proposed a model-free policy-gradient algorithm for concave multi-objective reinforcement learning and showed that computing an $\epsilon$-optimal policy requires $\cO(\epsilon^{-4})$ samples. Subsequent work \cite{zhou2022anchorchanging} analyzes natural policy-gradient methods for concave utility reinforcement learning, but assumes access to exact gradients, thereby bypassing the sampling challenges that arise in practical RL settings. The gap between the $\cO(\epsilon^{-4})$ guarantees in this literature and the optimal $\cO(\epsilon^{-2})$ sample complexity known for standard RL motivates the present work.

\paragraph{Reinforcement learning with general utilities.}
Our formulation is also closely related to reinforcement learning with general utilities (RLGU) \cite{NEURIPS2020_30ee748d}, where the objective is to maximize concave function $g(\lambda^\pi)$ and $\lambda^\pi \in \mathbb{R}^{|\mathcal S||\mathcal A|}$ denotes the discounted state-action occupancy measure of policy $\pi$. For finite state and action spaces, the RLGU formulation is equivalent to scalarized multi-objective RL. In particular, if we define $h_m(\lambda)=\langle r_m,\lambda\rangle$ for reward functions $r_m$ and set $g(\lambda)=f(h_1(\lambda),\dots,h_M(\lambda))$, then $h_m(\lambda^\pi)=J_m^\pi$ and the objective reduces to $f(\mathbf J^\pi)$. Conversely, by choosing $M=|\mathcal S||\mathcal A|$ and defining reward functions that correspond to indicator rewards for individual state-action pairs, the objective $f(\mathbf J^\pi)$ can be written equivalently as $g(\lambda^\pi)$. RLGU has been studied extensively in prior works \cite{pmlr-v97-hazan19a,NEURIPS2021_d7e4cdde,NEURIPS2020_30ee748d,zhang2021on,10.5555/3535850.3535906}, most of which focus on the tabular setting. More recent works extend these results to parametrized policies \cite{barakat2025on,NEURIPS2024_010c855d,JMLR:v24:22-1514} under structural assumptions such as low-rank MDPs, where the occupancy measure admits a low-dimensional representation. However, even in these regimes the best known global sample complexity for policy-gradient methods remains $\cO(\epsilon^{-4})$ \cite{barakat2025on,NEURIPS2024_010c855d}.

\section{Problem Setting}

We consider an infinite-horizon discounted Markov Decision Process (MDP)
$\mathcal{M} = (\mathcal{S}, \mathcal{A}, P, \{r_m\}_{m=1}^M, \gamma, \rho)$,
where $\mathcal{S}$ and $\mathcal{A}$ denote the state and action spaces,
$P:\mathcal{S}\times\mathcal{A}\to\Delta_{\mathcal S}$ is the transition kernel,
$r_m:\mathcal{S}\times\mathcal{A}\to[0,1]$ are the reward functions associated
with objectives $m\in[M]$, $\gamma\in(0,1)$ is the discount factor, and
$\rho\in\Delta_{\mathcal S}$ is the initial state distribution.

A stationary stochastic policy $\pi:\mathcal{S}\to\Delta_{\mathcal A}$ maps
states to distributions over actions. For each objective $m\in[M]$, the
expected discounted return under policy $\pi$ is
\begin{align}
J_m^\pi
=
\mathbb{E}_{\pi,P}
\left[
\sum_{t=0}^{\infty} \gamma^t r_m(s_t,a_t)
\right].
\end{align}

Similarly, the state-value function $V_m^\pi:\mathcal S\to\mathbb R$ and
state-action value function $Q_m^\pi:\mathcal S\times\mathcal A\to\mathbb R$
are defined as
\begin{align}
V_m^\pi(s)
&=
\mathbb{E}_{\pi,P}
\left[
\sum_{t=0}^{\infty} \gamma^t r_m(s_t,a_t)
\,\bigg|\, s_0=s
\right],\\
Q_m^\pi(s,a)
&=
\mathbb{E}_{\pi,P}
\left[
\sum_{t=0}^{\infty} \gamma^t r_m(s_t,a_t)
\,\bigg|\, s_0=s, a_0=a
\right].
\end{align}

The advantage function for objective $m$ is
$A_m^\pi(s,a)=Q_m^\pi(s,a)-V_m^\pi(s)$.
Let $\mathbf J^\pi=(J_1^\pi,\dots,J_M^\pi)^\top\in\mathbb R^M$
denote the vector of cumulative returns.

We parameterize the policy as $\pi_\theta$ with
$\theta\in\Theta\subseteq\mathbb R^d$.
The agent's goal is to solve the optimization problem
\begin{align}
\label{eq:opt_prob}
\max_{\theta\in\Theta} f(\mathbf J^{\pi_\theta}),
\end{align}
where $f:\mathbb R^M\to\mathbb R$ is a concave scalarization function.

\begin{assumption}[Concavity of $f$]
\label{assump:concave}
The function $f$ is concave. That is, for any random vector $X$ in the
domain of $f$,
\[
f(\mathbb{E}[X]) \ge \mathbb{E}[f(X)].
\]
\end{assumption}

Despite the concavity of $f$, the optimization problem
\eqref{eq:opt_prob} remains non-concave in the policy parameter $\theta$,
since each component $J_m^{\pi_\theta}$ is generally non-concave in $\theta$. We next introduce notation for derivatives of $f$.
For each coordinate $m\in[M]$, let $\partial_m f$ denote the partial
derivative of $f$ with respect to its $m$-th argument.
To facilitate convergence analysis we assume a smoothness condition.

\begin{assumption}[Smoothness of $f$]
\label{assump:smooth_f}
For each $m\in[M]$, the partial derivative $\partial_m f$
is $L_f$-Lipschitz on $\left[0,\frac{1}{1-\gamma}\right]^M$, i.e.,
\[
|\partial_m f(x)-\partial_m f(y)|
\le
L_f \|x-y\|_2
\]
for all $x,y\in\left[0,\frac{1}{1-\gamma}\right]^M$.
\end{assumption}

Since the rewards lie in $[0,1]$, each return $J_m^{\pi_\theta}$ lies in
$\left[0,\tfrac{1}{1-\gamma}\right]$. Hence it suffices to assume
Lipschitz continuity of the partial derivatives of $f$ over this compact
domain. As a consequence of this assumption, there exists $C>0$ such that $|\partial_m f(y)|
\le
C$  for all $y\in\left[0,\tfrac{1}{1-\gamma}\right]^M$ and $m\in[M]$. We further note that for $\alpha > 0$, the $\alpha$-fairness objective in \eqref{eq:alph-fairness} is not only concave but also satisfies first- and second-order smoothness on any domain such that $J_m^\pi \geq \delta > 0$ for all $m$. This positivity condition is required in any case to ensure that the objective is well-defined.

{\bf Policy Gradient and Natural Policy Gradient: } In this work we adopt a policy gradient approach to solve
\eqref{eq:opt_prob}. At iteration $k$, the parameter $\theta_k$
is updated along the gradient
$\nabla_\theta f(\mathbf J^{\pi_{\theta_k}})$.
Using the chain rule,
\begin{align}
\nabla_\theta f(\mathbf J^{\pi_\theta})
=
\sum_{m=1}^M
\partial_m f(\mathbf J^{\pi_\theta})
\nabla_\theta J_m^{\pi_\theta}.
\end{align}

Applying the policy gradient theorem \cite{sutton1999policy}
to each $\nabla_\theta J_m^{\pi_\theta}$ gives
\begin{align}
\nabla_\theta f(\mathbf J^{\pi_{\theta_k}})
=
\frac{1}{1-\gamma}
\mathbb{E}_{(s,a)\sim\nu_\rho^{\pi_{\theta_k}}}
\left[
\sum_{m=1}^M
\partial_m f(\mathbf J^{\pi_{\theta_k}})
A_m^{\pi_{\theta_k}}(s,a)
\nabla_\theta \log \pi_{\theta_k}(a|s)
\right],
\end{align}
where $\nu_\rho^{\pi_\theta}(s,a)
\coloneqq d_\rho^{\pi_\theta}(s)\pi_\theta(a|s)$
and the discounted state visitation distribution is
\begin{align}
d_\rho^{\pi_\theta}(s)
=
(1-\gamma)
\sum_{t=0}^{\infty}
\gamma^t
\mathbb P(s_t=s\mid s_0\sim\rho,\pi_\theta).
\end{align}

Note that computing this gradient requires evaluating
$\partial_m f(\mathbf J^{\pi_\theta})$, which depends on the unknown
return vector $\mathbf J^{\pi_\theta}$ and must therefore be estimated
from sampled trajectories.

Rather than using vanilla policy gradient, we employ the Natural Policy
Gradient (NPG) algorithm. At iteration $k$ the parameters are updated as
\begin{align}
\theta_{k+1}
=
\theta_k + \alpha \omega_k^*,
\end{align}
where $\alpha>0$ is the step size and
$\omega_k^*=F(\theta_k)^\dagger\nabla_\theta
f(\mathbf J^{\pi_{\theta_k}})$ is the NPG direction.
Here $F(\theta_k)$ is the Fisher Information Matrix
\begin{align}
F(\theta_k)
=
\mathbb E_{(s,a)\sim\nu_\rho^{\pi_{\theta_k}}}
\left[
\nabla_\theta \log \pi_{\theta_k}(a|s)
\nabla_\theta \log \pi_{\theta_k}(a|s)^\top
\right].
\end{align}

Directly computing the pseudoinverse $F(\theta_k)^\dagger$
can be expensive when the parameter dimension $d$ is large.
However, for fixed $\theta_k$, the direction $\omega_k^*$
is the solution of
\begin{align}
\omega_k^*
=
\arg\min_{\omega\in\mathbb R^d}
\mathcal L_{\nu_\rho^{\pi_{\theta_k}}}(\omega,\theta_k),
\end{align}
where
\begin{align}
\label{eq_def_L_nu}
\mathcal L_{\nu_\rho^{\pi_{\theta_k}}}(\omega,\theta_k)
=
\frac12
\mathbb E_{(s,a)\sim\nu_\rho^{\pi_{\theta_k}}}
\left[
\left(
\sum_{m=1}^M
\partial_m f(\mathbf J^{\pi_{\theta_k}})
A_m^{\pi_{\theta_k}}(s,a)
-
(1-\gamma)\omega^\top
\nabla_\theta\log\pi_{\theta_k}(a|s)
\right)^2
\right].
\end{align}

The function $\mathcal L_{\nu_\rho^{\pi_{\theta_k}}}$ is quadratic in
$\omega$ with gradient
$F(\theta_k)\omega-\nabla_\theta f(\mathbf J^{\pi_{\theta_k}})$.
Since both the Fisher matrix and the exact gradient are unavailable in
closed form, we estimate them from sampled trajectories and apply
stochastic gradient descent on $\mathcal L_{\nu_\rho^{\pi_{\theta_k}}}$
to obtain an approximate NPG direction $\omega_k$.
\section{Algorithm}

We develop policy-gradient algorithms for solving the optimization problem
$\max_{\theta \in \Theta} f(\mathbf J^{\pi_\theta})$. Our approach is based on Natural Policy Gradient (NPG) updates combined with estimators of the policy gradient. The key challenge lies in estimating the scalarization gradient $\partial_m f(\mathbf J^{\pi_\theta})$, since the return vector $\mathbf J^{\pi_\theta}$ must itself be estimated from sampled trajectories.

We consider two algorithmic variants depending on the regularity of the scalarization function $f$.  
When only Lipschitz continuity of $\partial_m f$ is assumed, the plug-in gradient estimator exhibits a persistent bias that prevents optimal sample complexity. To address this issue, we introduce a multi-level Monte Carlo (MLMC) estimator that controls this bias efficiently.   When $f$ satisfies an additional second-order smoothness condition, the leading-order bias cancels automatically, allowing the use of a simple empirical estimator. In this regime, vanilla Natural Policy Gradient achieves the optimal sample complexity without requiring MLMC. We first describe the general gradient estimator framework and then present the two estimators used in this work.

\subsection{Estimating \texorpdfstring{$\nabla_\theta f(\mathbf{J}^{\pi_{\theta_k}})$}{the Policy Gradient}}

Using the policy gradient theorem, the gradient of the scalarized objective can be written as
\begin{align}
\nabla_\theta f(\mathbf{J}^{\pi_{\theta_k}})
=
\mathbb{E}_{\tau \sim \pi_{\theta_k}}
\left[
\sum_{t=0}^{\infty}
\nabla_\theta \log \pi_{\theta_k}(a_t | s_t)
\left(
\sum_{m=1}^{M} \pd[m] f(\mathbf{J}^{\pi_{\theta_k}})
\sum_{h=t}^{\infty} \gamma^h r_m(s_h,a_h)
\right)
\right].
\end{align}

In practice, trajectories cannot be sampled for infinite horizons. We therefore truncate the horizon at $H$ and define the finite-horizon return vector
\begin{align}
\mathbf J_H^{\pi_\theta}
=
\left(
\mathbb{E}\!\left[\sum_{t=0}^{H-1}\gamma^t r_1(s_t,a_t)\right],
\dots,
\mathbb{E}\!\left[\sum_{t=0}^{H-1}\gamma^t r_M(s_t,a_t)\right]
\right)^\top .
\end{align}

Based on this truncation, a REINFORCE-style estimator for the gradient is given by
\begin{align}
g(\tau_i^H,\widehat{\mathbf J}_H^{\pi_\theta}|\theta)
:=
\sum_{t=0}^{H-1}
\nabla_\theta \log \pi_\theta(a_t^i|s_t^i)
\left(
\sum_{m=1}^{M}
\pd[m] f(\widehat{\mathbf J}_H^{\pi_\theta})
\sum_{h=t}^{H-1}
\gamma^h r_m(s_h^i,a_h^i)
\right),
\end{align}
where $\tau_i^H=(s_0^i,a_0^i,\dots,s_{H-1}^i,a_{H-1}^i)$ denotes a sampled trajectory and $\widehat{\mathbf J}_H^{\pi_\theta}$ is an estimator of the truncated return vector.

The choice of estimator for $\widehat{\mathbf J}_H^{\pi_\theta}$ determines the bias properties of the resulting gradient estimate. We next describe two estimators used in this work.

\subsection{Empirical Return Estimator (Vanilla NPG)}

A natural estimator for $\mathbf J_H^{\pi_\theta}$ is the empirical average of discounted returns computed from a batch of trajectories. Let $\{\tau_j^H\}_{j=1}^{B}$ denote $B$ trajectories of length $H$. The empirical return estimator is defined coordinate-wise as
\begin{align}
\label{eq:batch_J_def}
\left[\widehat{\mathbf J}_{H,B}^{\pi_\theta}\right]_m
=
\frac{1}{B}
\sum_{j=1}^{B}
\sum_{t=0}^{H-1}
\gamma^t r_m(s_t^j,a_t^j),
\quad m\in[M].
\end{align}

This estimator leads to the algorithm described in Algorithm~\ref{alg:NPG}. As we show in the analysis, when the scalarization function satisfies the second-order smoothness condition introduced in Assumption~\ref{assump:second_smooth_f}, the resulting gradient bias decays at rate $O(B^{-1})$. This rate is sufficient for vanilla Natural Policy Gradient to achieve the optimal sample complexity.

\begin{algorithm}[tbp]
\caption{Natural Policy Gradient for Joint Multi-Objective Optimization}
\label{alg:NPG}
\begin{algorithmic}[1]
\State \textbf{Initialize:} Policy parameter $\theta_0$, iteration lengths $K, N$, trajectory length $H$, batch sizes $B_1,B_2$, and step-sizes $\alpha, \beta$.
\For{$k=0,\ldots,K-1$}
\State Sample $B_1$ trajectories of length $H$ $\{\tau_i^H\}_{i=1}^{B_1} \sim \mathbb{P}_{\theta_k}^H$
\State Compute $\widehat{\mathbf J}_{H,B_1}^{\pi_{\theta_k}}$ using \eqref{eq:batch_J_def}
\State Initialize $\omega_0^k$
\For{$n=0,\ldots,N-1$}
\State Sample $B_2$ trajectories $\{\tau_i^H\}_{i=1}^{B_2}$
\State $\hat g_n^k \gets \frac{1}{B_2}\sum_{i=1}^{B_2} g(\tau_i^H,\widehat{\mathbf J}_{H,B_1}^{\pi_{\theta_k}}|\theta_k)$
\State $\widehat F_n^k \gets \frac{1}{B_2}\sum_{i=1}^{B_2}\sum_{t=0}^{H-1}\gamma^t\psi_t^{(i)}(\psi_t^{(i)})^\top$
\State $\omega_{n+1}^k \gets \omega_n^k-\beta(\widehat F_n^k\omega_n^k-\hat g_n^k)$
\EndFor
\State $\omega_k \gets \omega_N^k$
\State $\theta_{k+1}\gets\theta_k+\alpha\omega_k$
\EndFor
\end{algorithmic}
\end{algorithm}

\subsection{MLMC Return Estimator}

When only Lipschitz continuity of $\partial_m f$ is assumed, the empirical estimator leads to a gradient bias of order $O(B^{-1/2})$. Achieving sufficiently small bias would therefore require large batch sizes at every iteration, leading to suboptimal sample complexity. To overcome this limitation, we introduce a truncated multi-level Monte Carlo (MLMC) estimator that simulates large-batch gradients while requiring only logarithmic expected sampling cost.

Draw a random level $Q\sim\mathrm{Geom}(1/2)$ with $\Pr(Q=q)=2^{-q}$ for $q=1,2,\dots$. Let $\bar Q=\mathbf 1_{\{2^Q\le B_{\max}\}}Q$ denote the truncated level and sample $2^{\bar Q}$ trajectories. For each coordinate $m\in[M]$ we define
\begin{align}
\label{eq:pdv_f_MLMC}
\pd[m]f(\widehat{\mathbf J}_{H,\mathrm{MLMC}}^{\pi_\theta})
=
\pd[m]f(\widehat{\mathbf J}_{H,1}^{\pi_\theta})
+
\mathbf 1_{\{2^Q\le B_{\max}\}}2^Q
\left(
\pd[m]f(\widehat{\mathbf J}_{H,2^Q}^{\pi_\theta})
-
\pd[m]f(\widehat{\mathbf J}_{H,2^{Q-1}}^{\pi_\theta})
\right).
\end{align}

Here $\widehat{\mathbf J}_{H,2^{Q-1}}^{\pi_\theta}$ is computed using the first $2^{Q-1}$ trajectories used to construct $\widehat{\mathbf J}_{H,2^Q}^{\pi_\theta}$, ensuring nested coupling. The resulting MLMC-NPG algorithm is shown in Algorithm~\ref{alg:MLMC-NPG}.

\begin{algorithm}[tbp]
\caption{MLMC Natural Policy Gradient}
\label{alg:MLMC-NPG}
\begin{algorithmic}[1]
\State \textbf{Initialize:} Policy parameter $\theta_0$, iteration lengths $K, N$, trajectory length $H$, batch threshold $B_{\max}$, batch size $B$, and step-sizes $\alpha, \beta$.
\For{$k=0,\ldots,K-1$}
\State Draw $Q_k\sim\mathrm{Geom}(1/2)$
\State $\bar Q_k\gets Q_k\mathbf 1_{\{2^{Q_k}\le B_{\max}\}}$
\State Sample $B_k=2^{\bar Q_k}$ trajectories of length $H$ $\{\tau^H_i\}_{i=1}^{B_k} \sim \mathbb{P}_{\theta_k}^H$
\State Compute $\pd[m]f(\widehat{\mathbf J}^{\pi_{\theta_k}}_{H,\mathrm{MLMC}})$ using \eqref{eq:pdv_f_MLMC}
\State Initialize $\omega_0^k$
\For{$n=0,\ldots,N-1$}
\State Sample $B$ trajectories
\State $\hat g_n^k \gets \frac{1}{B}\sum_{i=1}^{B} g(\tau_i^H,\widehat{\mathbf J}_{H,\mathrm{MLMC}}^{\pi_{\theta_k}}|\theta_k)$
\State $\widehat F_n^k \gets \frac{1}{B}\sum_{i=1}^{B}\sum_{t=0}^{H-1}\gamma^t\psi_t^{(i)}(\psi_t^{(i)})^\top$
\State $\omega_{n+1}^k \gets \omega_n^k-\beta(\widehat F_n^k\omega_n^k-\hat g_n^k)$
\EndFor
\State $\omega_k\gets\omega_N^k$
\State $\theta_{k+1}\gets\theta_k+\alpha\omega_k$
\EndFor
\end{algorithmic}
\end{algorithm}

\subsection{Natural Policy Gradient Estimation and Policy Update}

Both algorithms compute the natural policy gradient direction by approximately solving the quadratic objective introduced in \eqref{eq_def_L_nu}. The Fisher information matrix can be estimated from a trajectory of length $H$ as
\begin{align}
\widehat F(\theta_k)
=
\sum_{t=0}^{H-1}
\gamma^t
\nabla_\theta\log\pi_{\theta_k}(a_t|s_t)
\nabla_\theta\log\pi_{\theta_k}(a_t|s_t)^\top .
\end{align}

Given a gradient estimate $\widehat g_k\approx\nabla_\theta f(\mathbf J^{\pi_{\theta_k}})$, we perform stochastic gradient descent on the quadratic objective:
\begin{align}
\omega_{n+1}^k
=
\omega_n^k
-
\beta\left(\widehat F(\theta_k)\omega_n^k-\widehat g_k\right).
\end{align}

After $N$ iterations we set $\omega_k=\omega_N^k$ and update the policy parameters according to
\begin{align}
\theta_{k+1}
=
\theta_k+\alpha\,\omega_k.
\end{align}

Algorithm~\ref{alg:NPG} and Algorithm~\ref{alg:MLMC-NPG} correspond to the vanilla and MLMC gradient estimators, respectively.
\section{Main Results}

We now present the convergence guarantees for the proposed algorithms. We begin by stating the assumptions required for the analysis.

\subsection{Assumptions}

\begin{assumption}
\label{assump:score}
The score function, $\nabla_\theta \log \pi_\theta(a|s)$, is $G_1$-Lipschitz and $G_2$-smooth. Specifically, for all $\theta,\theta_1,\theta_2\in\Theta$ and all $(s,a)\in\mathcal S\times\mathcal A$,
\begin{align*}
\|\nabla_\theta \log \pi_\theta(a|s)\| &\le G_1,\\
\|\nabla_\theta \log \pi_{\theta_1}(a|s)-\nabla_\theta \log \pi_{\theta_2}(a|s)\|
&\le G_2\|\theta_1-\theta_2\|.
\end{align*}
\end{assumption}

\begin{assumption}
\label{assump:fnd}
For all $\theta\in\mathbb R^d$, the Fisher information matrix induced by policy $\pi_\theta$ and initial distribution $\rho$ satisfies
\begin{align*}
F_\rho(\theta)
:=
\mathbb E_{s\sim d_\rho^{\pi_\theta}}
\mathbb E_{a\sim\pi_\theta(\cdot|s)}
\left[
\nabla_\theta\log\pi_\theta(a|s)
\nabla_\theta\log\pi_\theta(a|s)^\top
\right]
\succeq \mu I
\end{align*}
for some constant $\mu>0$.
\end{assumption}

\begin{assumption}
\label{assump:trans-comp-error}
Let
\begin{align*}
\mathcal L_{d_\rho^{\pi^\ast},\pi^\ast}(\omega_\theta^\ast,\theta)
\le \varepsilon_{\mathrm{bias}},
\end{align*}
for all $\theta\in\Theta$, where $\pi^\ast$ denotes the optimal policy and $\omega^*_\theta$ denotes the NPG at policy $\pi_\theta$.
\end{assumption}

\begin{assumption}[Second-Order Smoothness]
\label{assump:second_smooth_f}
For each $m\in[M]$, the partial derivative $\partial_m f$ is locally $L_{2,f}$-smooth. That is, for all $x,y\in\left[0,\frac{1}{1-\gamma}\right]^M$,
\begin{align*}
\|\nabla \partial_m f(x)-\nabla \partial_m f(y)\|
\le L_{2,f}\|x-y\|.
\end{align*}
\end{assumption}

\paragraph{Discussion of assumptions.}
Assumptions~\ref{assump:score}--\ref{assump:trans-comp-error} are extremely common in the analysis of policy-gradient methods \citep{liu2020improved,fatkhullin2023stochastic,mondal2024improved,Masiha_SCRN_KL,suttle2023beyond,ganesh2024order,ganesh2024variance}. Assumption~\ref{assump:score} ensures that the score function is bounded and Lipschitz, which is commonly used to control gradient estimation errors. Assumption~\ref{assump:fnd} requires the Fisher information matrix to be uniformly non-degenerate, a standard condition for establishing global convergence guarantees of  policy gradient methods. These conditions have  been verified for several commonly used policy classes, including Gaussian and Cauchy policies with clipped parameterized means \citep{liu2020improved,fatkhullin2023stochastic}.

Assumption~\ref{assump:trans-comp-error} captures the expressivity of the policy class and is widely used in analyses of parameterized policy-gradient methods \citep{agarwal2020optimality,fatkhullin2023stochastic,mondal2024improved,Masiha_SCRN_KL,suttle2023beyond,ganesh2024order,ganesh2024variance}. When the policy class is sufficiently expressive to represent any stochastic policy (e.g., softmax parameterization), we have $\varepsilon_{\mathrm{bias}}=0$ \citep{agarwal2021theory}. Otherwise $\varepsilon_{\mathrm{bias}}$ quantifies the approximation error induced by the restricted policy class.

Assumption~\ref{assump:second_smooth_f} is required only for our second result and enables a sharper control of the gradient bias when using the empirical return estimator.

\subsection{Main Theorems}

We now present the convergence guarantees for the two algorithmic variants described in Section~3.

\begin{theorem}[MLMC-NPG]
\label{thm:mlmc}
Let Assumptions~\ref{assump:concave}--\ref{assump:trans-comp-error} hold. Consider Algorithm~\ref{alg:MLMC-NPG} with
\[
\alpha=\frac{\mu \epsilon\log(1/\epsilon)}{4L_JG_1^2},\qquad
B_{\max}=1/\epsilon^2,\qquad
K=\Theta\!\left(\frac{1}{\alpha\epsilon}\right),\quad N =
\frac{4CMG_1}{\mu^2(1-\gamma)^2}\,
\log\!\left(\frac{R_0^2}{\epsilon^2}\right),\quad H = 2\frac{\log(1/\epsilon)}{\log(1/\gamma)}
\]
and $B=1$. Then
\begin{align}
f(\J^{\pi^\ast})
-
\frac{1}{K}\sum_{k=0}^{K-1}
\E\!\left[f(\J^{\pi_{\theta_k}})\right]
\le
\frac{\sqrt{\varepsilon_{\mathrm{bias}}}}{1-\gamma}
+
\widetilde{\mathcal O}(\epsilon).
\end{align}
\end{theorem}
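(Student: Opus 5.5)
We follow the standard global-convergence template for NPG under compatible function approximation (as in the works cited after Assumption~\ref{assump:trans-comp-error}). The argument has three layers: a one-step ascent inequality for $f(\J^{\pi_\theta})$, a decomposition of the direction error $\omega_k-\omega_k^\ast$ into bias and variance, and an MLMC bias/cost computation.

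\textbf{Step 1: ascent lemma.} Under Assumption~\ref{assump:score}, each $J_m^{\pi_\theta}$ is $L_J$-smooth in $\theta$. Combined with Assumption~\ref{assump:smooth_f} and the bound $|\partial_m f|\le C$, this makes $\theta\mapsto f(\J^{\pi_\theta})$ smooth. We will not use smoothness of $f$ directly, though. Instead we use a KL-potential argument:
\begin{itemize}
\item Apply the performance difference lemma to each objective $m$.
\item Use concavity (Assumption~\ref{assump:concave}) in the form $f(\J^{\pi^\ast})-f(\J^{\pi_\theta})\le \sum_m \partial_m f(\J^{\pi_\theta})(J_m^{\pi^\ast}-J_m^{\pi_\theta})$. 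This linearizes the gap into $\frac{1}{1-\gamma}\E_{\nu^{\pi^\ast}}[\sum_m\partial_m f(\J^{\pi_\theta})A_m^{\pi_\theta}(s,a)]$.
\item Expand $\E_{s\sim d^{\pi^\ast}_\rho}[\KL(\pi^\ast(\cdot|s)\|\pi_{\theta_k}(\cdot|s))-\KL(\pi^\ast(\cdot|s)\|\pi_{\theta_{k+1}}(\cdot|s))]$ using $G_2$-smoothness of the log-policy.
\end{itemize}
This gives
\[
f(\J^{\pi^\ast})-f(\J^{\pi_{\theta_k}})\le \frac{\sqrt{\varepsilon_{\mathrm{bias}}}}{1-\gamma}+G_1\E\|\omega_k-\omega_k^\ast\|+\frac{G_2\alpha}{2}\|\omega_k\|^2+\frac{1}{\alpha}\bigl(\KL_k-\KL_{k+1}\bigr).
\]
The $\varepsilon_{\mathrm{bias}}$ term arises via Cauchy–Schwarz from Assumption~\ref{assump:trans-comp-error}.

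The inequality must be kept in expectation, with the first-order error term split as
\[
\E\|\omega_k-\omega_k^\ast\|\le\|\E[\omega_k|\theta_k]-\omega_k^\ast\|+\sqrt{\E\|\omega_k-\E[\omega_k|\theta_k]\|^2}.
\]
Averaging over $k$ telescopes the KL terms. With $K=\Theta(1/(\alpha\epsilon))$, the KL and $\alpha\|\omega_k\|^2$ contributions are $\widetilde{\mathcal O}(\epsilon)$. Here $\|\omega_k\|$ is bounded by $\mathcal O(CMG_1/(\mu(1-\gamma)^2))$ plus the MLMC variance term.

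\textbf{Step 2: inner-loop SGD error.} For fixed $\theta_k$, the inner loop is SGD on a $\mu$-strongly convex quadratic (Assumption~\ref{assump:fnd}) with unbiased-Fisher estimates. Let $\bar g_k$ denote the conditional mean of the gradient estimate. After $N$ steps, $\E[\omega_N^k]$ converges linearly to $F^{-1}\bar g_k$, up to the truncation error $\gamma^H=\epsilon^2$. The chosen $N$ makes the optimization error $\mathcal O(\epsilon)$. Hence the bias of $\omega_k$ is at most $\mu^{-1}\|\bar g_k-\nabla_\theta f(\J^{\pi_{\theta_k}})\|+\mathcal O(\epsilon)$. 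The gradient bias is in turn at most $\frac{G_1}{(1-\gamma)^2}\sum_m|\E[\partial_m f(\widehat\J_{\mathrm{MLMC}})]-\partial_m f(\J_H)|$, plus the truncation error.

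\textbf{Step 3: MLMC bias, variance, and cost (main obstacle).}
\begin{itemize}
\item \emph{Bias.} By telescoping the nested coupling, $\E[\partial_m f(\widehat\J_{\mathrm{MLMC}})]=\E[\partial_m f(\widehat\J_{H,2^{\lfloor\log_2 B_{\max}\rfloor}})]$. The Lipschitz property of Assumption~\ref{assump:smooth_f} and the i.i.d.\ averaging give bias $\le L_f\sqrt{M}/((1-\gamma)\sqrt{B_{\max}})=\mathcal O(\epsilon)$.
\item \emph{Second moment.} The level differences satisfy $\E|\partial_m f(\widehat\J_{2^q})-\partial_m f(\widehat\J_{2^{q-1}})|^2\le L_f^2 M/((1-\gamma)^2 2^{q-1})$. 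Therefore $\E[2^{2Q}(\cdot)^2]=\sum_q 2^{-q}2^{2q}\cdot\mathcal O(2^{-q})=\mathcal O(\log B_{\max})=\mathcal O(\log(1/\epsilon))$.
\item \emph{Cost.} The expected number of trajectories is $\sum_q 2^{-q}2^q\le\log_2 B_{\max}$.
\end{itemize}

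The delicate part is that this variance is only logarithmic, not small. It therefore enters Step~1 through $\alpha\E\|\omega_k\|^2$ (harmless, of order $\alpha\log(1/\epsilon)$) and through the variance part of $\E\|\omega_k-\omega_k^\ast\|$, which is $\mathcal O(\sqrt{\log(1/\epsilon)})$ per step and not $\mathcal O(\epsilon)$.

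To handle this, we do not take norms prematurely. In the linear term of the KL expansion, $\E[\langle\nabla\log\pi,\omega_k-\omega_k^\ast\rangle]$ only sees the conditional mean of $\omega_k$, because the MLMC level $Q_k$ is drawn independently of $\theta_k$. Only the quadratic term $\frac{G_2\alpha}{2}\E\|\omega_k\|^2$ sees the variance. With this ordering, the per-iteration error is
\[
\frac{\sqrt{\varepsilon_{\mathrm{bias}}}}{1-\gamma}+\mathcal O(\epsilon)+\mathcal O(\alpha\log(1/\epsilon))+\mathcal O\!\left(\frac{1}{\alpha K}\right)=\frac{\sqrt{\varepsilon_{\mathrm{bias}}}}{1-\gamma}+\widetilde{\mathcal O}(\epsilon),
\]
using $\alpha=\widetilde\Theta(\epsilon)$.

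\textbf{Step 4: sample complexity.} The total sample count is
\[
K\bigl(\log B_{\max}+NB\bigr)H=\widetilde{\mathcal O}(\epsilon^{-2}).
\]

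I expect the main difficulty to be making Step~3 rigorous: the MLMC variance must only appear at order $\alpha$, while the inner SGD must be shown to preserve unbiasedness with respect to the MLMC randomness.
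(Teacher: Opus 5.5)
Your proposal is correct and follows essentially the paper's route. It uses the same three ingredients: the KL-potential inequality in which the first-order error enters only through the conditional mean of $\omega_k$ (Lemma~\ref{lem:general-framework}); the inner-loop bias/variance bounds (Lemma~\ref{lem:npg-error-bounds}); and the MLMC telescoping that gives bias $\mathcal O(B_{\max}^{-1/2})$ from Lipschitzness of $\partial_m f$ alone and second moment $\mathcal O(\log B_{\max})$ (Lemma~\ref{lem:mlmc}), with the MLMC variance confined to the $\alpha\,\E\|\omega_k\|^2$ term.

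The one deviation is that you bound $\E\|\omega_k\|^2\le 2\|\omega_k^\ast\|^2+2\E\|\omega_k-\omega_k^\ast\|^2$ using the uniform bound $\|\omega_k^\ast\|\le \Lambda_g/\mu$. This suffices because $\alpha=\widetilde\Theta(\epsilon)$. The paper instead routes $\|\omega_k^\ast\|^2$ through the stationary-convergence Lemma~\ref{lem:stationary}, a step that is genuinely needed only for the constant-step-size Theorem~\ref{thm:npg}.
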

As a consequence of the above theorem, the expected sample complexity required to achieve an $\epsilon$-optimal policy is $HK(\mathbb{E}[B_k] + NB) = \widetilde{\mathcal{O}}(\epsilon^{-2})$, where we have used the fact that $\mathbb{E}[B_k] = \mathcal{O}(\log B_{\max})$ (see Lemma~\ref{lem:mlmc}~$(i)$).
\begin{theorem}[Vanilla NPG under Second-Order Smoothness]
\label{thm:npg}
Let Assumptions~\ref{assump:concave}--\ref{assump:second_smooth_f} hold. Consider Algorithm~\ref{alg:NPG} with
\[
\alpha=\frac{\mu}{4L_JG_1^2},\qquad
B_1=B_2=\frac{1}{(1-\gamma)^2\epsilon},\qquad
K=\Theta\!\left(\frac{1}{\alpha\epsilon}\right),\quad N =
\frac{4CMG_1}{\mu^2(1-\gamma)^2}\,
\log\!\left(\frac{R_0^2}{\epsilon^2}\right),\quad H = 2\frac{\log(1/\epsilon)}{\log(1/\gamma)}.
\]
Then
\begin{align}
f(\J^{\pi^\ast})
-
\frac{1}{K}\sum_{k=0}^{K-1}
\E\!\left[f(\J^{\pi_{\theta_k}})\right]
\le
\frac{\sqrt{\varepsilon_{\mathrm{bias}}}}{1-\gamma}
+
{\mathcal O}(\epsilon),
\end{align}
and the resulting sample complexity is $HK(B_1+NB_2)=\widetilde{\mathcal O}(\epsilon^{-2})$.
\end{theorem}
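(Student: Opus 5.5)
The plan is to run the standard global-convergence argument for NPG under transferred compatible function approximation, and to bound separately the bias and the variance of the estimated NPG direction $\omega_k$. The only new ingredient is the bias coming from $\partial_m f(\widehat{\mathbf J}_{H,B_1})$, which Assumption~\ref{assump:second_smooth_f} controls.

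\textbf{Step 1 (generic NPG lemma).} Since $\mathbf J^{\pi_\theta}$ is $L_J$-smooth in $\theta$, the composite objective $f(\mathbf J^{\pi_\theta})$ is smooth, with a constant depending on $C, L_f, M$; this is absorbed into $L_J$. I would then use a performance-difference argument on the linearization. By concavity of $f$,
\[
f(\mathbf J^{\pi^*})-f(\mathbf J^{\pi_{\theta}})\le \sum_m \partial_m f(\mathbf J^{\pi_\theta})\,(J_m^{\pi^*}-J_m^{\pi_\theta}).
\]
The right-hand side is the suboptimality of $\pi_\theta$ for the single scalar reward $\sum_m \partial_m f(\mathbf J^{\pi_\theta}) r_m$. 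For that reward the usual KL-potential analysis applies: track $\mathbb E_{s\sim d_\rho^{\pi^*}}\mathrm{KL}(\pi^*\|\pi_{\theta_k})$ and use the $G_2$-smoothness of the log-policy. This gives
\[
\frac1K\sum_k\big(f^*-\mathbb E f(\mathbf J^{\pi_{\theta_k}})\big)\le \frac{\sqrt{\varepsilon_{\rm bias}}}{1-\gamma}+\frac{G_1}{K}\sum_k\mathbb E\|\mathbb E_k[\omega_k]-\omega_k^*\|+\frac{\alpha G_2}{2K}\sum_k\mathbb E\|\omega_k\|^2+\frac{\mathrm{KL}_0}{\alpha K}.
\]
With $\alpha$ constant and $K=\Theta(1/(\alpha\epsilon))$, the last term is $O(\epsilon)$. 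The $\alpha$-weighted second-moment term is made $O(\epsilon)$ by combining ascent-lemma smoothness with Assumption~\ref{assump:fnd}: this turns $\sum\|\omega_k\|^2$ into a telescoping sum of $f$ plus the error terms. So it remains to make the bias term $\|\mathbb E_k[\omega_k]-\omega_k^*\|$ equal to $O(\epsilon)$, with variance $O(\epsilon)$ in the sense appropriate for the squared term.

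\textbf{Step 2 (inner SGD loop).} Conditional on $\widehat{\mathbf J}:=\widehat{\mathbf J}_{H,B_1}$, the inner loop is linear SGD on a $\mu$-strongly convex quadratic, since $\mathbb E\widehat F\succeq(1-\gamma^H)F\succeq \mu I/2$. With $N$ as chosen, the optimization error contracts to $\epsilon^2 R_0^2/R_0^2$, and the noise contributes variance $O(1/B_2)=O((1-\gamma)^2\epsilon)$. Truncation at $H$ costs $O(\gamma^H/(1-\gamma))=O(\epsilon^2)$. Because the update is linear, $\mathbb E[\omega_N\mid\widehat{\mathbf J}]$ is close to $\tilde\omega(\widehat{\mathbf J}):=F^{-1}\sum_m\partial_m f(\widehat{\mathbf J})\nabla J_m$, and $\tilde\omega$ is a \emph{linear} function of the vector $(\partial_m f(\widehat{\mathbf J}))_m$.

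\textbf{Step 3 (main obstacle: scalarization bias).} I would Taylor expand $\partial_m f$ around $\mathbf J_H$ using Assumption~\ref{assump:second_smooth_f}:
\[
\partial_m f(\widehat{\mathbf J})=\partial_m f(\mathbf J_H)+\nabla\partial_m f(\mathbf J_H)^\top(\widehat{\mathbf J}-\mathbf J_H)+R_m,\qquad |R_m|\le \tfrac{L_{2,f}}2\|\widehat{\mathbf J}-\mathbf J_H\|^2.
\]
The linear term has zero mean because $\widehat{\mathbf J}$ is an unbiased average, and it is independent of the inner-loop trajectories. The remainder satisfies $\mathbb E|R_m|\le L_{2,f}M/(2(1-\gamma)^2B_1)=O(\epsilon)$, using i.i.d.\ trajectories with returns in $[0,1/(1-\gamma)]$. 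Linearity of $\tilde\omega$ then gives
\[
\|\mathbb E\tilde\omega(\widehat{\mathbf J})-\omega^*\|\le \|F^{-1}\|\,G_1/(1-\gamma)^2\cdot O(\epsilon).
\]
This is exactly the first-order cancellation. Without Assumption~\ref{assump:second_smooth_f}, the Lipschitz bound would only give $O(B_1^{-1/2})$. The remaining delicate point is that the squared term needs $\mathbb E\|\omega_k-\omega_k^*\|^2$, which only picks up the variance $O(1/B_1)=O(\epsilon)$. Multiplied by $\alpha$, this stays $O(\epsilon)$, so it suffices, and it is the reason the bias, rather than the variance, is the critical quantity.

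\textbf{Step 4 (assembly).} Plugging the bounds into Step 1 gives $\sqrt{\varepsilon_{\rm bias}}/(1-\gamma)+O(\epsilon)$. The sample count is $HK(B_1+NB_2)=\widetilde O(1)\cdot O(\epsilon^{-1})\cdot O(\epsilon^{-1})=\widetilde O(\epsilon^{-2})$.
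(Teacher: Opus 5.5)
Your proposal is correct and follows essentially the same route as the paper. It uses the KL-potential framework with the concave performance-difference bound (Lemma~\ref{lem:general-framework}), the ascent-lemma control of $\sum_k\mathbb E\|\omega_k\|^2$ (Lemma~\ref{lem:stationary}), the inner-loop bias/variance bounds (Lemma~\ref{lem:npg-error-bounds}), and the second-order Taylor cancellation giving an $O(1/B_1)$ scalarization bias (Lemma~\ref{lem:vanilla}); your use of the linearity of the mean inner recursion in $(\partial_m f(\widehat{\mathbf J}))_m$ plays the role of the paper's $\bar\delta_g$ term. One small slip: $\mathbb E\widehat F\succeq(1-\gamma^H)F$ need not hold in general, but what you actually need, $\mathbb E\widehat F\succeq \tfrac{\mu}{2}I$, follows from $\|\mathbb E\widehat F-F\|\le G_1^2\gamma^H$ once $\gamma^H\le \mu/(2G_1^2)$.
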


\section{Convergence Analysis of Bias-Controlled Policy Gradient}
\subsection{Proof Overview}

Our analysis separates the optimization error of the natural policy gradient
updates from the statistical error introduced by gradient estimation.

We first derive a general convergence framework for natural policy gradient
updates that expresses the optimality gap in terms of the bias and variance
of the gradient estimator. This framework is independent of the particular
estimator used. We then analyze the bias and variance properties of the gradient estimators
considered in this work. We show that the empirical estimator suffers from a
bias of order $\mathcal{O}(B^{-1/2})$ when only Lipschitz continuity of
$\partial_m f$ is assumed, which leads to suboptimal sample complexity.

We show that this barrier can be overcome either by using a multi-level
Monte Carlo (MLMC) estimator or by exploiting second-order smoothness of the
scalarization function. Combining these bounds with the convergence framework
yields the sample complexity guarantees stated in
Theorems~\ref{thm:mlmc} and~\ref{thm:npg}.

\subsection{Convergence Framework for Natural Policy Gradient}

We begin by establishing a general framework for analyzing the convergence of
natural policy gradient updates. This result decomposes the optimality gap
into optimization and estimation components.

\begin{lemma}[General Framework]
\label{lem:general-framework}
Suppose the parameter update satisfies $\theta_{k+1} = \theta_k + \alpha \omega_k$.
Under Assumptions \ref{assump:score} and \ref{assump:trans-comp-error}, we have
\begin{align}
f(\J^{\pi^\ast}) - \frac{1}{K} \sum_{k=0}^{K-1} \mathbb{E}[f(\J^{\pi_{\theta_k}})]
\le& 
\frac{\sqrt{\varepsilon_{\mathrm{bias}}}}{1-\gamma}
+
\frac{G_1}{K} \sum_{k=0}^{K-1} \|\mathbb{E}[\omega_k] - \omega_k^\ast\|
+
\frac{G_2\alpha}{2K} \sum_{k=0}^{K-1} \mathbb{E}[\|\omega_k\|^2]
\nonumber \\
&+
\frac{1}{\alpha K}
\mathbb{E}_{s \sim d_\rho^{\pi^\ast}}
\left[
\mathrm{KL}(\pi^\ast(\cdot | s) \| \pi_{\theta_0}(\cdot | s))
\right],
\end{align}
where $\omega_k^\ast := \omega_{\theta_k}^\ast$.
\end{lemma}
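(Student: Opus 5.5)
The plan is the standard mirror-descent / KL-potential argument for NPG (as in Agarwal et al.\ and Liu et al.). First I linearize the concave objective. Second I compare one NPG step with the optimal policy via a KL potential. Third I telescope.

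\emph{Step 1 (concave performance difference).} By concavity of $f$ (Assumption~\ref{assump:concave}, in its first-order form on the domain $[0,\tfrac1{1-\gamma}]^M$),
\begin{align*}
f(\J^{\pi^\ast})-f(\J^{\pi_{\theta_k}})\le \sum_{m=1}^M \partial_m f(\J^{\pi_{\theta_k}})\bigl(J_m^{\pi^\ast}-J_m^{\pi_{\theta_k}}\bigr).
\end{align*}
The performance difference lemma applied to each reward $r_m$ gives $J_m^{\pi^\ast}-J_m^{\pi_{\theta_k}}=\frac{1}{1-\gamma}\E_{(s,a)\sim\nu_\rho^{\pi^\ast}}[A_m^{\pi_{\theta_k}}(s,a)]$. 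Define the scalarized advantage $A_f^k(s,a):=\sum_m\partial_m f(\J^{\pi_{\theta_k}})A_m^{\pi_{\theta_k}}(s,a)$. Then the gap at iteration $k$ is at most $\frac{1}{1-\gamma}\E_{\nu^\ast}[A_f^k]$, where $\nu^\ast:=\nu_\rho^{\pi^\ast}$.

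\emph{Step 2 (one-step KL descent).} Assumption~\ref{assump:score} says $\theta\mapsto\log\pi_\theta(a|s)$ has $G_2$-Lipschitz gradient. Hence
\begin{align*}
\log\pi_{\theta_{k+1}}(a|s)-\log\pi_{\theta_k}(a|s)\ge \alpha\nabla_\theta\log\pi_{\theta_k}(a|s)^\top\omega_k-\tfrac{G_2\alpha^2}{2}\|\omega_k\|^2 .
\end{align*}
Taking expectation over $(s,a)\sim\nu^\ast$ turns the left side into $\E_{s\sim d_\rho^{\pi^\ast}}[\mathrm{KL}_k-\mathrm{KL}_{k+1}]$, where $\mathrm{KL}_k:=\mathrm{KL}(\pi^\ast(\cdot|s)\|\pi_{\theta_k}(\cdot|s))$. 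I then split $\omega_k=\omega_k^\ast+(\omega_k-\omega_k^\ast)$.

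\emph{Step 3 (relate the $\omega_k^\ast$ part to the gap).} The compatible-approximation error under $\nu^\ast$ is controlled by Assumption~\ref{assump:trans-comp-error}. Jensen/Cauchy--Schwarz gives
\begin{align*}
\E_{\nu^\ast}\bigl[A_f^k-(1-\gamma)\nabla\log\pi_{\theta_k}^\top\omega_k^\ast\bigr]\le\sqrt{2\mathcal L_{d_\rho^{\pi^\ast},\pi^\ast}(\omega_k^\ast,\theta_k)}\le\sqrt{2\varepsilon_{\mathrm{bias}}}.
\end{align*}
The $\sqrt2$ can be absorbed into the convention for $\varepsilon_{\mathrm{bias}}$. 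Dividing by $1-\gamma$ yields
\begin{align*}
\frac{1}{1-\gamma}\E_{\nu^\ast}[A_f^k]\le \E_{\nu^\ast}[\nabla\log\pi_{\theta_k}^\top\omega_k^\ast]+\frac{\sqrt{\varepsilon_{\mathrm{bias}}}}{1-\gamma}.
\end{align*}

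\emph{Step 4 (the error direction).} For the error term I condition on $\theta_k$. Because $\nabla\log\pi_{\theta_k}$ is $\theta_k$-measurable, the tower property and $\|\nabla\log\pi\|\le G_1$ give
\begin{align*}
\E\bigl[\E_{\nu^\ast}[\nabla\log\pi_{\theta_k}^\top(\omega_k-\omega_k^\ast)]\bigr]\ge -G_1\|\E[\omega_k|\theta_k]-\omega_k^\ast\|.
\end{align*}
This conditional mean is what $\E[\omega_k]$ in the statement refers to; the main care needed is here, since $\omega_k$ and $\theta_k$ are both random.

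\emph{Step 5 (combine and telescope).} Combining Steps 1--4 and taking total expectations gives
\begin{align*}
f(\J^{\pi^\ast})-\E f(\J^{\pi_{\theta_k}})\le\frac{\sqrt{\varepsilon_{\mathrm{bias}}}}{1-\gamma}+G_1\|\E[\omega_k]-\omega_k^\ast\|+\frac{G_2\alpha}{2}\E\|\omega_k\|^2+\frac{1}{\alpha}\E\bigl[\E_{d_\rho^{\pi^\ast}}(\mathrm{KL}_k-\mathrm{KL}_{k+1})\bigr].
\end{align*}
Averaging over $k=0,\dots,K-1$ telescopes the KL terms. Dropping $-\mathrm{KL}_K\le0$ leaves $\frac{1}{\alpha K}\E_{s\sim d_\rho^{\pi^\ast}}[\mathrm{KL}(\pi^\ast(\cdot|s)\|\pi_{\theta_0}(\cdot|s))]$, which is the claimed bound.
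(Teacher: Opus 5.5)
Your proposal is correct and follows essentially the same route as the paper: the concave performance-difference bound (which you derive from first-order concavity plus the per-objective performance difference lemma, where the paper cites it), the $G_2$-smoothness lower bound on the KL decrement with the split $\omega_k=\omega_k^\ast+(\omega_k-\omega_k^\ast)$, a Jensen/Cauchy--Schwarz step against Assumption~\ref{assump:trans-comp-error}, and telescoping. Your explicit handling of the $\sqrt{2}$ from the $\tfrac12$ in $\mathcal L$ and of the conditional mean $\E[\omega_k\mid\theta_k]$ is more careful than the paper, which drops both; the only fix needed is to write the right-hand side of Step 4 as $-G_1\E\|\E[\omega_k\mid\theta_k]-\omega_k^\ast\|$ once the outer expectation is taken.
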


We remark that a related result appears in \cite{10.1613/jair.1.13981} (see Lemma 9 therein). However, their bound is stated in terms of $\mathbb{E}\|\omega_k-\omega_k^\ast\|$, whereas ours is expressed in terms of $\|\mathbb{E}[\omega_k]-\omega_k^\ast\|$. This distinction is important for obtaining a sharp analysis, since the former quantity can be substantially looser than the latter.

To apply Lemma~\ref{lem:general-framework}, we control the second moment of
the update direction.

\begin{align}
\mathbb{E}[\|\omega_k\|^2]
\le
2\mathbb{E}[\|\omega_k^\ast\|^2]
+
2\mathbb{E}[\|\omega_k - \omega_k^\ast\|^2]
\le
2\mu \mathbb{E}[\|\nabla_\theta f(\J^{\pi_{\theta_k}})\|^2]
+
2\mathbb{E}[\|\omega_k - \omega_k^\ast\|^2].
\end{align}

Next, we bound the average squared gradient norm. Here, we use that $f(\J^{\pi_{\theta}})$ is $L_J$-smooth in $\theta$ (see Lemma \ref{lem:J_prop}). 

\begin{lemma}[Stationary Convergence]\label{lem:stationary} Suppose the parameter update satisfies $\theta_{k+1} = \theta_k + \alpha \omega_k$.
Under Assumptions \ref{assump:score}-\ref{assump:trans-comp-error}, we have
\begin{align}
\left(\frac{\alpha \mu}{2} - \alpha^2 L_J G^2 \right)
\frac{1}{K}
\sum_{k=0}^{K-1}
\E\|\nabla_\theta f(\J^{\pi_{\theta_k}})\|^2
\leq
& \,\,\frac{\mathbb{E}[f(\J^{\pi_{\theta_K}}) - f(\J^{\pi_{\theta_0}})]}{K}+
\frac{\alpha \mu}{2K}
\sum_{k=0}^{K-1}
\|\mathbb{E}[\omega_k] - \omega_k^\ast\|^2
\nonumber \\
&+
\frac{L_J \alpha^2}{K}
\sum_{k=0}^{K-1}
\mathbb{E}[\|\omega_k - \omega_k^\ast\|^2],
\end{align}
where \begin{align}
L_J = \frac{MCG_2}{(1-\gamma)^2}.
\end{align}
\end{lemma}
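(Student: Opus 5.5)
We will prove Lemma~\ref{lem:stationary} by the standard ascent-lemma argument for $L_J$-smooth objectives, taking the $L_J$-smoothness of $\theta\mapsto f(\J^{\pi_\theta})$ from Lemma~\ref{lem:J_prop}. Write $\nabla_k:=\nabla_\theta f(\J^{\pi_{\theta_k}})$ and $F_k:=F_\rho(\theta_k)$. By smoothness,
\begin{align*}
f(\J^{\pi_{\theta_{k+1}}}) \ge f(\J^{\pi_{\theta_k}}) + \alpha\langle \nabla_k,\omega_k\rangle - \frac{L_J\alpha^2}{2}\|\omega_k\|^2 .
\end{align*}
We will decompose $\omega_k=\omega_k^\ast+(\omega_k-\omega_k^\ast)$, where $\omega_k^\ast=F_k^{-1}\nabla_k$ is well defined by Assumption~\ref{assump:fnd}.

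For the linear term, we use $\langle\nabla_k,\omega_k^\ast\rangle=\nabla_k^\top F_k^{-1}\nabla_k\ge \|\nabla_k\|^2/\lambda_{\max}(F_k)\ge \|\nabla_k\|^2/G_1^2$, because $F_k\preceq G_1^2 I$ by Assumption~\ref{assump:score}. We will then take expectation conditional on $\theta_k$. The key point is that only the conditional mean of $\omega_k$ enters the cross term, which gives $\langle\nabla_k,\mathbb E_k[\omega_k]-\omega_k^\ast\rangle$. We will bound this by Young's inequality:
\begin{align*}
|\langle\nabla_k,\mathbb E_k[\omega_k]-\omega_k^\ast\rangle|\le \frac{\mu}{4}\cdot\frac{\|\nabla_k\|^2}{\mu}\cdot c+\frac{\mu}{2}\|\mathbb E_k[\omega_k]-\omega_k^\ast\|^2 .
\end{align*}
The constants are tuned so that, after combining with the positive quadratic term, a coefficient $\alpha\mu/2$ survives on $\|\nabla_k\|^2$ and a coefficient $\alpha\mu/2$ sits on the bias term. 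This is the bookkeeping that produces the $\frac{\alpha\mu}{2}$ factors in the statement. I will treat $G$ as the constant normalizing the Fisher bounds, with $\mu\le G^2$, and choose the Young weights accordingly.

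For the quadratic term, we use $\|\omega_k\|^2\le 2\|\omega_k^\ast\|^2+2\|\omega_k-\omega_k^\ast\|^2$ and $\|\omega_k^\ast\|\le\|\nabla_k\|/\mu$. This produces $L_J\alpha^2$ multiplying $\mathbb E\|\omega_k-\omega_k^\ast\|^2$, together with a term of order $\alpha^2L_J$ times $\|\nabla_k\|^2$. That second term is absorbed into the $-\alpha^2L_JG^2$ part of the left-hand coefficient, with $G$ collecting the $1/\mu$, $G_1$ factors. Next we take total expectation, sum over $k=0,\dots,K-1$, telescope $\sum_k \mathbb E[f(\J^{\pi_{\theta_{k+1}}})-f(\J^{\pi_{\theta_k}})]$, divide by $K$, and rearrange to obtain the claim.

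The main obstacle is the cross term. We must pass to $\mathbb E_k[\omega_k]$ rather than $\mathbb E\|\omega_k-\omega_k^\ast\|$; this is valid because $\theta_k$ and $\nabla_k$ are measurable before the fresh samples at iteration $k$ are drawn. We must also balance the Young-inequality constants, relating $F_k^{-1}$ between $1/G_1^2$ and $1/\mu$, so that the stated coefficients $\frac{\alpha\mu}{2}-\alpha^2L_JG^2$ emerge exactly. I would first try weighting Young's inequality in the $F_k^{-1}$-norm, i.e.\ writing the cross term as $\langle F_k^{-1/2}\nabla_k, F_k^{1/2}(\mathbb E_k\omega_k-\omega_k^\ast)\rangle$. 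If the constants do not match, I would settle for a coefficient differing by absolute constants.
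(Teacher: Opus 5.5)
\textbf{Gap: the stated constants cannot be reached.} Your skeleton is the paper's: the $L_J$-smoothness ascent step, the split $\omega_k=\omega_k^\ast+(\omega_k-\omega_k^\ast)$ with $\|\omega_k\|^2\le 2\|\omega_k^\ast\|^2+2\|\omega_k-\omega_k^\ast\|^2$, conditioning on $\theta_k$ so that only $\mathbb{E}_k[\omega_k]$ meets $\nabla_k$, Young's inequality, and telescoping. The gap is the constant bookkeeping you postpone, which cannot be carried out as you claim.

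Your Fisher bounds $\langle\nabla_k,\omega_k^\ast\rangle\ge\|\nabla_k\|^2/G_1^2$ and $\|\omega_k^\ast\|\le\|\nabla_k\|/\mu$ are the right ones. But any split $\alpha|\langle a,b\rangle|\le P\|a\|^2+Q\|b\|^2$ requires $PQ\ge\alpha^2/4$. So placing $\frac{\alpha\mu}{2}$ on $\|\mathbb{E}_k[\omega_k]-\omega_k^\ast\|^2$ costs at least $\frac{\alpha}{2\mu}\|\nabla_k\|^2$; your displayed Young bound is valid only for $c\ge 2/\mu$. The linear term supplies only $\frac{\alpha}{G_1^2}\|\nabla_k\|^2$. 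Since $\mu\le G_1^2$, the surviving coefficient is at most $\frac{\alpha}{2G_1^2}$ and is negative once $\mu<G_1^2/2$. It differs from $\frac{\alpha\mu}{2}$ by factors of $\mu$ and $G_1$, not absolute constants, so your fallback does not help.

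The quadratic term likewise gives $\frac{L_J\alpha^2}{\mu^2}\|\nabla_k\|^2$, i.e.\ it forces $G=1/\mu$ rather than $G_1$. Weighting Young in the $F_k^{-1}$-norm yields the same constants.

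In fact no tuning can work. Take $\omega_k\equiv 0$, which the hypothesis permits, divide by $\alpha$, and let $\alpha\to 0$. The stated inequality becomes $\|\nabla_0\|^2\le\|F_0^{-1}\nabla_0\|^2\le\|\nabla_0\|^2/\mu^2$. This is false at any non-stationary $\theta_0$ when $\mu>1$, and rescaling $\theta$ always makes $\mu>1$.

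\textbf{How the paper gets its form, and what you should state instead.} The paper reaches its form only by asserting $\langle\nabla_k,\omega_k^\ast\rangle\ge\mu\|\nabla_k\|^2$ (citing Assumption~\ref{assump:fnd}) and $\|\omega_k^\ast\|^2\le G_1^2\|\nabla_k\|^2$ (citing Assumption~\ref{assump:score}), then applying Young with $c=\mu/2$. Neither bound follows from $\mu I\preceq F_k\preceq G_1^2 I$: $F_k\succeq\mu I$ only gives $\nabla_k^\top F_k^{-1}\nabla_k\le\|\nabla_k\|^2/\mu$, and $\|F_k^{-1}\nabla_k\|$ is controlled by $1/\mu$, not by $G_1$. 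Moreover, that Young step puts $\frac{2\alpha}{\mu}$, not $\frac{\alpha\mu}{2}$, on the bias term, and $\frac{2\alpha}{\mu}$ is what Appendix~\ref{app:final} actually uses. What your argument does establish, and what you should state, is
\begin{align*}
\Big(\frac{\alpha}{2G_1^2}-\frac{L_J\alpha^2}{\mu^2}\Big)\frac{1}{K}\sum_{k=0}^{K-1}\mathbb{E}\|\nabla_k\|^2
\le{}&\frac{\mathbb{E}[f(\J^{\pi_{\theta_K}})-f(\J^{\pi_{\theta_0}})]}{K}
+\frac{\alpha G_1^2}{2K}\sum_{k=0}^{K-1}\mathbb{E}\|\mathbb{E}_k[\omega_k]-\omega_k^\ast\|^2\\
&+\frac{L_J\alpha^2}{K}\sum_{k=0}^{K-1}\mathbb{E}\|\omega_k-\omega_k^\ast\|^2 .
\end{align*}
This version is invariant under rescaling $\theta$ and has the structure used downstream. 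The step-size condition becomes $\alpha\le\mu^2/(4L_JG_1^2)$.
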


Finally, we bound the bias and variance terms, $\|\mathbb{E}[\omega_k] - \omega_k^\ast\|$ and $\mathbb{E}[\|\omega_k - \omega_k^\ast\|^2]$. Note that $\omega_k$ is obtained via $N$ iterations of stochastic gradient descent on a quadratic objective. By analyzing the convergence of this inner optimization loop, we obtain the following result:
\begin{lemma}[NPG Estimation Errors]
\label{lem:npg-error-bounds}
Let Assumptions \ref{assump:score} and \ref{assump:fnd} hold. Then, the estimated update direction $\omega_k$ satisfies the following error bounds:
\begin{align*}
\mathbb{E}\!\left[\|\omega_k-\omega^\ast_k\|^2\right]
\le \mathcal{O}\Bigg(
&\exp\!\left(-\frac{N\mu^2(1-\gamma)^2}{4CMG_1}\right) R_0^2
+ \frac{C^2M^2G_1^4}{\mu^2(1-\gamma)^4}\left(\frac{1}{B}+\gamma^{2H}\right)
+ \frac{C^2M^2G_1^4}{\mu^4(1-\gamma)^4}\gamma^H \\
&\qquad\qquad
+ \left(\sigma_g^2\!\left(\frac{1}{G_1^2}+\frac{1}{\mu^2}\right)\right)
\Bigg).
\end{align*}
and
\begin{align*}
\|\mathbb{E}[\omega_k]-\omega^\ast_k\|^2
\le \mathcal{O}\Bigg(
&\exp\!\left(-\frac{N\mu^2(1-\gamma)^2}{4CMG_1}\right) R_0^2
+ \frac{G_1^2\gamma^H}{\mu^2}\,R_0^2
+ \frac{G_1^6}{\mu^4}\,\gamma^H +\frac{\bar{\delta}_g^2}{\mu^2}
\Bigg).
\end{align*}
    where $\bar{\delta}_g=\left\| \nabla_\theta f(\mathbf{J}^{\pi_{\theta}}) - \mathbb{E}\left[g(\tau_i^H, \widehat{\mathbf{J}}^{\pi_\theta}_{H} | \theta)\right] \right\|$, and $\sigma_g^2=\E\left\| \nabla_\theta f(\mathbf{J}^{\pi_{\theta}}) - g(\tau_i^H, \widehat{\mathbf{J}}^{\pi_\theta}_{H} | \theta) \right\|^2$ are the bias and variance of the resulting (concave) policy gradient estimators, respectively.
\end{lemma}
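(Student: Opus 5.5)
The plan is to treat the inner loop as linear stochastic approximation on the quadratic $\mathcal L_{\nu_\rho^{\pi_{\theta_k}}}(\cdot,\theta_k)$ and unroll it. Fix $k$ and condition on $\theta_k$. Write $F:=F(\theta_k)$, $g^\ast:=\nabla_\theta f(\J^{\pi_{\theta_k}})$ and $\omega^\ast=F^{-1}g^\ast$; the inverse exists by Assumption~\ref{assump:fnd}. The recursion is $\omega_{n+1}-\omega^\ast=(I-\beta\widehat F_n)(\omega_n-\omega^\ast)+\beta\big((\hat g_n-g^\ast)-(\widehat F_n-F)\omega^\ast\big)$.

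\textbf{Step 1: moments of $\widehat F_n$ and $\hat g_n$.} By Assumption~\ref{assump:score}, $\|\widehat F_n\|\le G_1^2/(1-\gamma)$. The truncated Fisher estimator has mean $F_H$ with $\|F_H-F\|\le G_1^2\gamma^H/(1-\gamma)$. Its variance is of order $G_1^4/((1-\gamma)^2B)$. For the gradient estimator, $\|\E\hat g_n-g^\ast\|=\bar\delta_g$ and $\E\|\hat g_n-g^\ast\|^2\lesssim \sigma_g^2/B+\bar\delta_g^2$. We also use $\|\omega^\ast\|\le CMG_1/(\mu(1-\gamma)^2)$, since $\|g^\ast\|\le CMG_1/(1-\gamma)^2$.

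\textbf{Step 2: second-moment bound.} Take $\beta\asymp \mu(1-\gamma)^2/(CMG_1)$, small enough that $\E\|(I-\beta\widehat F_n)x\|^2\le(1-\beta\mu/2)\|x\|^2$ for fixed $x$. Independence of the fresh samples at each inner step gives this contraction; it uses $F\succeq\mu I$ and a $\beta^2$ second-moment term absorbed by the choice of $\beta$. Square the recursion, using Young's inequality to split off the cross terms involving the biases $\gamma^H$ and $\bar\delta_g$. This yields $e_{n+1}\le(1-\beta\mu/4)e_n+\beta(\text{bias}^2/\mu)+\beta^2(\text{variance})$. Unroll over $N$ steps with $R_0\ge\|\omega_0-\omega^\ast\|$. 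This gives the geometric term $\exp(-N\mu^2(1-\gamma)^2/(4CMG_1))R_0^2$. It also gives the noise floor: terms scaling with $\|\omega^\ast\|^2G_1^4(1/B+\gamma^{2H})$ from the Fisher noise, which yield the $C^2M^2G_1^4/(\mu^2(1-\gamma)^4)$ coefficients, a $\gamma^H$ term with $1/\mu^4$ from the truncation bias divided by $\mu$, and $\sigma_g^2(1/G_1^2+1/\mu^2)$.

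\textbf{Step 3: bias bound, which I expect to be the main obstacle.} Take expectations of the recursion. The hard part is that $\widehat F_n$ and $\omega_n$ are not independent across iterations once the MLMC/plug-in quantity $\widehat{\J}$ is shared. However, $\widehat F_n$ uses fresh trajectories at step $n$, so $\E[\widehat F_n\omega_n]=F_H\E[\omega_n]$. Hence $\E\omega_{n+1}-\omega^\ast=(I-\beta F_H)(\E\omega_n-\omega^\ast)+\beta\big((\E\hat g_n-g^\ast)-(F_H-F)\omega^\ast\big)$. This recursion is deterministic, so the variances drop out entirely; this is exactly why the bound carries only $\bar\delta_g$ and no $\sigma_g$. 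Since $F_H\succeq(\mu-G_1^2\gamma^H/(1-\gamma))I\succeq(\mu/2)I$ for $H$ large, unrolling gives contraction of the initial error plus a steady state $\lesssim(\bar\delta_g+\|F_H-F\|\|\omega^\ast\|)/\mu$. The $R_0^2G_1^2\gamma^H/\mu^2$ term comes from treating the $\gamma^H$ mismatch as a perturbation acting on $\E\omega_n-\omega^\ast$, bounded via $R_0$. The term $\frac{G_1^6}{\mu^4}\gamma^H$ is a crude bound on $\|F_H-F\|^2\|\omega^\ast\|^2/\mu^2$, with the bounded-iterate constants absorbed. Squaring completes the bound. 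Care is needed so that $\E[\hat g_n]$ is taken over both the fixed $\widehat\J$ and the fresh trajectories, making its mean exactly $\E[g]$.
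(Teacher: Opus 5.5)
\paragraph{Main gap: the shared plug-in $\widehat{\J}$ in Steps~1--2.}
Your Step~3 is correct and uses the right mechanism. $\widehat F_n$ is built from fresh trajectories that never involve $\widehat{\J}$, so $\E[\widehat F_n\omega_n]=F_H\E[\omega_n]$. The fully averaged recursion is therefore deterministic and driven only by $\bar\delta_g$ and $(F_H-F)\omega^\ast$. The problem is that Steps~1--2 forget the same shared $\widehat{\J}$.

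\emph{Step~1.} The claim $\E\|\hat g_n-g^\ast\|^2\lesssim\sigma_g^2/B+\bar\delta_g^2$ is false. All $B$ summands of $\hat g_n$, and all $N$ inner iterations, use the same $\partial_m f(\widehat{\J})$, so its error does not average out over $B$. Convexity only gives $\E\|\hat g_n-g^\ast\|^2\le\sigma_g^2$.

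\emph{Step~2.} When you square the recursion, the cross term $2\beta\,\E\langle(I-\beta\widehat F_n)(\omega_n-\omega^\ast),\hat g_n-g^\ast\rangle$ has leading part $2\beta\,\E\langle\omega_n-\omega^\ast,\E[\hat g_n\mid\mathcal F_n]-g^\ast\rangle$. The inner-loop history $\mathcal F_n$ contains $\widehat{\J}$, so $\E[\hat g_n\mid\mathcal F_n]=\E[g(\tau,\widehat{\J}\,|\,\theta)\mid\widehat{\J}]$ is a random vector. It is correlated with $\omega_n$, which was driven by the same $\widehat{\J}$. You therefore cannot replace it by its mean and ``split off $\bar\delta_g$''.

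\emph{Why this matters.} Taken literally, your Step~2 makes the gradient-estimator contribution to the mean-square floor only $\bar\delta_g^2/\mu^2+\mathcal O(\beta\sigma_g^2/(\mu B))$. That is wrong. For the MLMC estimator with $B_{\max}=\epsilon^{-2}$ we have $\bar\delta_g=\mathcal O(\epsilon)$. Yet even with $\widehat F_n\equiv F$ and $B\to\infty$, the inner loop converges to the random point $F^{-1}\E[g\mid\widehat{\J}]$, whose mean-square distance from $\omega^\ast$ does not shrink with $B$ or $N$. You do write the correct $\sigma_g^2/\mu^2$ term at the end, but your argument does not produce it.

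\emph{Fix.} Do what the paper does: use the conditional bias $\delta_g:=\|\E[\hat g_n\mid\mathcal F_n]-g^\ast\|$ in the mean-square recursion, and bound $\E[\delta_g^2]\le\sigma_g^2$ by Jensen (the paper's ``$\delta_g^2\le\sigma_g^2$''). This is exactly why the MSE bound carries $\sigma_g^2/\mu^2$ while the bias bound carries only $\bar\delta_g^2/\mu^2$.

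\paragraph{Comparison with the paper and constant-matching.}
Apart from this, your route is a self-contained version of the paper's. The paper does not unroll the recursion at all. It invokes Theorem~2 of \cite{ganesh2025regret} for $\omega_{n+1}=\omega_n-\beta(\widehat F_n\omega_n-\hat g_n)$ under $(A_1)$--$(A_8)$, and only verifies $\Lambda_F=G_1^2$, the truncation bias $\delta_F$, $\sigma_F^2=G_1^4(1/B+\gamma^{2H})$, $\Lambda_g=CMG_1/(1-\gamma)^2$, and $\delta_g^2\le\sigma_g^2$.

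Doing it by hand makes the $\delta_g$ versus $\bar\delta_g$ distinction explicit, which the black box hides. Your Step~3 also gives the cleaner bound $\|\E\omega_N-\omega^\ast\|\le(1-\beta\mu/2)^N\|\omega_0-\omega^\ast\|+\tfrac{2}{\mu}\big(\bar\delta_g+\|F_H-F\|\,\|\omega^\ast\|\big)$, with no $R_0^2\gamma^H$ term.

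However, your reconciliation with the stated constants is hand-waved in two places:
\begin{itemize}
\item $\|F_H-F\|^2\|\omega^\ast\|^2/\mu^2$ is of order $C^2M^2G_1^6\gamma^{2H}/(\mu^4(1-\gamma)^4)$. This is dominated by $G_1^6\gamma^H/\mu^4$ only if $C^2M^2\gamma^H\le(1-\gamma)^4$.
\item $\beta\asymp\mu/\Lambda_g$, chosen to reproduce the stated exponent, is only guaranteed to give your contraction if $\Lambda_F^2\lesssim\Lambda_g$.
\end{itemize}
Both conditions are mild. The first holds once $\gamma^H\le\epsilon^2$ with $\epsilon$ small, and the second can be arranged by enlarging $C$. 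They should be stated rather than ``absorbed''. The exact stated terms are artifacts of substituting $\delta_F^2=G_1^2\gamma^H$ and $\Lambda_F$ into the cited theorem.
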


By substituting Lemmas~\ref{lem:npg-error-bounds} and \ref{lem:stationary} into Lemma~\ref{lem:general-framework}, we obtain a bound characterized by the estimator's mean squared error, $\mathbb{E} \| g(\tau_i^H, \widehat{\mathbf{J}}^{\pi_\theta}_{H} | \theta) - \nabla_\theta f(\mathbf{J}^{\pi_{\theta}}) \|^2$, and the bias, $\| \nabla_\theta f(\mathbf{J}^{\pi_{\theta}}) - \mathbb{E}[g(\tau_i^H, \widehat{\mathbf{J}}^{\pi_\theta}_{H} | \theta)] \|$. The following section is dedicated to establishing formal bounds for these individual components.
\subsection{Bias and Variance of Gradient Estimators}

We now analyze the bias and variance of the gradient estimators used in our
algorithms. We distinguish between two sources of error. From Lemma \ref{lem:J_prop}, the truncation error caused by
finite trajectory length $H$ satisfies
\begin{equation}
\left\|
\mathbb{E}[g(\tau_i^H, \mathbf{J}^{\pi_\theta}_H | \theta)]
-
\nabla_\theta f(\mathbf{J}^{\pi_\theta})
\right\|
\le
\mathcal{O}(\gamma^H).
\end{equation}

A more challenging source of error arises from the empirical estimate $\widehat{\mathbf{J}}^{\pi_\theta}_H$. Specifically, the error 
\begin{equation}
    \left\| \mathbb{E}[g(\tau_i^H, \widehat{\mathbf{J}}^{\pi_\theta}_H | \theta)] - \nabla_\theta f(\mathbf{J}^{\pi_{\theta}}) \right\|
\end{equation}
decays significantly slower than the exponential rate $\gamma^H$. Unlike standard reinforcement learning where the objective is linear in the returns, the non-linearity of the map $f$ prevents the expectation operator from commuting with the partial derivatives: $\mathbb{E}[\partial_m f(\widehat{\mathbf{J}}^{\pi_\theta}_H)] \neq \partial_m f(\mathbb{E}[\widehat{\mathbf{J}}^{\pi_\theta}_H])$. This discrepancy leads to a persistent bias that depends on the sample size rather than just the horizon. Formally, we characterize this error through the following lemma:

\begin{lemma}\label{lem:grad-bias-variance}
Let Assumption \ref{assump:score} hold. Then, the bias and the second moment of the gradient estimate error are bounded as follows:
\begin{align}
    &(i)\quad \left\| \nabla_\theta f(\mathbf{J}^{\pi_{\theta}}) - \mathbb{E}[g(\tau_i^H, \widehat{\mathbf{J}}^{\pi_\theta}_H | \theta)] \right\| 
    \leq \mathcal{O} \left( \frac{G_1}{(1-\gamma)^2} \sum_{m=1}^{M} \left| \partial_m f(\mathbf{J}_H^{\pi_\theta}) - \mathbb{E} \left[ \partial_m f(\widehat{\mathbf{J}}_H^{\pi_\theta}) \right] \right| \right), \nonumber\\
    &(ii) \quad \mathbb{E} \left\| \nabla_\theta f(\mathbf{J}^{\pi_{\theta}}) - g(\tau_i^H, \widehat{\mathbf{J}}^{\pi_\theta}_H | \theta) \right\|^2 
    \leq \mathcal{O} \left( \frac{G_1^2 M}{(1-\gamma)^4} \sum_{m=1}^{M} \mathbb{E} \left| \partial_m f(\mathbf{J}_H^{\pi_\theta}) - \partial_m f(\widehat{\mathbf{J}}_H^{\pi_\theta}) \right|^2 \right).\nonumber
\end{align}
\end{lemma}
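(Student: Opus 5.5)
The proof rests on one decomposition of the estimator. For a single trajectory $\tau_i^H$, write
\[
X_m(\tau_i^H) := \sum_{t=0}^{H-1}\nabla_\theta\log\pi_\theta(a_t^i|s_t^i)\sum_{h=t}^{H-1}\gamma^h r_m(s_h^i,a_h^i),
\]
so that $g(\tau_i^H,\widehat{\mathbf J}_H^{\pi_\theta}|\theta)=\sum_m \partial_m f(\widehat{\mathbf J}_H^{\pi_\theta})X_m(\tau_i^H)$.

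Each $X_m$ is deterministically bounded. Using $\|\nabla_\theta\log\pi_\theta\|\le G_1$ (Assumption~\ref{assump:score}) and $r_m\in[0,1]$, we get $\|X_m\|\le G_1\sum_{t}\sum_{h\ge t}\gamma^h = G_1\sum_h (h+1)\gamma^h\le G_1/(1-\gamma)^2$.

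The truncated policy-gradient theorem gives $\mathbb E[X_m(\tau_i^H)]=\nabla_\theta J_{m}^{\pi_\theta}+\mathcal O(\gamma^H)$. The truncation term is the one controlled by Lemma~\ref{lem:J_prop}, and since it is exponentially small it is absorbed into the $\mathcal O(\cdot)$.

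For part (i), decompose
\[
\nabla_\theta f(\mathbf J^{\pi_\theta})-g
=\sum_m\big(\partial_m f(\mathbf J^{\pi_\theta})\nabla_\theta J_m^{\pi_\theta}-\partial_m f(\mathbf J_H^{\pi_\theta})X_m\big)
+\sum_m\big(\partial_m f(\mathbf J_H^{\pi_\theta})-\partial_m f(\widehat{\mathbf J}_H^{\pi_\theta})\big)X_m .
\]
The first sum has expectation $\mathcal O(\gamma^H)$ by the truncation bound together with the Lipschitzness of $\partial_m f$, since $\|\mathbf J-\mathbf J_H\|=\mathcal O(\gamma^H)$. The second sum requires care: $\widehat{\mathbf J}_H^{\pi_\theta}$ and $\tau_i^H$ must be independent. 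In both algorithms they are, because $\widehat{\mathbf J}$ is built from a separate batch (line 3 of Algorithm~\ref{alg:NPG} and the MLMC batch of Algorithm~\ref{alg:MLMC-NPG}). Given independence, the expectation factorizes as $\big(\partial_m f(\mathbf J_H)-\mathbb E[\partial_m f(\widehat{\mathbf J}_H)]\big)\,\mathbb E[X_m]$, and $\|\mathbb E X_m\|\le G_1/(1-\gamma)^2$. The triangle inequality then yields (i).

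This independence and factorization step is the crux. Without it, only $\mathbb E|\partial_m f(\mathbf J_H)-\partial_m f(\widehat{\mathbf J}_H)|$ could be obtained, which loses the signed bias that MLMC and second-order smoothness exploit. I would state explicitly that the lemma is applied with $\widehat{\mathbf J}$ independent of $\tau_i^H$.

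For part (ii), use the same decomposition and $\|a+b\|^2\le 2\|a\|^2+2\|b\|^2$. The first sum is deterministically $\mathcal O\big(\sum_m(C\|\nabla J_m\|+C\|X_m\|)\big)$ plus a truncation term. Strictly, this contributes an $\mathcal O(M^2C^2G_1^2/(1-\gamma)^4)$ variance term, so I would interpret the $\mathcal O(\cdot)$ as also absorbing the intrinsic single-trajectory variance; the scalarization-induced part is what the displayed bound tracks. For the second sum, apply Cauchy--Schwarz over $m$:
\[
\Big\|\sum_m \Delta_m X_m\Big\|^2\le M\sum_m \Delta_m^2\|X_m\|^2\le \frac{MG_1^2}{(1-\gamma)^4}\sum_m\Delta_m^2,
\]
where $\Delta_m=\partial_m f(\mathbf J_H)-\partial_m f(\widehat{\mathbf J}_H)$. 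Taking expectations gives (ii). Independence is not needed here, only the almost-sure bound on $X_m$.
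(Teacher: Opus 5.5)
Your argument is essentially the paper's proof. It uses the same split of $\nabla_\theta f(\mathbf{J}^{\pi_\theta})-g$ into a horizon-truncation part (handled through Lemma~\ref{lem:J_prop}) and a plug-in part. For (i) it uses the same independence-based factorization with $\|\mathbb{E}[X_m]\|\le G_1/(1-\gamma)^2$, and for (ii) the same almost-sure bound on $X_m$ followed by Cauchy--Schwarz over $m$.

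Your caveat on (ii) is correct, and it points to an omission in the paper. The paper's proof of (ii) only bounds $\mathbb{E}\|g(\tau_i^H,\mathbf{J}_H^{\pi_\theta}|\theta)-g(\tau_i^H,\widehat{\mathbf{J}}_H^{\pi_\theta}|\theta)\|^2$. It never addresses $\mathbb{E}\|\nabla_\theta f(\mathbf{J}^{\pi_\theta})-g(\tau_i^H,\mathbf{J}_H^{\pi_\theta}|\theta)\|^2$.

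However, that term cannot legitimately be ``absorbed'' into the $\mathcal{O}(\cdot)$. For linear $f$, every $\partial_m f$ is constant, so the right-hand side of (ii) is zero. The left-hand side is then the mean squared error of single-trajectory REINFORCE, which is positive in general. So (ii) is false as literally stated. What your argument correctly proves is (ii) plus an explicit additive $\mathcal{O}\big(M^2C^2G_1^2/(1-\gamma)^4\big)$ term, and you should state it that way rather than as an interpretation of the $\mathcal{O}(\cdot)$.

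The same remark applies, in a harmless form, to (i). The $\mathcal{O}(H\gamma^H)$ truncation term also cannot be dominated by a right-hand side that vanishes for linear $f$, so it should appear as an additive term. It is negligible only because $H$ is later chosen so that $\gamma^H\le\epsilon^2$.
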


If $\partial_m f$ is non-linear but $L_f$-Lipschitz, we have:
\begin{equation}\label{eq:mid-lipschitz}
    \left| \partial_m f(\mathbf{J}_H^{\pi_\theta}) - \mathbb{E} \left[ \partial_m f(\widehat{\mathbf{J}}_H^{\pi_\theta}) \right] \right| \leq \mathbb{E} \left| \partial_m f(\mathbf{J}_H^{\pi_\theta}) - \partial_m f(\widehat{\mathbf{J}}_H^{\pi_\theta}) \right| \leq L_f \mathbb{E} \left\| \mathbf{J}_H^{\pi_\theta} - \widehat{\mathbf{J}}_H^{\pi_\theta} \right\|.
\end{equation}

\subsection{Challenge with Empirical Return Estimate}
When $\widehat{\mathbf{J}}_{H,B}^{\pi_\theta}$ is the empirical estimate formed from a batch of size $B$, it follows by definition that $\mathbb{E}[\widehat{\mathbf{J}}_{H,B}^{\pi_\theta}] = \mathbf{J}_{H}^{\pi_\theta}$. Consequently, the estimator is unbiased with respect to the finite-horizon objective. The variance of $\widehat{\mathbf{J}}_{H,B}^{\pi_\theta}$ is bounded as follows:
\begin{lemma}\label{lem:hat-J-variance}
The mean squared error of the empirical return estimate satisfies:
\begin{equation}
    \mathbb{E} \left\| \widehat{\mathbf{J}}_{H,B}^{\pi_\theta} - \mathbf{J}_{H}^{\pi_\theta} \right\|^2 \leq \frac{M}{(1-\gamma)^2 B}.\nonumber
\end{equation}
\end{lemma}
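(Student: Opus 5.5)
The plan is to reduce the vector bound to a scalar variance computation in each coordinate, then use independence of the $B$ trajectories. By definition \eqref{eq:batch_J_def}, each coordinate of the estimator is an empirical average,
\[
\left[\widehat{\mathbf J}_{H,B}^{\pi_\theta}\right]_m=\frac1B\sum_{j=1}^B X_m^j, \qquad X_m^j:=\sum_{t=0}^{H-1}\gamma^t r_m(s_t^j,a_t^j).
\]
The trajectories $\tau_1^H,\dots,\tau_B^H$ are sampled i.i.d.\ from $\mathbb P_\theta^H$, so $X_m^1,\dots,X_m^B$ are i.i.d. Their common mean is $[\mathbf J_H^{\pi_\theta}]_m$ by the definition of the truncated return vector.

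First, expand the squared Euclidean norm coordinate-wise:
\[
\mathbb E\|\widehat{\mathbf J}_{H,B}^{\pi_\theta}-\mathbf J_H^{\pi_\theta}\|^2=\sum_{m=1}^M \mathrm{Var}\bigl([\widehat{\mathbf J}_{H,B}^{\pi_\theta}]_m\bigr).
\]
This identity holds because each coordinate is unbiased. Next, independence across trajectories lets the cross terms vanish, so
\[
\mathrm{Var}\bigl([\widehat{\mathbf J}_{H,B}^{\pi_\theta}]_m\bigr)=\frac{1}{B}\mathrm{Var}(X_m^1).
\]

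It then remains to bound the single-trajectory variance. Since $r_m\in[0,1]$, we have $0\le X_m^1\le\sum_{t=0}^{H-1}\gamma^t\le \frac{1}{1-\gamma}$. Hence
\[
\mathrm{Var}(X_m^1)\le \mathbb E[(X_m^1)^2]\le \frac{1}{(1-\gamma)^2}.
\]
One could sharpen this to $\frac{1}{4(1-\gamma)^2}$ using Popoviciu's inequality, but that is not needed. Summing over the $M$ coordinates gives the claimed bound $\frac{M}{(1-\gamma)^2B}$.

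None of these steps is a real obstacle. The only point needing care is to state explicitly that the $B$ trajectories are drawn independently, which is what lets the cross terms vanish. The coordinates $m$ are computed from the same trajectories and so are correlated with each other, but this does not matter: the squared norm is a sum of per-coordinate squared deviations, so no covariance between coordinates appears.
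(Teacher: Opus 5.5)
Your proposal is correct and follows essentially the same argument as the paper. Both use independence of the $B$ trajectories to reduce the error to $\frac{1}{B}$ times a single-trajectory quantity, then bound that quantity using $r_m\in[0,1]$ to get $M/(1-\gamma)^2$. The only difference is cosmetic: you work coordinate-wise and bound $\mathrm{Var}(X_m^1)$ by $\mathbb{E}[(X_m^1)^2]$, whereas the paper works in vector form and bounds each centered coordinate by $\sum_t\gamma^t$.
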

\begin{proof}
Let $\mathbf{r}(s,a) := (r_1(s,a),\ldots,r_M(s,a))^\top \in \mathbb{R}^M$. Then
\begin{align*} \mathbb{E} \left\| \widehat{\mathbf{J}}_{H,B}^{\pi_\theta} - \mathbf{J}_{H}^{\pi_\theta} \right\|^2 &= \mathbb{E} \norm{ \frac{1}{B} \sum_{j=1}^{B} \sum_{t=0}^{H-1} \gamma^t \mathbf{r}(s_t^j,a_t^j) - \E\left[ \sum_{t=0}^{H-1} \gamma^t \mathbf{r}(s_t^j,a_t^j)\right] }^2 \\ &\overset{(a)}{=} \frac{1}{B^2}\sum_{j=1}^{B}\mathbb{E} \norm{ \left[\sum_{t=0}^{H-1} \gamma^t \mathbf{r}(s_t^j,a_t^j) \right]- \E\left[ \sum_{t=0}^{H-1} \gamma^t \mathbf{r}(s_t^j,a_t^j)\right] }^2\\ &= \frac{1}{B^2}\sum_{j=1}^{B}\mathbb{E} \norm{ \sum_{t=0}^{H-1} \gamma^t \left(\mathbf{r}(s_t^j,a_t^j)-\E[\mathbf{r}(s_t^j,a_t^j)]\right) }^2\\ &\overset{(b)}{\leq} \frac{M}{(1-\gamma)^2 B}, \end{align*}
Here, $(a)$ uses the independence of the $B$ trajectories, and $(b)$ uses $r_m(s,a)\in[0,1]$ for all $m\in[M]$ and $(s,a)\in\mathcal{S}\times\mathcal{A}$ (hence $\|\sum_{t=0}^{H-1}\gamma^t(\mathbf{r}-\mathbb{E}\mathbf{r})\|^2 \le M(\sum_{t=0}^{H-1}\gamma^t)^2 \le M/(1-\gamma)^2$).

\end{proof}
Combining Lemmas \ref{lem:grad-bias-variance} and \ref{lem:hat-J-variance} with Equation \eqref{eq:mid-lipschitz}, the bias of the gradient estimate can be bounded as follows:
\begin{align}\label{eq:B-bias}
    \left\| \nabla_\theta f(\mathbf{J}^{\pi_{\theta}}) - \mathbb{E}\left[g(\tau_i^H, \widehat{\mathbf{J}}^{\pi_\theta}_{H,B} | \theta)\right] \right\| &\leq \frac{G_1}{(1-\gamma)^2} \sum_{m=1}^{M} \E \left| \partial_m f(\mathbf{J}_H^{\pi_\theta}) -  \partial_m f(\widehat{\mathbf{J}}_{H,B}^{\pi_\theta}) \right|  \nonumber\\
&\leq \frac{G_1ML_f}{(1-\gamma)^2}  \E \norm{\mathbf{J}_{H}^{\pi_\theta} -  \widehat{\mathbf{J}}_{H,B}^{\pi_\theta} } \nonumber \\
    &\leq \mathcal{O} \left( \frac{G_1 M^{3/2}L_f}{(1-\gamma)^3 \sqrt{B}} \right).
\end{align}

Consequently, to ensure this bias is suppressed to $\epsilon$, the batch size $B$ must be at least $\mathcal{O}(\epsilon^{-2})$. Since this batch size is required within each policy iteration, achieving an overall sample complexity of $\mathcal{O}(\epsilon^{-2})$ is not directly possible with this vanilla approach. This is the key that leads to the result of $\mathcal{O}(\epsilon^{-4})$ in the previous works \cite{10.1613/jair.1.13981}. 

In standard RL, this issue does not arise. Indeed, standard RL is recovered as a special case of our formulation with a single objective $J^\pi$ and scalarization $f(x)=x$. Since $f$ is linear, by Lemma~\ref{lem:grad-bias-variance},
\begin{align}
    \left\| \nabla_\theta f(J^{\pi_{\theta}}) - \mathbb{E}\!\left[g(\tau_i^H, \widehat{J}_H^{\pi_\theta}\mid \theta)\right] \right\|
    \leq
    \mathcal{O}\!\left(
    \frac{G_1}{(1-\gamma)^2}
    \left|
    f'(J_H^{\pi_\theta})
    -
    \mathbb{E}\!\left[f'(\widehat{J}_H^{\pi_\theta})\right]
    \right|
    \right).
\end{align}
Since $f'$ is constant for linear $f$, the right-hand side is zero. Therefore, the estimator is unbiased in the standard RL setting.

\subsection{MLMC Gradient Estimator}

To improve upon the prohibitive sample complexity of the vanilla estimator, we employ a Multi-level Monte Carlo (MLMC) approach. This technique allows us to effectively reduce the bias of the non-linear gradient estimate by constructing a telescoping sum of estimators with increasing batch sizes.

\begin{lemma}
\label{lem:mlmc}
Consider the MLMC estimator $g(\tau_i^H, \widehat{\mathbf{J}}^{\pi_\theta}_{H, \mathrm{MLMC}} | \theta)$, truncated at $B_{\max}$. Under Assumptions \ref{assump:concave}-\ref{assump:score}, the following bounds on the expected number of trajectories, bias and variance hold:
\begin{align*}
&(i)\quad\E[B_k]=\mathcal{O}(\log B_{\max})\\
&(ii) \quad   \left\| \nabla_\theta f(\mathbf{J}^{\pi_{\theta}}) - \mathbb{E}\left[g(\tau_i^H, \widehat{\mathbf{J}}^{\pi_\theta}_{H, \mathrm{MLMC}} | \theta)\right] \right\| 
    \leq \mathcal{O} \left( \frac{G_1 M^{3/2}}{(1-\gamma)^3 \sqrt{B_{\max}}} \right), \\
   &(iii)\quad \mathbb{E} \left\| \nabla_\theta f(\mathbf{J}^{\pi_{\theta}}) - g(\tau_i^H, \widehat{\mathbf{J}}^{\pi_\theta}_{H, \mathrm{MLMC}} | \theta) \right\|^2 
    \leq \mathcal{O} \left( \frac{L_f^2 G_1^2 M^3 \log B_{\max}}{(1-\gamma)^6} \right).
\end{align*}
\end{lemma}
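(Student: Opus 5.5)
The plan is to prove the three parts separately, working first at the level of the scalar MLMC estimate $Y_m:=\partial_m f(\widehat{\mathbf J}_{H,\mathrm{MLMC}}^{\pi_\theta})$ and only at the end passing to the gradient estimator through Lemma~\ref{lem:grad-bias-variance}. Throughout, let $Y^{(j)}_m:=\partial_m f(\widehat{\mathbf J}_{H,2^j}^{\pi_\theta})$, and let $j_{\max}=\lfloor\log_2 B_{\max}\rfloor$.

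\textbf{Part (i).} This is direct. We have
\[
\E[B_k]=\Pr(2^{Q}>B_{\max})+\sum_{q=1}^{j_{\max}}2^{-q}2^{q}\le 1+\log_2 B_{\max}.
\]

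\textbf{Part (ii).} The key step is the telescoping identity. Because the batches are nested, $\widehat{\mathbf J}_{H,2^{q-1}}$ built from the first $2^{q-1}$ trajectories has the same law as a fresh batch of that size. Hence
\[
\E[Y_m]=\E[Y^{(0)}_m]+\sum_{q=1}^{j_{\max}}2^{-q}2^{q}\bigl(\E[Y^{(q)}_m]-\E[Y^{(q-1)}_m]\bigr)=\E[Y^{(j_{\max})}_m].
\]
So the MLMC estimate has exactly the bias of a plug-in estimator with batch size $2^{j_{\max}}\ge B_{\max}/2$. Next I apply \eqref{eq:mid-lipschitz} and Lemma~\ref{lem:hat-J-variance} with Jensen's inequality, which gives $|\partial_m f(\mathbf J_H)-\E Y_m|\le L_f\sqrt{2M/((1-\gamma)^2B_{\max})}$. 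Plugging this into Lemma~\ref{lem:grad-bias-variance}(i) and summing over $m$ yields the stated $G_1M^{3/2}/((1-\gamma)^3\sqrt{B_{\max}})$, with $L_f$ absorbed into the constant. This is the same computation as \eqref{eq:B-bias}, but with $B$ replaced by $B_{\max}$.

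\textbf{Part (iii).} Via Lemma~\ref{lem:grad-bias-variance}(ii), it suffices to bound $\E|\partial_m f(\mathbf J_H)-Y_m|^2$. I split this as
\[
\E|\partial_m f(\mathbf J_H)-Y_m|^2\le 2\E|\partial_m f(\mathbf J_H)-Y^{(0)}_m|^2+2\E\bigl[\mathbf 1_{\{2^Q\le B_{\max}\}}4^{Q}|Y^{(Q)}_m-Y^{(Q-1)}_m|^2\bigr].
\]
The first term is at most $2L_f^2M/(1-\gamma)^2$ by Lipschitzness and Lemma~\ref{lem:hat-J-variance} with $B=1$. For the second, condition on $Q=q$ and use Lipschitzness together with
\[
\|\widehat{\mathbf J}_{2^q}-\widehat{\mathbf J}_{2^{q-1}}\|\le\|\widehat{\mathbf J}_{2^q}-\mathbf J_H\|+\|\widehat{\mathbf J}_{2^{q-1}}-\mathbf J_H\|.
\]
Lemma~\ref{lem:hat-J-variance} then gives $\E|Y^{(q)}_m-Y^{(q-1)}_m|^2\le 4L_f^2M/((1-\gamma)^2 2^{q-1})$. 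Weighting by $\Pr(Q=q)4^q=2^q$, each level contributes $O(L_f^2M/(1-\gamma)^2)$, and summing over $q\le j_{\max}$ gives $O(L_f^2M\log B_{\max}/(1-\gamma)^2)$. Summing over $m$ and multiplying by $G_1^2M/(1-\gamma)^4$ gives $O(L_f^2G_1^2M^3\log B_{\max}/(1-\gamma)^6)$, as claimed.

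The main obstacle is the variance calculation. The $4^Q$ amplification must be exactly cancelled by the $2^{-Q}$ sampling probability and the $2^{-q}$ coupling decay, which is what leaves a logarithmic rather than polynomial factor. This cancellation depends on the nested coupling; with independent batches the level differences would not shrink. A secondary point to check is that the $\gamma^H$ truncation term and the horizon-versus-infinite-horizon difference are already absorbed by Lemma~\ref{lem:grad-bias-variance}, so no extra term appears.
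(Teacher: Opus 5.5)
Your proposal is correct and follows the paper's proof essentially step for step: the same geometric sum for (i); for (ii), the same telescoping identity reducing the bias to that of a plug-in estimator with batch size $2^{\lfloor\log_2 B_{\max}\rfloor}$, followed by the Lipschitz bound \eqref{eq:B-bias}; and for (iii), the same level-wise Lipschitz/triangle-inequality bound weighted by $\Pr(Q=q)\,4^q=2^q$, where your centering of the base term at $\partial_m f(\mathbf J_H^{\pi_\theta})$ is slightly cleaner than the paper's bound on the raw second moment $\E|\partial_m f(\widehat{\mathbf J}_{H,1}^{\pi_\theta})|^2$. One correction to your closing remark: neither the telescoping identity nor your level-difference bound uses the nested coupling (the latter routes through $\mathbf J_H^{\pi_\theta}$), so the argument works verbatim with independent batches---the $2^{-q}$ decay comes from each sample mean concentrating around $\mathbf J_H^{\pi_\theta}$, not from the coupling.
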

\begin{proof}
    Let $Q$ be a geometric random variable with $\Pr(Q=q)=2^{-q}$ for $q \in \{1, 2, \dots, J\}$, where $J = \lfloor \log_2 B_{\max} \rfloor$. We first prove part $(i)$. We compute the expected number of trajectories sampled. The number of sampled trajectories equals $2^Q$ on the event $\{2^Q\le B_{\max}\}$ and
equals $1$ otherwise. Hence,
\begin{align}
\E\big[B_k\big]
&= \E\left[2^Q\,\mathbf{1}_{\{2^Q\le B_{\max}\}}\right] + \E\left[\mathbf{1}_{\{2^Q> B_{\max}\}}\right] \notag\\
&= \sum_{q=1}^{\lfloor\log_2 B_{\max}\rfloor} \Pr(Q=q)\,2^q + \Pr(Q>\lfloor\log_2 B_{\max}\rfloor) \notag\\
&= \sum_{q=1}^{\lfloor\log_2 B_{\max}\rfloor} 2^{-q}\,2^q + 2^{-\lfloor\log_2 B_{\max}\rfloor} \notag\\
&= \cO(\lfloor\log_2 B_{\max}\rfloor).
\end{align}
Next for part $(ii)$, taking the expectation of the MLMC estimator for the partial derivative $\partial_m f$ yields:
\begin{align}
\mathbb{E}\left[\partial_m f\left(\widehat{\mathbf{J}}_{H,\mathrm{MLMC}}^{\pi_\theta}\right)\right]
&= \mathbb{E}\left[\partial_m f\left(\widehat{\mathbf{J}}_{H,1}^{\pi_\theta}\right)\right] + \sum_{q=1}^{J} \Pr(Q=q) \cdot 2^{q} \mathbb{E}\left[ \partial_m f\left(\widehat{\mathbf{J}}_{H,2^{q}}^{\pi_\theta}\right) - \partial_m f\left(\widehat{\mathbf{J}}_{H,2^{q-1}}^{\pi_\theta}\right) \right] \notag \\
&= \mathbb{E}\left[\partial_m f\left(\widehat{\mathbf{J}}_{H,1}^{\pi_\theta}\right)\right] + \sum_{q=1}^{J} \left( \mathbb{E}\left[\partial_m f\left(\widehat{\mathbf{J}}_{H,2^{q}}^{\pi_\theta}\right)\right] - \mathbb{E}\left[\partial_m f\left(\widehat{\mathbf{J}}_{H,2^{q-1}}^{\pi_\theta}\right)\right] \right) \notag \\
&= \mathbb{E}\left[\partial_m f\left(\widehat{\mathbf{J}}_{H,2^{J}}^{\pi_\theta}\right)\right].
\end{align}

The telescoping property ensures that the expectation of the MLMC estimator is identical to the expectation of the vanilla estimator with a batch size of $2^J$. Specifically, when $B_{\max}$ is a power of two, the right-hand side reduces to $\mathbb{E}\left[\partial_m f\left(\widehat{\mathbf{J}}_{H,B_{\max}}^{\pi_\theta}\right)\right]$.

By applying the bounds derived in Lemmas \ref{lem:vanilla} $(i)$, we obtain the following bound for the gradient bias:
\begin{equation}
    \left\| \nabla_\theta f(\mathbf{J}^{\pi_{\theta}}) - \mathbb{E}\left[g(\tau_i^H, \widehat{\mathbf{J}}^{\pi_\theta}_{H, \mathrm{MLMC}} | \theta)\right] \right\| \leq \mathcal{O} \left( \frac{G_1 M^{3/2}}{(1-\gamma)^3 \sqrt{2^J}} \right).
\end{equation}
 We reserve the proof of part $(iii)$ to Appendix \ref{app:mlmc}.
\end{proof}

\subsection{Improved Bias under Second-Order Smoothness}

For non-linear objective functions, the plug-in estimator $\partial_m f(\widehat{\mathbf{J}}_{H,B}^{\pi_\theta})$ is generally biased. While Lipschitz continuity only guarantees a bias reduction rate of $\mathcal{O}(B^{-1/2})$, we can achieve a more favorable rate of $\mathcal{O}(B^{-1})$ by leveraging second-order smoothness.

\begin{lemma}
\label{lem:vanilla}
Consider the empirical average estimator $g(\tau_i^H, \widehat{\mathbf{J}}^{\pi_\theta}_{H, B} | \theta)$. Under Assumptions \ref{assump:concave}-\ref{assump:score} and \ref{assump:second_smooth_f}, the following bias and variance bounds hold:
\begin{align*}
    &(i) \quad \left\| \nabla_\theta f(\mathbf{J}^{\pi_{\theta}}) - \mathbb{E}\left[g(\tau_i^H, \widehat{\mathbf{J}}^{\pi_\theta}_{H, B} | \theta)\right] \right\| 
    \leq \mathcal{O} \left( \frac{L_{2,f}G_1 M^{2}}{(1-\gamma)^4 B} \right)\\
     &(ii) \quad \mathbb{E} \left\| \nabla_\theta f(\mathbf{J}^{\pi_{\theta}}) - g(\tau_i^H, \widehat{\mathbf{J}}^{\pi_\theta}_{H, B} | \theta) \right\|^2 
    \leq \mathcal{O} \left( \frac{L_f^2 G_1^2 M^3 }{(1-\gamma)^6B} \right).
\end{align*}
\end{lemma}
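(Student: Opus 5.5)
Both parts reduce to scalar statements about $\partial_m f$ through Lemma~\ref{lem:grad-bias-variance}, so we control $\partial_m f(\mathbf J_H^{\pi_\theta})-\partial_m f(\widehat{\mathbf J}_{H,B}^{\pi_\theta})$ in mean and in mean square. Throughout we write $\Delta:=\widehat{\mathbf J}_{H,B}^{\pi_\theta}-\mathbf J_H^{\pi_\theta}$. As noted before Lemma~\ref{lem:hat-J-variance}, $\mathbb E[\Delta]=0$, and by Lemma~\ref{lem:hat-J-variance}, $\mathbb E\|\Delta\|^2\le M/((1-\gamma)^2B)$. The truncation error in $\nabla_\theta f(\mathbf J^{\pi_\theta})$ versus its $H$-horizon counterpart is already absorbed into Lemma~\ref{lem:grad-bias-variance} (it is $\mathcal O(\gamma^H)$ and negligible for the chosen $H$), so we only need the scalar bounds.

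For part $(i)$ we Taylor-expand $\partial_m f$ to first order with integral remainder. Both $\mathbf J_H^{\pi_\theta}$ and $\widehat{\mathbf J}_{H,B}^{\pi_\theta}$ lie in the convex box $[0,\frac1{1-\gamma}]^M$, so the segment between them does too. Assumption~\ref{assump:second_smooth_f} then gives
\[
\Bigl|\partial_m f(\mathbf J_H^{\pi_\theta}+\Delta)-\partial_m f(\mathbf J_H^{\pi_\theta})-\nabla\partial_m f(\mathbf J_H^{\pi_\theta})^\top\Delta\Bigr|\le \tfrac{L_{2,f}}{2}\|\Delta\|^2 .
\]
This is the key step. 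Taking expectations, the linear term vanishes because $\nabla\partial_m f(\mathbf J_H^{\pi_\theta})$ is deterministic and $\mathbb E[\Delta]=0$. This is where the first-order bias cancels, and it is the reason we must bound $|\partial_m f(\mathbf J_H)-\mathbb E\,\partial_m f(\widehat{\mathbf J})|$ rather than $\mathbb E|\cdot|$ as in \eqref{eq:mid-lipschitz}; Lemma~\ref{lem:grad-bias-variance}$(i)$ is stated in exactly this form. Hence
\[
\bigl|\partial_m f(\mathbf J_H^{\pi_\theta})-\mathbb E[\partial_m f(\widehat{\mathbf J}_{H,B}^{\pi_\theta})]\bigr|\le \tfrac{L_{2,f}}{2}\cdot\frac{M}{(1-\gamma)^2B}.
\]
Summing over $m\in[M]$ gives a factor $M$, and multiplying by $G_1/(1-\gamma)^2$ from Lemma~\ref{lem:grad-bias-variance}$(i)$ yields $\mathcal O\bigl(L_{2,f}G_1M^2/((1-\gamma)^4B)\bigr)$.

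For part $(ii)$ only first-order smoothness is needed. By Assumption~\ref{assump:smooth_f}, $|\partial_m f(\mathbf J_H^{\pi_\theta})-\partial_m f(\widehat{\mathbf J}_{H,B}^{\pi_\theta})|^2\le L_f^2\|\Delta\|^2$, so each summand in Lemma~\ref{lem:grad-bias-variance}$(ii)$ is at most $L_f^2M/((1-\gamma)^2B)$. Summing over $m$ and multiplying by $G_1^2M/(1-\gamma)^4$ gives $\mathcal O\bigl(L_f^2G_1^2M^3/((1-\gamma)^6B)\bigr)$.

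We should flag one subtlety. Lemma~\ref{lem:grad-bias-variance}$(ii)$, read literally, has a right-hand side that vanishes as $B\to\infty$. The single-trajectory REINFORCE term, however, has $\mathcal O(1)$ variance even when $\widehat{\mathbf J}$ is exact. The stated bound $(ii)$ is therefore consistent only if the intrinsic single-sample variance is accounted for separately, as is implicitly done through the $B_2$ averaging in Lemma~\ref{lem:npg-error-bounds}. We therefore plan to invoke Lemma~\ref{lem:grad-bias-variance}$(ii)$ as given. If we had to reprove it, we would split the error into the plug-in error $\sum_m(\partial_m f(\widehat{\mathbf J})-\partial_m f(\mathbf J_H))\cdot(\text{score-weighted returns})$, using the bound $\|\sum_{t}\nabla\log\pi\sum_{h\ge t}\gamma^h r_m\|\le G_1/(1-\gamma)^2$ together with Cauchy--Schwarz over $m$ (which produces the extra $M$), and a standard REINFORCE variance term. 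Beyond this, the only real obstacle is making sure the Taylor remainder is valid, i.e.\ that the segment stays in the box, and this holds because the box is convex.
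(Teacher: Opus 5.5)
Your proposal is correct and follows essentially the same route as the paper. For $(i)$, both use a second-order Taylor bound on $\partial_m f$ whose linear term vanishes by unbiasedness of $\widehat{\mathbf{J}}_{H,B}^{\pi_\theta}$, then Lemma~\ref{lem:hat-J-variance} and Lemma~\ref{lem:grad-bias-variance}$(i)$; for $(ii)$, both plug the $L_f$-Lipschitz bound into Lemma~\ref{lem:grad-bias-variance}$(ii)$.

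Your caveat about $(ii)$ is well founded. The paper's proof of Lemma~\ref{lem:grad-bias-variance}$(ii)$ in fact only bounds $\mathbb{E}\|g(\tau_i^H,\mathbf{J}_H^{\pi_\theta}|\theta)-g(\tau_i^H,\widehat{\mathbf{J}}_H^{\pi_\theta}|\theta)\|^2$, so it omits the intrinsic REINFORCE variance and the truncation terms. Since you invoke that lemma as stated, your argument inherits exactly the same looseness as the paper's.
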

The proof of the above result can be found in Appendix \ref{app:vanilla}.

\begin{table}[t]
\centering
\caption{Comparison of the bias, variance, and expected number of trajectories for the gradient estimators analyzed in this work.}
\label{tab:estimator_comparison}
\begin{tabular}{lccc}
\hline
Estimator & Expected $\#$ of trajectories & Variance & Bias \\ \hline
Empirical (Lipschitz $\partial_m f$)  
& $B$ 
& $\mathcal{O}(1/B)$ 
& $\mathcal{O}(1/\sqrt{B})$ \\

Empirical (smooth $\partial_m f$)     
& $B$ 
& $\mathcal{O}(1/B)$ 
& $\mathcal{O}(1/B)$ \\

MLMC (Lipschitz $\partial_m f$)        
& $\mathcal{O}(\log B_{\max})$ 
& $\mathcal{O}(\log B_{\max})$ 
& $\mathcal{O}(1/\sqrt{B_{\max}})$ \\

MLMC (smooth $\partial_m f$)           
& $\mathcal{O}(\log B_{\max})$ 
& $\mathcal{O}(\log B_{\max})$ 
& $\mathcal{O}(1/B_{\max})$ \\

\hline
\end{tabular}
\end{table}

Table~\ref{tab:estimator_comparison} highlights the fundamental
difficulty in concave multi-objective reinforcement learning. 
When only Lipschitz continuity of $\partial_m f$ is assumed,
the empirical estimator exhibits a $\mathcal{O}(B^{-1/2})$ bias,
which prevents policy-gradient methods from achieving the
optimal $\mathcal{O}(\epsilon^{-2})$ sample complexity, and
leads to the suboptimal $\mathcal{O}(\epsilon^{-4})$ sample complexity
observed in existing policy-gradient analyses \cite{10.1613/jair.1.13981}.

The MLMC estimator circumvents this limitation by effectively
simulating large-batch gradient estimates while requiring only
logarithmic sampling cost. Alternatively, when the scalarization
function satisfies the second-order smoothness assumption,
the leading-order bias term cancels, allowing the empirical
estimator to achieve the faster $\mathcal{O}(B^{-1})$ bias decay. These two mechanisms form the basis of the optimal sample
complexity guarantees established in Theorems~\ref{thm:mlmc}
and~\ref{thm:npg}.

\section{Conclusion}


We study \emph{concave-scalarized multi-objective reinforcement learning} in a
policy-gradient framework and establish optimal sample complexity
guarantees for this setting. A key challenge arises from the
nonlinearity of the scalarization function: estimating the return
vector from trajectories introduces bias in the policy-gradient
estimator, which fundamentally limits standard analyses. We show that this barrier can be overcome by controlling the bias of
the gradient estimator. In particular, Natural Policy Gradient (NPG)
combined with a multi-level Monte Carlo (MLMC) estimator achieves the
optimal $\cO(\epsilon^{-2})$ sample complexity without additional
assumptions. Moreover, when the scalarization is twice continuously
differentiable, we show that vanilla NPG attains the same rate through
a refined bias analysis.


\bibliography{references}
\bibliographystyle{abbrvnat}
\appendix
\newpage


\section{Proof of Lemma \ref{lem:general-framework}}
\label{app:general-framework}
We begin our analysis by restating the performance difference lemma for concave utilities, which serves as a foundational tool for our derivation.

\begin{lemma}[Performance difference lemma for concave utility]\label{lem:perf-difference}
The difference in performance between any two policies $\pi_\theta$ and $\pi_{\theta'}$ is bounded as follows:
\begin{equation}
    (1 - \gamma) \left[ f(\mathbf{J}^{\pi_\theta}) - f(\mathbf{J}^{\pi_{\theta'}}) \right] \le \sum_{m=1}^{M} \partial_m f(\mathbf{J}^{\pi_{\theta'}}) \mathbb{E}_{s \sim d_\rho^{\pi_\theta}} \mathbb{E}_{a \sim \pi_\theta(\cdot|s)} \left[ A_m^{\pi_{\theta'}}(s, a) \right].
\end{equation}
\end{lemma}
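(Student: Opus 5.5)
The plan is to combine concavity of $f$ with the classical (linear) performance difference lemma, applied objective by objective. First, by concavity of $f$ (Assumption~\ref{assump:concave}), and since $f$ is differentiable on $[0,\tfrac{1}{1-\gamma}]^M$ by Assumption~\ref{assump:smooth_f}, the first-order (supergradient) inequality gives, for any two points $x,y$ in this box,
\[
f(x) \le f(y) + \sum_{m=1}^M \partial_m f(y)\,(x_m - y_m).
\]
We take $x=\mathbf J^{\pi_\theta}$ and $y=\mathbf J^{\pi_{\theta'}}$; both lie in the box because the rewards are in $[0,1]$. This yields
\[
f(\mathbf J^{\pi_\theta}) - f(\mathbf J^{\pi_{\theta'}}) \le \sum_{m=1}^M \partial_m f(\mathbf J^{\pi_{\theta'}})\,\bigl(J_m^{\pi_\theta} - J_m^{\pi_{\theta'}}\bigr).
\]
The assumption is stated in Jensen form, so we first note that it is equivalent to ordinary concavity: apply it to a two-point random variable. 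The tangent-line inequality then follows from differentiability by the standard limit argument.

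Second, for each fixed $m$, we apply the standard performance difference lemma of Kakade and Langford to the single reward $r_m$:
\[
J_m^{\pi_\theta} - J_m^{\pi_{\theta'}} = \frac{1}{1-\gamma}\,\mathbb E_{s\sim d_\rho^{\pi_\theta}}\mathbb E_{a\sim\pi_\theta(\cdot|s)}\bigl[A_m^{\pi_{\theta'}}(s,a)\bigr].
\]
To keep the argument self-contained, we would briefly derive this identity. Write $V_m^{\pi_{\theta'}}(s_0)$ and telescope $\sum_t \gamma^t\bigl(r_m(s_t,a_t) + \gamma V_m^{\pi_{\theta'}}(s_{t+1}) - V_m^{\pi_{\theta'}}(s_t)\bigr)$ along trajectories of $\pi_\theta$. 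Then take conditional expectations, which turns each summand into $A_m^{\pi_{\theta'}}(s_t,a_t)$. Finally, use the definition of $d_\rho^{\pi_\theta}$ to convert the discounted sum into the factor $\frac{1}{1-\gamma}$ times an expectation under $\nu_\rho^{\pi_\theta}$. Boundedness of the rewards justifies exchanging the sums and expectations.

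Third, we substitute the identity into the concavity inequality and multiply by $(1-\gamma)>0$, which gives exactly the claimed bound. The coefficients $\partial_m f(\mathbf J^{\pi_{\theta'}})$ are deterministic constants, so they pass through the expectation unchanged.

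The only mildly delicate point is the first step, passing from the Jensen-form assumption to the tangent inequality at a boundary point of the box. If $y$ lies on the boundary, differentiability there is understood one-sidedly. This causes no problem, because the segment from $y$ to $x$ stays inside the convex box, so the directional derivative along it exists and equals $\langle \nabla f(y), x-y\rangle$. Everything else is routine.
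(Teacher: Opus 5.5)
Your proposal is correct and is the standard argument behind this lemma: the first-order concavity inequality $f(\mathbf J^{\pi_\theta}) - f(\mathbf J^{\pi_{\theta'}}) \le \sum_{m} \partial_m f(\mathbf J^{\pi_{\theta'}})\,(J_m^{\pi_\theta}-J_m^{\pi_{\theta'}})$, then the Kakade--Langford performance difference identity applied to each reward $r_m$, then multiplication by $1-\gamma$. The paper gives no proof of its own and simply imports the lemma from Lemma 8 of the cited JAIR work, which follows this same route, so your derivation (including the care at boundary points of the box) supplies exactly what the citation stands for.
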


This result is adapted from \cite{10.1613/jair.1.13981} (specifically Lemma 8). 

Next, we characterize the progress of the policy update by considering the expected change in the Kullback–Leibler (KL) divergence:
\begin{align}
&\mathbb{E}_{s\sim d_\rho^{\pi^*}}\!\Big[
\KL(\pi^*(\cdot| s)\|\pi_{\theta_k}(\cdot| s))
- \KL(\pi^*(\cdot| s)\|\pi_{\theta_{k+1}}(\cdot| s))
\Big] \nonumber\\
&= \mathbb{E}_{s\sim d_\rho^{\pi^*}}  \mathbb{E}_{a\sim \pi^*(\cdot| s)}
\Bigg[ \log \frac{\pi_{\theta_{k+1}}(a| s)}{\pi_{\theta_k}(a| s)} \Bigg] \nonumber\\
&\overset{(a)}{\ge}
\mathbb{E}_{s\sim d_\rho^{\pi^*}}  \mathbb{E}_{a\sim \pi^*(\cdot| s)}
\Big[ \nabla_\theta \log \pi_{\theta_k}(a| s)\cdot(\theta_{k+1}-\theta_k) \Big]
-\frac{G_2}{2}\|\theta_{k+1}-\theta_k\|^2 \nonumber\\
&= \alpha\mathbb{E}_{s\sim d_\rho^{\pi^*}}  \mathbb{E}_{a\sim \pi^*(\cdot| s)}
\Big[ \nabla_\theta \log \pi_{\theta_k}(a| s)\cdot \omega_k \Big]
-\frac{G_2\alpha^2}{2}\|\omega_k\|^2 \nonumber\\
&= \alpha\mathbb{E}_{s\sim d_\rho^{\pi^*}}  \mathbb{E}_{a\sim \pi^*(\cdot| s)}
\Big[ \nabla_\theta \log \pi_{\theta_k}(a| s)\cdot \omega^*_k \Big]
+\alpha\mathbb{E}_{s\sim d_\rho^{\pi^*}}  \mathbb{E}_{a\sim \pi^*(\cdot| s)}
\Big[ \nabla_\theta \log \pi_{\theta_k}(a| s)\cdot(\omega_k-\omega^*_k) \Big] \nonumber\\
&\hspace{3.2cm}
-\frac{G_2\alpha^2}{2}\|\omega_k\|^2 \nonumber\\
&= \alpha\big[f(\J^*)-f(\J^{\pi_{\theta_k}})\big]
+\alpha\mathbb{E}_{s\sim d_\rho^{\pi^*}}  \mathbb{E}_{a\sim \pi^*(\cdot| s)}
\Big[ \nabla_\theta \log \pi_{\theta_k}(a| s)\cdot \omega^*_k \Big]
-\alpha\big[f(\J^*)-f(\J^{\pi_{\theta_k}})\big] \nonumber\\
&\hspace{3.2cm}
+\alpha\mathbb{E}_{s\sim d_\rho^{\pi^*}}  \mathbb{E}_{a\sim \pi^*(\cdot| s)}
\Big[ \nabla_\theta \log \pi_{\theta_k}(a| s)\cdot(\omega_k-\omega^*_k) \Big]
-\frac{G_2 \alpha^2}{2}\|\omega_k\|^2 \nonumber\\
&\overset{(b)}{=}
\alpha\big[f(\J^*)-f(\J^{\pi_{\theta_k}})\big]
+\alpha\mathbb{E}_{s\sim d_\rho^{\pi^*}}  \mathbb{E}_{a\sim \pi^*(\cdot| s)}
\Big[ \nabla_\theta \log \pi_{\theta_k}(a| s)\cdot(\omega_k-\omega^*_k) \Big]
-\frac{G_2\alpha^2}{2}\|\omega_k\|^2\nonumber\\
&\hspace{2.4cm}
+\frac{\alpha}{1-\gamma}
\mathbb{E}_{s\sim d_\rho^{\pi^*}}  \mathbb{E}_{a\sim \pi^*(\cdot| s)}
\Bigg[
\nabla_\theta \log \pi_{\theta_k}(a| s)\cdot
\Bigg( (1-\gamma)\omega^*_k
-\sum_{m=1}^{M}\pd[m]f(\J^{\pi_{\theta_k}})
A_m^{\pi_{\theta_k}}(s,a) \Bigg)
\Bigg]  \nonumber\\
&\overset{(c)}{\ge}
\alpha\big[f(\J^{\pi^*})-f(\J^{\pi_{\theta_k}})\big]
+\alpha\mathbb{E}_{s\sim d_\rho^{\pi^*}}  \mathbb{E}_{a\sim \pi^*(\cdot| s)}
\Big[ \nabla_\theta \log \pi_{\theta_k}(a| s)\cdot(\omega_k-\omega^*_k) \Big]
-\frac{G_2\alpha^2}{2}\|\omega_k\|^2  \nonumber\\
&\hspace{1.2cm}
-\frac{\alpha}{1-\gamma}
\sqrt{
\mathbb{E}_{s\sim d_\rho^{\pi^*}}  \mathbb{E}_{a\sim \pi^*(\cdot| s)}
\Bigg[
\Bigg(
\nabla_\theta \log \pi_{\theta_k}(a| s)\cdot
\Bigg( (1-\gamma)\omega^*_k
-\sum_{m=1}^{M}\pd[m]f(\J^{\pi_{\theta_k}})
A_m^{\pi_{\theta_k}}(s,a) \Bigg)
\Bigg)^2
\Bigg]
}
\end{align}

where step $(a)$ holds by Assumption \ref{assump:score} and step $(b)$ holds by Lemma \ref{lem:perf-difference}.
Step $(c)$ uses the convexity of the function $f(x)=x^2$. Rearranging items, we have
\begin{align}
f(\J^*) - \E f(\J^{\pi_{\theta_k}})
\le\;&
\frac{\sqrt{\epsilon_{\mathrm{bias}}}}{1-\gamma}
+ G_1\|\E[\omega_k]-\omega^*_k\|
+ \frac{G_2\alpha}{2}\E\|\omega_k\|^2 \nonumber\\
&\quad
+\frac{1}{\alpha}
\E \mathbb{E}_{s\sim d_\rho^{\pi^*}}\!\Big[
\KL(\pi^*(\cdot| s)\|\pi_{\theta_k}(\cdot| s))
- \KL(\pi^*(\cdot| s)\|\pi_{\theta_{k+1}}(\cdot| s))
\Big].
\end{align}

Summing from $k=0$ to $K-1$ and dividing by $K$ concludes the proof of Lemma \ref{lem:general-framework}.

\section{Proof of Lemma \ref{lem:stationary}}
\label{app:stationary}
To analyze the impact of horizon truncation, we restate the following result from \cite{10.1613/jair.1.13981} (Lemma 6 and 7 therein), which characterizes the smoothness and gradient error of the objective functions.

\begin{lemma}\label{lem:J_prop}
Suppose Assumptions \ref{assump:smooth_f} and \ref{assump:score} hold. Both the infinite-horizon objective $f(\mathbf{J}^{\pi_\theta})$ and the truncated objective $f(\mathbf{J}_H^{\pi_\theta})$ are $L_J$-smooth in $\theta$, where
\begin{align}
L_J = \frac{MCG_2}{(1-\gamma)^2}.
\end{align}
Furthermore, the gradients of the original and truncated objectives satisfy
\begin{align}
\|\nabla_\theta f(\mathbf{J}^{\pi_\theta}) - \nabla_\theta f(\mathbf{J}_H^{\pi_\theta})\| \le \frac{MG_1\gamma^H}{(1-\gamma)^2} \left[ \sqrt{M}L_f \frac{1-\gamma^H - H\gamma^H(1-\gamma)}{1-\gamma} + C(1+H(1-\gamma)) \right].
\end{align}
\end{lemma}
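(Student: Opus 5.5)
The plan is to treat the two claims of Lemma~\ref{lem:J_prop} separately. In both, $\mathbf J^{\pi_\theta}$ and $\mathbf J_H^{\pi_\theta}$ differ only in whether the return is summed up to $\infty$ or up to $H-1$, so one argument with a generic horizon $T\in\{H,\infty\}$ handles both objectives at once.

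\emph{Smoothness with the stated constant.} The chain rule gives
\[
\nabla_\theta f(\mathbf J_T^{\pi_\theta})=\sum_{m=1}^M \partial_m f(\mathbf J_T^{\pi_\theta})\,\nabla_\theta J_{m,T}^{\pi_\theta}.
\]
I would compare two parameters $\theta,\theta'$ with $\Delta=\theta'-\theta$ through the quadratic-model inequality, not by differentiating twice. This avoids needing second derivatives of $f$, which is only $C^{1,1}$ under Assumption~\ref{assump:smooth_f}.

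The key step is to split the curvature into two sources.
\begin{itemize}[label={}]
\item The \emph{outer} curvature of $f$ is handled by Assumption~\ref{assump:concave}. Concavity gives
\[
f(\mathbf J_T^{\pi_{\theta'}})-f(\mathbf J_T^{\pi_\theta})\le \sum_m \partial_m f(\mathbf J_T^{\pi_\theta})\big(J_{m,T}^{\pi_{\theta'}}-J_{m,T}^{\pi_\theta}\big).
\]
This eliminates every term involving $L_f$ or $\nabla J_m\nabla J_n^\top$, with no remainder.
\item The \emph{inner} curvature of each $J_{m,T}$ is then the only thing left. I would control it by the single-objective REINFORCE Hessian bound $\|\nabla^2_\theta J_{m,T}^{\pi_\theta}\|\le G_2/(1-\gamma)^2$. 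This is the per-objective smoothness statement of the cited single-reward analysis in \cite{10.1613/jair.1.13981}, which I take as given there.
\end{itemize}
Writing $J_{m,T}^{\pi_{\theta'}}-J_{m,T}^{\pi_\theta}=\nabla J_{m,T}^{\pi_\theta}\cdot\Delta+r_m$ with $|r_m|\le \tfrac{G_2}{2(1-\gamma)^2}\|\Delta\|^2$, and using $|\partial_m f|\le C$, summing over the $M$ objectives yields
\[
f(\mathbf J_T^{\pi_{\theta'}})\le f(\mathbf J_T^{\pi_\theta})+\nabla_\theta f(\mathbf J_T^{\pi_\theta})\cdot\Delta+\frac{MCG_2}{2(1-\gamma)^2}\|\Delta\|^2 .
\]
This is exactly $L_J=MCG_2/(1-\gamma)^2$.

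The \textbf{main obstacle} is that this is only a one-sided quadratic bound, whereas an ascent argument needs the lower quadratic bound. Concavity gives the wrong direction for that. The reverse inequality would reintroduce an $L_f$ term through $\|\mathbf J'-\mathbf J\|^2$. So I would first check which side Lemma~\ref{lem:stationary} actually consumes. If the two-sided bound is needed, I would try to show that the $L_f$ contribution is dominated, using the same sign argument applied along the segment together with the Fisher-form structure of $\nabla J_m\nabla J_n^\top$. I would state explicitly that the constant is exact only for the one-sided version.

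\emph{Truncation error.} Here I would write
\[
\nabla f(\mathbf J)-\nabla f(\mathbf J_H)=\sum_m\big(\partial_m f(\mathbf J)-\partial_m f(\mathbf J_H)\big)\nabla J_m+\sum_m \partial_m f(\mathbf J_H)\big(\nabla J_m-\nabla J_{m,H}\big).
\]
\begin{itemize}[label={}]
\item For the first sum, use Lipschitzness of $\partial_m f$ together with $\|\mathbf J-\mathbf J_H\|\le\sqrt M\gamma^H/(1-\gamma)$ and $\|\nabla J_m\|\le G_1/(1-\gamma)^2$. A finer accounting, in which $\nabla J_m$ is expressed as the tail-weighted sum $\sum_t \gamma^t(t+1)G_1$, produces the bracketed factor $\frac{1-\gamma^H-H\gamma^H(1-\gamma)}{1-\gamma}$.
\item For the second sum, $C$ bounds $\partial_m f$. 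The gradient tail $\sum_{t\ge H}(t+1)\gamma^tG_1$ equals $\frac{G_1\gamma^H(1+H(1-\gamma))}{(1-\gamma)^2}$.
\end{itemize}
Summing over $m$ gives the displayed bound.
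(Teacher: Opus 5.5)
\textbf{The smoothness half has a genuine gap.} You correctly spotted the obstacle there, but the repair you sketch cannot work, so as it stands the proposal does not prove the lemma.

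The paper uses $L_J$-smoothness in Lemma~\ref{lem:stationary} (Appendix~\ref{app:stationary}) in the ascent form $f(\mathbf{J}^{\pi_{\theta'}})\ge f(\mathbf{J}^{\pi_\theta})+\nabla_\theta f(\mathbf{J}^{\pi_\theta})\cdot\Delta-\frac{L_J}{2}\|\Delta\|^2$. That is the lower quadratic bound, which is exactly the side your concavity step cannot give.

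In Hessian form, $\nabla_\theta^2 f(\mathbf{J}^{\pi_\theta})=\sum_m\partial_m f\,\nabla^2_\theta J_m+\sum_{m,n}\partial_{mn}f\,\nabla_\theta J_m\nabla_\theta J_n^\top$. Concavity lets you drop the second term from the upper model because it is negative semidefinite. For the same reason it works against the lower model. Its size is governed by $L_f$ and $\|\nabla_\theta J_m\|\le G_1/(1-\gamma)^2$ (up to $M^{3/2}L_fG_1^2/(1-\gamma)^4$), not by $C$ or $G_2$. Since $C$ and $L_f$ are independent constants, no sign or Fisher-structure argument can make it ``dominated'' by $MCG_2/(1-\gamma)^2$.

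Here is a concrete counterexample:
\begin{itemize}
\item Take $M=1$, a single-state MDP with two actions and rewards $1$ and $0$, and the softmax policy $\pi_\theta(a_1)=\sigma(\theta)$. Then $J(\theta)=\sigma(\theta)/(1-\gamma)$, $G_1=1$ and $G_2=1/4$.
\item Take the concave $f$ with $f'(x)=-C\tanh\big(L_f(x-x_0)/C\big)$ and $x_0=J(0)$. It satisfies Assumption~\ref{assump:smooth_f} with $|f'|\le C$.
\item Then $(f\circ J)''(0)=f''(x_0)J'(0)^2=-L_f/(16(1-\gamma)^2)$. This violates the lower bound with $L_J=C/(4(1-\gamma)^2)$ as soon as $L_f>4C$.
\end{itemize}
So the two-sided claim with the stated $L_J$ cannot be proved. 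A correct argument bounds the full Hessian and yields a constant of the form $MC\Lambda+M^{3/2}L_fG_1^2/(1-\gamma)^4$, where $\Lambda$ bounds $\|\nabla^2_\theta J_m\|$.

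The paper itself gives no derivation here; it imports the lemma from Lemmas 6--7 of \cite{10.1613/jair.1.13981}. The problem therefore sits in the stated constant, not in an idea you are missing. The one-sided bound you do prove is simply not what the lemma asserts, nor what the ascent step needs.

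\textbf{Two smaller points.}

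First, the per-objective bound $\|\nabla^2_\theta J_m\|\le G_2/(1-\gamma)^2$ that you take from the citation is not what the usual REINFORCE Hessian computation gives. Write $\psi_h=\nabla_\theta\log\pi_\theta(a_h|s_h)$ and $S_t=\sum_{h\le t}\psi_h$. Besides $\mathbb{E}\big[\sum_t\gamma^t r_m(s_t,a_t)\sum_{h\le t}\nabla_\theta\psi_h\big]$, there is the positive semidefinite term $\mathbb{E}\big[\sum_t\gamma^t r_m(s_t,a_t)S_tS_t^\top\big]$. Single-objective analyses bound this term through $G_1^2$, which adds $(1+\gamma)G_1^2/(1-\gamma)^3$. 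You can bring it under $G_2/(1-\gamma)^2$ using $r_m\le 1$, the martingale property of $S_t$, and $\mathbb{E}[\psi\psi^\top\mid s]=-\mathbb{E}[\nabla_\theta\psi\mid s]$. That argument gives $2G_2/(1-\gamma)^2$ in general, and $G_2/(1-\gamma)^2$ when, e.g., $\nabla^2_\theta\log\pi_\theta\preceq 0$ as for softmax. So even your one-sided constant needs this argument rather than a citation.

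Second, in the truncation bound your decomposition pairs $\partial_m f(\mathbf{J})-\partial_m f(\mathbf{J}_H)$ with the infinite-horizon $\nabla_\theta J_m$. Its bound is $G_1\sum_{t\ge 0}(t+1)\gamma^t=G_1/(1-\gamma)^2$, which does not produce the bracket. The bracketed factor $\frac{1-\gamma^H-H\gamma^H(1-\gamma)}{1-\gamma}$ equals $(1-\gamma)\sum_{t=0}^{H-1}(t+1)\gamma^t$, so it comes from $\|\nabla_\theta J_{m,H}\|$. Telescope the other way instead:
\[
\sum_m\big(\partial_m f(\mathbf{J})-\partial_m f(\mathbf{J}_H)\big)\nabla_\theta J_{m,H}+\sum_m\partial_m f(\mathbf{J})\big(\nabla_\theta J_m-\nabla_\theta J_{m,H}\big).
\]
Your remaining estimates then reproduce the displayed bound exactly: $\|\mathbf{J}-\mathbf{J}_H\|\le\sqrt{M}\gamma^H/(1-\gamma)$, $|\partial_m f|\le C$, and the tail $G_1\sum_{t\ge H}(t+1)\gamma^t$.
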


Using the $L_J$-smoothness of $f(\J^{\pi_\theta})$ with respect to $\theta$ (Lemma~\ref{lem:J_prop}), we obtain
\begin{align}
f(\J^{\pi_{\theta_{k+1}}})
&\ge f(\J^{\pi_{\theta_k}})
+ \left\langle \nabla_\theta f(\J^{\pi_{\theta_k}}),\, \theta_{k+1}-\theta_k \right\rangle
- \frac{L_J}{2}\|\theta_{k+1}-\theta_k\|^2 \nonumber\\
&= f(\J^{\pi_{\theta_k}})
+ \alpha \left\langle \nabla_\theta f(\J^{\pi_{\theta_k}}),\, \omega_k \right\rangle
- \frac{L_J\alpha^2}{2}\|\omega_k\|^2 \nonumber\\
&= f(\J^{\pi_{\theta_k}})
+ \alpha \left\langle \nabla_\theta f(\J^{\pi_{\theta_k}}),\, \omega_k-\omega_k^\ast+\omega_k^\ast \right\rangle
- \frac{L_J\alpha^2}{2}\left\|\omega_k-\omega_k^\ast+\omega_k^\ast\right\|^2 \nonumber\\
&\ge f(\J^{\pi_{\theta_k}})
+ \alpha \left\langle \nabla_\theta f(\J^{\pi_{\theta_k}}),\, \omega_k^\ast \right\rangle
+ \alpha \left\langle \nabla_\theta f(\J^{\pi_{\theta_k}}),\, \omega_k-\omega_k^\ast \right\rangle \nonumber\\
&\hspace{1.5cm}
- L_J\alpha^2\left(\|\omega_k-\omega_k^\ast\|^2+\|\omega_k^\ast\|^2\right) \nonumber\\
&\overset{(a)}{\ge}
f(\J^{\pi_{\theta_k}})
+ \alpha\mu \|\nabla_\theta f(\J^{\pi_{\theta_k}})\|^2
+ \alpha \left\langle \nabla_\theta f(\J^{\pi_{\theta_k}}),\, \omega_k-\omega_k^\ast \right\rangle \nonumber\\
&\hspace{1.5cm}
- L_J\alpha^2\|\omega_k-\omega_k^\ast\|^2
- L_J\alpha^2\|\omega_k^\ast\|^2 \nonumber\\
&\overset{(b)}{\ge}
f(\J^{\pi_{\theta_k}})
+ (\alpha\mu-\alpha^2L_JG_1^2)\|\nabla_\theta f(\J^{\pi_{\theta_k}})\|^2
+ \alpha \left\langle \nabla_\theta f(\J^{\pi_{\theta_k}}),\, \omega_k-\omega_k^\ast \right\rangle \nonumber\\
&\hspace{1.5cm}
- L_J\alpha^2\|\omega_k-\omega_k^\ast\|^2 .
\end{align}

Taking conditional expectation given $\theta_k$, we have
\begin{align}
\E\!\left[f(\J^{\pi_{\theta_{k+1}}}) | \theta_k\right]
&\ge
f(\J^{\pi_{\theta_k}})
+ (\alpha\mu-\alpha^2L_JG_1^2)\|\nabla_\theta f(\J^{\pi_{\theta_k}})\|^2 \nonumber\\
&\hspace{0.7cm}
+ \alpha \left\langle \nabla_\theta f(\J^{\pi_{\theta_k}}),\, \E[\omega_k|\theta_k]-\omega_k^\ast \right\rangle
- L_J\alpha^2 \E\!\left[\|\omega_k-\omega_k^\ast\|^2 | \theta_k\right] \nonumber\\
&\ge
f(\J^{\pi_{\theta_k}})
+ (\alpha\mu-\alpha^2L_JG_1^2)\|\nabla_\theta f(\J^{\pi_{\theta_k}})\|^2 \nonumber\\
&\hspace{0.7cm}
- \alpha\left(
c\|\nabla_\theta f(\J^{\pi_{\theta_k}})\|^2
+ \frac{1}{c}\|\E[\omega_k|\theta_k]-\omega_k^\ast\|^2
\right) \nonumber\\
&\hspace{0.7cm}
- L_J\alpha^2 \E\!\left[\|\omega_k-\omega_k^\ast\|^2 | \theta_k\right] \nonumber\\
&=
f(\J^{\pi_{\theta_k}})
+ (\alpha\mu-\alpha^2L_JG_1^2-c\alpha)\|\nabla_\theta f(\J^{\pi_{\theta_k}})\|^2 \nonumber\\
&\hspace{0.7cm}
- \frac{\alpha}{c}\|\E[\omega_k|\theta_k]-\omega_k^\ast\|^2
- L_J\alpha^2 \E\!\left[\|\omega_k-\omega_k^\ast\|^2 | \theta_k\right].
\end{align}

Here, step $(a)$ follows from Assumption \ref{assump:fnd}, and step $(b)$ follows from Assumption \ref{assump:score}.
The second inequality after taking conditional expectation follows from Young's inequality.
We obtain the following by choosing $c=\mu/2$ and taking expectation over $\theta_k$:
\begin{align}
\E\!\left[f(\J^{\pi_{\theta_{k+1}}}) \right]
&\ge
\E[f(\J^{\pi_{\theta_k}})]
+ \left(\frac{\alpha\mu}{2}-\alpha^2L_JG_1^2\right)\E\|\nabla_\theta f(\J^{\pi_{\theta_k}})\|^2 \nonumber\\
&\hspace{0.7cm}
- \frac{2\alpha}{\mu}\E\|\E[\omega_k|\theta_k]-\omega_k^\ast\|^2
- L_J\alpha^2 \E\!\left[\|\omega_k-\omega_k^\ast\|^2\right].
\end{align}
Finally, rearranging and taking average over $k=1,\cdots,K-1$ yields Lemma~\ref{lem:stationary}.

\section{Proof of Lemma \ref{lem:npg-error-bounds}}
\label{app:npg-error-bounds}

The Natural Policy Gradient (NPG) subroutine within our algorithm can be formally analyzed as an instance of a stochastic linear recursion. We consider a recursion of the following form:
\begin{align}\label{eq:omega_n}
\omega_{n+1} = \omega_n - \beta (\widehat{F}_n \omega_n - \widehat{g}_n),
\end{align}
where $\widehat{F}_n \in \mathbb{R}^{d \times d}$ and $\widehat{g}_n \in \mathbb{R}^d$ are noisy estimates of the matrix $F$ and the gradient vector $\nabla_\theta f(\mathbf{J}^{\pi_\theta})$, respectively, for $n \in \{0, \dots, N-1\}$. For simplicity of notation, we omit the superscript $k$ for the policy parameter $\theta_k$ within this subroutine analysis. Moreover, we will denote the Fisher information matrix $F(\theta_k)$ as simply $F$. We denote $\mathbb{E}_n[\cdot]$ as the conditional expectation given the history up to step $n$. We assume the following conditions hold for all $n$:
\newline

\noindent \textit{Conditions on Oracle Errors:}
\begin{itemize}
    \item $(A_1, A_2)$: $\mathbb{E}_n[\|\widehat{F}_n - F\|^2] \leq \sigma_F^2$ and $\|\mathbb{E}_n[\widehat{F}_n] - F\|^2 \leq \delta_F^2$.
    \item $(A_3, A_4)$: $\mathbb{E}_n[\|\widehat{g}_n - \nabla_\theta f(\mathbf{J}^{\pi_\theta})\|^2] \leq \sigma_g^2$ and $\|\mathbb{E}_n[\widehat{g}_n] - \nabla_\theta f(\mathbf{J}^{\pi_\theta})\|^2 \leq \delta_g^2$.
    \item $(A_5)$: $\|\mathbb{E}[\widehat{g}_n] - \nabla_\theta f(\mathbf{J}^{\pi_\theta})\|^2 \leq \bar{\delta}_g^2$, where $\bar{\delta}_g^2 \leq \delta_g^2$ by Jensen's inequality.
\end{itemize}

\noindent \textit{Regularity Conditions:}
\begin{itemize}
    \item $(A_6, A_7)$: $\|F\| \leq \Lambda_F$ and $\|\nabla_\theta f(\mathbf{J}^{\pi_\theta})\| \leq \Lambda_g$.
    \item $(A_8)$: $\omega^\top F \omega \geq \mu \|\omega\|^2$ for all $\omega \in \mathbb{R}^d$.
\end{itemize}

Let $\omega^\ast = F^\dagger \nabla_\theta f(\mathbf{J}^{\pi_\theta})$. We demonstrate that these conditions hold for our specific update rules. Under these requirements, we can apply Theorem 2 in \cite{ganesh2025regret} to characterize the convergence of the recursion in both mean-square and bias error, as follows:

\begin{theorem}
\label{thm_2}
Consider the recursion \eqref{eq:omega_n} and suppose $(A_1)-(A_8)$ hold. Let the step size be $\beta = \frac{\mu}{\Lambda_F}$ and assume the bias satisfies $\delta_F \leq \mu / 8$. Then, for any $N \geq 1$, the following bounds hold:
    \begin{align*}
    \mathbb{E}\left[\|\omega_N - \omega^\ast\|^2\right] \leq \mathcal{O} \Bigg( \exp\left(-\tfrac{N \mu^2}{4\Lambda_g}\right)R_0^2 + \frac{\sigma_F^2 \Lambda_g^2}{\mu^2 \Lambda_F} + \frac{\sigma_g^2}{\Lambda_F} + \frac{\delta_F^2 \Lambda_g^2}{\mu^4} + \frac{\delta_g^2}{\mu^2} \Bigg).
    \end{align*}
and
\begin{align*}
    \|\mathbb{E}[\omega_N] - \omega^\ast\|^2 \leq \mathcal{O} \Bigg( \exp\left(-\tfrac{N \mu^2}{4\Lambda_g}\right) R_0^2 + \frac{\delta_F^2 R_0^2}{\mu^2} + \frac{\delta_F^2 \Lambda_F^2}{\mu^4} + \frac{\bar{\delta}_g^2}{\mu^2} \Bigg),
    \end{align*}
where $R_0 = \|\omega_0 - \omega^\ast\|^2$.
\end{theorem}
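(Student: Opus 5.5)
The plan is to analyse the error $e_n := \omega_n - \omega^\ast$ directly. Since $F\omega^\ast = \nabla_\theta f(\mathbf{J}^{\pi_\theta})$ (by $(A_8)$, $F$ is invertible and $F^\dagger = F^{-1}$), the recursion \eqref{eq:omega_n} becomes
\begin{align*}
e_{n+1} = (I-\beta F)e_n - \beta(\widehat F_n - F)\omega_n + \beta\big(\widehat g_n - \nabla_\theta f(\mathbf{J}^{\pi_\theta})\big).
\end{align*}
With $\beta=\mu/\Lambda_F$, conditions $(A_6)$ and $(A_8)$ give $\mu I \preceq F \preceq \Lambda_F I$. Hence $\|I-\beta F\| \le 1-\beta\mu = 1-\mu^2/\Lambda_F$. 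This is the contraction factor, and iterating it $N$ times produces the exponential term. I will not try to reproduce the exact constant in the exponent, which the statement writes with $\Lambda_g$; I will track it only up to the $\mathcal{O}(\cdot)$.

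\textbf{Mean-square bound.} I would split each noise term into its conditional mean and a martingale-difference part:
\begin{align*}
\widehat F_n - F &= (\mathbb{E}_n\widehat F_n - F) + (\widehat F_n - \mathbb{E}_n\widehat F_n),\\
\widehat g_n - \nabla_\theta f(\mathbf{J}^{\pi_\theta}) &= \big(\mathbb{E}_n\widehat g_n - \nabla_\theta f(\mathbf{J}^{\pi_\theta})\big) + (\widehat g_n - \mathbb{E}_n\widehat g_n).
\end{align*}
Next I expand $\mathbb{E}_n\|e_{n+1}\|^2$. The cross term between $(I-\beta F)e_n$ and the zero-mean parts vanishes. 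The cross term with the bias parts is handled by Young's inequality with weight $\beta\mu/2$. This costs half the contraction and leaves terms of order $\beta\delta_F^2\|\omega_n\|^2/\mu + \beta\delta_g^2/\mu$. The martingale parts contribute $\beta^2(\sigma_F^2\|\omega_n\|^2 + \sigma_g^2)$. I then write $\|\omega_n\|^2 \le 2\|e_n\|^2 + 2\|\omega^\ast\|^2$ with $\|\omega^\ast\| \le \Lambda_g/\mu$. The condition $\delta_F\le \mu/8$, together with the smallness of $\beta\sigma_F^2$, lets the $\|e_n\|^2$ pieces be absorbed into the contraction. This yields
\begin{align*}
\mathbb{E}\|e_{n+1}\|^2 \le \Big(1-\tfrac{\beta\mu}{4}\Big)\mathbb{E}\|e_n\|^2 + c\,\beta\Big(\tfrac{\delta_F^2\Lambda_g^2}{\mu^3} + \tfrac{\delta_g^2}{\mu}\Big) + c\,\beta^2\Big(\tfrac{\sigma_F^2\Lambda_g^2}{\mu^2} + \sigma_g^2\Big).
\end{align*}
Unrolling, and using $\sum_j(1-\beta\mu/4)^j \le 4/(\beta\mu)$, gives the first bound: the $\sigma$ terms pick up the factor $\beta/\mu = 1/\Lambda_F$, and the $\delta$ terms pick up the factor $1/\mu^2$.

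\textbf{Bias bound.} Here I would take full expectations of the error identity. This gives
\begin{align*}
\mathbb{E}e_{n+1} = (I-\beta F)\mathbb{E}e_n - \beta\,\mathbb{E}\big[(\mathbb{E}_n\widehat F_n - F)\omega_n\big] + \beta\big(\mathbb{E}\widehat g_n - \nabla_\theta f(\mathbf{J}^{\pi_\theta})\big).
\end{align*}
The key point is that the martingale noise disappears completely. As a result, only $\bar\delta_g$ from $(A_5)$ appears, rather than $\sigma_g$ or even $\delta_g$. The $F$-bias term is bounded by $\beta\delta_F\,\mathbb{E}\|\omega_n\| \le \beta\delta_F\big(\mathbb{E}\|e_n\| + \|\omega^\ast\|\big)$. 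Unrolling with the norm contraction gives
\begin{align*}
\|\mathbb{E}e_N\| \le (1-\beta\mu)^N\|e_0\| + \sum_{j<N}\beta(1-\beta\mu)^{N-1-j}\big(\delta_F\,\mathbb{E}\|e_j\| + \delta_F\|\omega^\ast\| + \bar\delta_g\big).
\end{align*}
The geometric weights sum to at most $1/\mu$. This turns $\bar\delta_g$ and $\delta_F\|\omega^\ast\|$ into the terms $\bar\delta_g/\mu$ and $\delta_F\Lambda/\mu^2$.

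\textbf{Main obstacle.} I expect the hard step to be the coupling term $\delta_F\,\mathbb{E}\|e_j\|$ inside this sum. For it I would use a crude, noise-independent bound on $\mathbb{E}\|e_j\|$ of order $R_0$. This comes from the contraction together with $\delta_F\le\mu/8$: one can run the same unrolled argument for $\mathbb{E}\|e_j\|$ itself and close it with a Grönwall-type step, since $\delta_F/\mu\le 1/8$. That step produces the $\delta_F^2R_0^2/\mu^2$ term. The concern is to avoid pulling $\sigma_g$ back into the bias bound through $\mathbb{E}\|e_j\|$. If this does not work directly, I would fall back on bounding $\mathbb{E}\|e_j\|$ by the mean-square result and then check that the $\sigma$ contributions enter only multiplied by $\delta_F$, so that they are lower order. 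Squaring the resulting bound and applying $(a+b)^2\le 2a^2+2b^2$ then finishes the bias bound.
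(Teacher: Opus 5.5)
The paper does not prove this statement itself. It imports it as Theorem~2 of \cite{ganesh2025regret} and only verifies $(A_1)$--$(A_8)$, so a self-contained argument is a legitimate alternative. Your mean-square half is the standard argument and reproduces the stated terms, with exponent $N\beta\mu/4=N\mu^2/(4\Lambda_F)$; that is evidently what the statement's $\Lambda_g$ in the exponent should read.

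It does use two conditions that are not among $(A_1)$--$(A_8)$, and you should state them as added hypotheses. The first is $\beta\Lambda_F\le 1$, which you need for $\|I-\beta F\|\le 1-\beta\mu$. The second is $\beta\sigma_F^2\lesssim\mu$, which you need for the absorption step. The latter is genuinely necessary. Take $d=1$, $F=\mu$, $\nabla_\theta f(\mathbf J^{\pi_\theta})=0$, $\widehat g_n=0$, and $\widehat F_n=\mu+Z_n$ with $\mathbb{E}Z_n=0$, $\mathbb{E}Z_n^2=\sigma_F^2$. Then $\mathbb{E}e_{n+1}^2=\big((1-\beta\mu)^2+\beta^2\sigma_F^2\big)\mathbb{E}e_n^2$, which diverges once $\beta\sigma_F^2>2\mu-\beta\mu^2$.

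The genuine gap is in the bias bound, at the coupling term $\beta\delta_F\,\mathbb{E}\|e_j\|$ that you flag.

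\textbf{The primary plan fails.} A noise-independent bound $\mathbb{E}\|e_j\|=\mathcal{O}(\|e_0\|)$ does not exist. Take $d=1$, $F=\mu$, $\nabla_\theta f(\mathbf J^{\pi_\theta})=0$, $\omega_0=0$, and $\widehat g_n=\pm\sigma$ with equal probability. Then $e_1=\beta\widehat g_0$ and $\mathbb{E}|e_1|=\beta\sigma$, although $e_0=0$. The martingale noise cancels in $\mathbb{E}e_j$ but not in $\mathbb{E}\|e_j\|$, so the Gr\"onwall argument you describe cannot close.

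\textbf{The fallback does not give the statement.} Using $\mathbb{E}\|e_j\|\le(\mathbb{E}\|e_j\|^2)^{1/2}$ is valid, but it adds $\frac{\delta_F^2}{\mu^2}\big(\frac{\sigma_g^2}{\Lambda_F}+\frac{\sigma_F^2\Lambda_g^2}{\mu^2\Lambda_F}+\frac{\delta_g^2}{\mu^2}\big)$. Nothing in the stated bound dominates this. It would be harmless downstream, where $\delta_F=\mathcal{O}(\gamma^H)$, but it is not the claimed result. Under $(A_2)$ as literally written these terms are unavoidable. In the same example with $\widehat F_n=\mu+\delta_F\,\mathrm{sign}(\omega_n)$ and $\delta_F\le\mu/8$, one finds $\mathbb{E}\omega_2=-\beta^2\delta_F\sigma$. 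This is unbounded in $\sigma$, while the claimed right-hand side stays fixed.

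\textbf{The missing idea is structural.} In the algorithms, $\widehat F_n$ is computed from trajectories sampled afresh at inner step $n$. So $\bar F:=\mathbb{E}_n\widehat F_n$ is a fixed matrix given $\theta_k$, and $\mathbb{E}[(\widehat F_n-F)\omega_n]=(\bar F-F)\,\mathbb{E}\omega_n$. The mean error then obeys a closed, noise-free recursion:
\[
\mathbb{E}e_{n+1}=(I-\beta\bar F)\,\mathbb{E}e_n-\beta(\bar F-F)\omega^\ast+\beta\big(\mathbb{E}\widehat g_n-\nabla_\theta f(\mathbf J^{\pi_\theta})\big),
\qquad
\|I-\beta\bar F\|\le 1-\beta\mu+\beta\delta_F\le 1-\tfrac{7}{8}\beta\mu .
\]
This is exactly where $\delta_F\le\mu/8$ enters. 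Unrolling gives $\|\mathbb{E}e_N\|^2\lesssim e^{-7N\beta\mu/4}\|e_0\|^2+\delta_F^2\Lambda_g^2/\mu^4+\bar\delta_g^2/\mu^2$. There are no $\sigma$'s, no Gr\"onwall step, and no need for the $\delta_F^2R_0^2/\mu^2$ term.

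Your hedged ``$\delta_F\Lambda/\mu^2$'' must be resolved with $\Lambda_g$. The statement's $\Lambda_F$ there cannot be right, since $\|\omega^\ast\|$ is controlled only by $\Lambda_g/\mu$.
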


    Recall that $F$ and $\widehat{F}_n$ are given by:
    \begin{align}
        F &= (1-\gamma)\mathbb{E}_{\tau} \left[\sum_{t=0}^\infty\gamma^t \nabla_\theta \log \pi_{\theta_k}(a_t | s_t) \nabla_\theta \log \pi_{\theta_k}(a_t | s_t)^\top \right] \\
        \widehat{F} &= \frac{(1-\gamma)}{B}\sum_{b=1}^B\left[\sum_{t=0}^{H-1}\gamma^t \nabla_\theta \log \pi_{\theta_k}(a_t^b | s_t^b) \nabla_\theta \log \pi_{\theta_k}(a_t^b | s_t^b)^\top \right]
    \end{align}

    Under Assumption \ref{assump:score}, we have $\|\nabla_\theta \log \pi_{\theta_k}(a|s)\| \leq G_1$, as a result the spectral norm of $F$ is bounded as $\|F\| \leq G_1^2=:\Lambda_F$. The truncation error leads to a bias bound of:
    \begin{equation}
        \|\mathbb{E}[\widehat{F}]-F\| = \left\| (1-\gamma) \mathbb{E} \left[\sum_{t=H}^{\infty}\gamma^t \nabla_\theta \log \pi_{\theta_k} \nabla_\theta \log \pi_{\theta_k}^\top \right] \right\| \leq G_1^2 \gamma^H:=\delta_F^2
    \end{equation}
    which implies $\delta_F = G_1^2 \gamma^H$.

    We bound the variance of the FIM estimator using the bias-variance decomposition:
    \begin{equation}
        \mathbb{E}\|\widehat{F}- F\|^2 = \mathbb{E}\|\widehat{F} - \mathbb{E}[\widehat{F}]\|^2 + \|\mathbb{E}[\widehat{F}] - F\|^2 \label{eq:bias-variance}
    \end{equation}
    The second term is bounded by $\delta_F^2 = G_1^4 \gamma^{2H}$. For the first term, since $\widehat{F}(\theta_k)$ is the sample mean of $B$ i.i.d. trajectory estimates $\widehat{F}^{(b)}$, the variance scales by $1/B$:
    \begin{equation}
        \mathbb{E}\|\widehat{F} - \mathbb{E}[\widehat{F}]\|^2 \leq \frac{1}{B} \mathbb{E}\|\widehat{F}^{(b)}\|^2 \leq \frac{G_1^4}{B}
    \end{equation}
    Substituting these back into \eqref{eq:bias-variance} yields:
    \begin{equation}
       \mathbb{E}\|\widehat{F} - F\|^2 \leq G_1^4 \left( \frac{1}{B} + \gamma^{2H} \right):=\sigma_F^2
    \end{equation}

    Finally, we bound the gradient of the objective function:
    \begin{equation}
        \|\nabla_\theta f(\mathbf{J}^{\pi_{\theta_k}})\| \leq \frac{1}{1-\gamma}\left\| \sum_{m=1}^{M} \frac{\partial f}{\partial J_m} A_m^{\pi_{\theta_k}}(s,a) \nabla_\theta \log \pi_{\theta_k}(a|s) \right\| \leq \frac{CMG_1}{(1-\gamma)^2}=:\Lambda_g
    \end{equation}

Finally, applying Jensen's inequality yields $\delta_g^2 \leq \sigma^2_g$. The result then follows by substituting these bounds into Theorem \ref{thm_2}.

\begin{align*}
\mathbb{E}\!\left[\|\omega_N-\omega^\ast\|^2\right]
\le \mathcal{O}\Bigg(
&\exp\!\left(-\frac{N\mu^2(1-\gamma)^2}{4CMG_1}\right) R_0^2
+ \frac{C^2M^2G_1^4}{\mu^2(1-\gamma)^4}\left(\frac{1}{B}+\gamma^{2H}\right)
+ \frac{C^2M^2G_1^4}{\mu^4(1-\gamma)^4}\gamma^H \\
&\qquad\qquad
+ \left(\sigma_g^2\!\left(\frac{1}{G_1^2}+\frac{1}{\mu^2}\right)\right)
\Bigg).
\end{align*}
and
\begin{align*}
\|\mathbb{E}[\omega_N]-\omega^\ast\|^2
\le \mathcal{O}\Bigg(
&\exp\!\left(-\frac{N\mu^2(1-\gamma)^2}{4CMG_1}\right) R_0^2
+ \frac{G_1^2\gamma^H}{\mu^2}\,R_0^2
+ \frac{G_1^6}{\mu^4}\,\gamma^H +\frac{\bar{\delta}_g^2}{\mu^2}
\Bigg).
\end{align*}
where  $R_0 = \|\omega_0 - \omega^\ast\|^2$, $\Lambda_g=  \frac{CMG_1}{(1-\gamma)^2}$, $\sigma^2_F=G_1^4 \left( \frac{1}{B} + \gamma^{2H} \right)$ and $\Lambda_F= G_1^2$.

\section{Proof of Lemma \ref{lem:grad-bias-variance}}
\label{app:grad-bias-variance}
Recall that the gradient estimator is defined as:
\begin{align}
g(\tau_i^H, \widehat{\mathbf{J}}_H^{\pi_\theta} | \theta) 
&:= \sum_{t=0}^{H-1} \nabla_\theta \log \pi_\theta(a_t^i | s_t^i) \left( \sum_{m=1}^{M} \partial_m f(\widehat{\mathbf{J}}_H^{\pi_\theta}) \sum_{h=t}^{H-1} \gamma^h r_m(s_h^i, a_h^i) \right).
\end{align}
The expectation of the estimator with respect to the truncated horizon objective is given by:
\begin{align}
\mathbb{E}[g(\tau_i^H, \mathbf{J}^{\pi_\theta}_H | \theta)] 
&= \mathbb{E} \left[ \sum_{t=0}^{H-1} \nabla_\theta \log \pi_\theta(a_t^i | s_t^i) \left( \sum_{m=1}^{M} \partial_m f(\mathbf{J}_H^{\pi_\theta}) \sum_{h=t}^{H-1} \gamma^h r_m(s_h^i, a_h^i) \right) \right].
\end{align}
Invoking Lemma~\ref{lem:J_prop}, we observe that the truncation error vanishes exponentially in $H$:
\begin{align}
\| \mathbb{E}[g(\tau_i^H, \mathbf{J}^{\pi_\theta}_H | \theta)] - \nabla_\theta f(\mathbf{J}^{\pi_\theta}) \| \leq \mathcal{O}(\gamma^H).
\end{align}

To bound the total bias, we decompose the error into terms reflecting the estimation error for the truncated objective and the horizon truncation error:
\begin{align}
    \mathbb{E}\left[g(\tau_i^H, \widehat{\mathbf{J}}^{\pi_\theta}_H | \theta) \right] - \nabla_\theta f(\mathbf{J}^{\pi_{\theta}}) 
    &= \mathbb{E}\left[g(\tau_i^H, \widehat{\mathbf{J}}^{\pi_\theta}_H | \theta) - g(\tau_i^H, \mathbf{J}^{\pi_\theta}_H | \theta) \right] \nonumber \\
    &\quad + \mathbb{E}\left[g(\tau_i^H, \mathbf{J}^{\pi_\theta}_H | \theta) \right] - \nabla_\theta f(\mathbf{J}^{\pi_{\theta}}).
\end{align}
For the first term on the right-hand side, we have:
\begin{align}
    &\left\| \mathbb{E}[g(\tau_i^H, \mathbf{J}^{\pi_\theta}_H | \theta)] - \mathbb{E}[g(\tau_i^H, \widehat{\mathbf{J}}^{\pi_\theta}_H | \theta)] \right\| \nonumber \\
    &= \left\| \mathbb{E} \left[ \sum_{t=0}^{H-1} \nabla_\theta \log \pi_\theta(a_t^i | s_t^i) \sum_{m=1}^{M} \left( \partial_m f(\mathbf{J}_H^{\pi_\theta}) - \partial_m f(\widehat{\mathbf{J}}_H^{\pi_\theta}) \right) \hat{Q}_{m,t} \right] \right\|,
\end{align}
where $\hat{Q}_{m,t} = \sum_{h=t}^{H-1} \gamma^h r_m(s_h^i, a_h^i)$. If the trajectories used for $\widehat{\mathbf{J}}_H^{\pi_\theta}$ are independent of $\tau_i^H$, the expectation factors, allowing us to bound the terms separately.
\begin{align}
&\|\E[g(\tau_i^H,\J^{\pi_\theta}_H | \theta)]-\E[g(\tau_i^H,\widehat{\J}^{\pi_\theta}_H | \theta)]\| \nonumber\\
&=
\norm{\E \left[\sum_{t=0}^{H-1}
\nabla_\theta \log \pi_\theta(a_t^i | s_t^i)\,
\Bigg(
\sum_{m=1}^{M}
\left(\pd[m] f\!\left(\J_H^{\pi_\theta}\right)-\pd[m] f\!\left(\widehat{\J}_H^{\pi_\theta}\right)\right)
\sum_{h=t}^{H-1} \gamma^h r_m(s_h^i,a_h^i)
\Bigg)\right]}\nonumber\\
&=
\norm{\sum_{t=0}^{H-1} \sum_{m=1}^{M} \E \left[
\nabla_\theta \log \pi_\theta(a_t^i | s_t^i) \left(\sum_{h=t}^{H-1} \gamma^h r_m(s_h^i,a_h^i)\right)
\left(\pd[m] f\!\left(\J_H^{\pi_\theta}\right)-\pd[m] f\!\left(\widehat{\J}_H^{\pi_\theta}\right)\right)
\right]}\nonumber\\
&=
\norm{\sum_{t=0}^{H-1} \sum_{m=1}^{M} \E \left[
\nabla_\theta \log \pi_\theta(a_t^i | s_t^i) \left(\sum_{h=t}^{H-1} \gamma^h r_m(s_h^i,a_h^i)\right)\right]\E
\left[\pd[m] f\!\left(\J_H^{\pi_\theta}\right)-\pd[m] f\!\left(\widehat{\J}_H^{\pi_\theta}\right)\right]
}\nonumber\\
&\leq
\sum_{t=0}^{H-1} \sum_{m=1}^{M} \norm{\E \left[
\nabla_\theta \log \pi_\theta(a_t^i | s_t^i) \left(\sum_{h=t}^{H-1} \gamma^h r_m(s_h^i,a_h^i)\right)\right]}\cdot \Big|
\pd[m] f\!\left(\J_H^{\pi_\theta}\right)-\E \left[\pd[m] f\!\left(\widehat{\J}_H^{\pi_\theta}\right)
\right]\Big|
\end{align}

Using boundedness of the score function and rewards, we obtain the following bound:
\begin{align}
    \norm{\E \left[
\nabla_\theta \log \pi_\theta(a_t^i | s_t^i) \left(\sum_{h=t}^{H-1} \gamma^h r_m(s_h^i,a_h^i)\right)\right]} &\leq 
\|\nabla_\theta \log \pi_\theta(a_t^i | s_t^i)\|\Bigg| \sum_{h=t}^{H-1} \gamma^h r_m(s_h^i,a_h^i)\Bigg| \nonumber\\
&\leq G_1 \sum_{h=t}^\infty \gamma^h = \frac{G_1\gamma^t }{1-\gamma }
\end{align}
Thus, we obtain
\begin{align}
\|\E[g(\tau_i^H,\J^{\pi_\theta}_H | \theta)]-\E[g(\tau_i^H,\widehat{\J}^{\pi_\theta}_H | \theta)]\|&\leq \sum_{t=1}^{H-1}\frac{G_1\gamma^t}{1-\gamma} \sum_{m=1}^{M}  \Big|
\pd[m] f\!\left(\J_H^{\pi_\theta}\right)-\E \left[\pd[m] f\!\left(\widehat{\J}_H^{\pi_\theta}\right)
\right]\Big|\nonumber\\
&\leq \frac{G_1}{(1-\gamma)^2} \sum_{m=1}^{M}  \Big|
\pd[m] f\!\left(\J_H^{\pi_\theta}\right)-\E \left[\pd[m] f\!\left(\widehat{\J}_H^{\pi_\theta}\right)
\right]\Big|
\end{align}

Similarly, we can bound the second order expected difference as follows
\begin{align}
&\E\|g(\tau_i^H,\J^{\pi_\theta}_H | \theta)-g(\tau_i^H,\widehat{\J}^{\pi_\theta}_H | \theta)\|^2\nonumber\\
&=
\E\norm{\sum_{t=0}^{H-1}
\nabla_\theta \log \pi_\theta(a_t^i | s_t^i)\,
\Bigg(
\sum_{m=1}^{M}
\left(\pd[m] f\!\left(\J_H^{\pi_\theta}\right)-\pd[m] f\!\left(\widehat{\J}_H^{\pi_\theta}\right)\right)
\sum_{h=t}^{H-1} \gamma^h r_m(s_h^i,a_h^i)
\Bigg)}^2\nonumber\\
&=
\E\norm{\sum_{t=0}^{H-1} \sum_{m=1}^{M} 
\nabla_\theta \log \pi_\theta(a_t^i | s_t^i) \left(\sum_{h=t}^{H-1} \gamma^h r_m(s_h^i,a_h^i)\right)
\left(\pd[m] f\!\left(\J_H^{\pi_\theta}\right)-\pd[m] f\!\left(\widehat{\J}_H^{\pi_\theta}\right)\right)
}^2
\end{align}
Note that
\begin{align}
&\norm{\sum_{t=0}^{H-1} \sum_{m=1}^{M} 
\nabla_\theta \log \pi_\theta(a_t^i | s_t^i) \left(\sum_{h=t}^{H-1} \gamma^h r_m(s_h^i,a_h^i)\right)
\left(\pd[m] f\!\left(\J_H^{\pi_\theta}\right)-\pd[m] f\!\left(\widehat{\J}_H^{\pi_\theta}\right)\right)
}\nonumber\\
&\leq \sum_{t=0}^{H-1} \sum_{m=1}^{M} 
\|\nabla_\theta \log \pi_\theta(a_t^i | s_t^i) \|\left|\sum_{h=t}^{H-1} \gamma^h r_m(s_h^i,a_h^i)\right|
\left|\pd[m] f\!\left(\J_H^{\pi_\theta}\right)-\pd[m] f\!\left(\widehat{\J}_H^{\pi_\theta}\right)\right|\nonumber\\
&\leq \sum_{t=0}^{H-1} \sum_{m=1}^{M} 
\frac{G_1\gamma^t}{1-\gamma}
\left|\pd[m] f\!\left(\J_H^{\pi_\theta}\right)-\pd[m] f\!\left(\widehat{\J}_H^{\pi_\theta}\right)\right|\nonumber\\
&\leq  \frac{G_1}{(1-\gamma)^2} \sum_{m=1}^{M} 
\left|\pd[m] f\!\left(\J_H^{\pi_\theta}\right)-\pd[m] f\!\left(\widehat{\J}_H^{\pi_\theta}\right)\right|
\end{align}
Thus, 
\begin{align}
\E\|g(\tau_i^H,\J^{\pi_\theta}_H | \theta)-g(\tau_i^H,\widehat{\J}^{\pi_\theta}_H | \theta)\|^2&\leq
\E \left( \frac{G_1}{(1-\gamma)^2} \sum_{m=1}^{M} 
\left|\pd[m] f\!\left(\J_H^{\pi_\theta}\right)-\pd[m] f\!\left(\widehat{\J}_H^{\pi_\theta}\right)\right|\right)^2\nonumber\\
&\leq
\frac{G_1^2M}{(1-\gamma)^4} \sum_{m=1}^{M}  \E 
\left|\pd[m] f\!\left(\J_H^{\pi_\theta}\right)-\pd[m] f\!\left(\widehat{\J}_H^{\pi_\theta}\right)\right|^2
\end{align}

\section{Proof of Lemma \ref{lem:mlmc} (iii)}
\label{app:mlmc}

For part $(iii)$ we compute a second-order moment bound: \begin{align}
\E \abs{\pd[m] f\left(\widehat{\mathbf{J}}_{H,\mathrm{MLMC}}^{\pi_\theta}\right)}^2 &= \E \abs{\pd[m] f\left(\widehat{\mathbf{J}}_{H,1}^{\pi_\theta}\right)
+
\mathbf{1}_{\{2^Q\le B_{\max}\}}\,2^{Q}
\Big(
\pd[m] f\left(\widehat{\mathbf{J}}_{H,2^{Q}}^{\pi_\theta}\right)
-
\pd[m] f\left(\widehat{\mathbf{J}}_{H,2^{Q-1}}^{\pi_\theta}\right)
\Big)}^2\nonumber\\
&\leq 2\E \abs{\pd[m] f\left(\widehat{\mathbf{J}}_{H,1}^{\pi_\theta}\right)}^2
+
2\E\abs{\mathbf{1}_{\{2^Q\le B_{\max}\}}\,2^{Q}
\Big(
\pd[m] f\left(\widehat{\mathbf{J}}_{H,2^{Q}}^{\pi_\theta}\right)
-
\pd[m] f\left(\widehat{\mathbf{J}}_{H,2^{Q-1}}^{\pi_\theta}\right)
\Big)}^2\nonumber
\end{align}
Note that
\begin{align}
   & \E\abs{\mathbf{1}_{\{2^Q\le B_{\max}\}}\,2^{Q}
\Big(
\pd[m] f\left(\widehat{\mathbf{J}}_{H,2^{Q}}^{\pi_\theta}\right)
-
\pd[m] f\left(\widehat{\mathbf{J}}_{H,2^{Q-1}}^{\pi_\theta}\right)
\Big)}^2 \nonumber\\
&=\sum_{q=1}^{\lfloor\log_2 B_{\max}\rfloor} \Pr(Q=q) 4^{q}
\E\Big(
\pd[m] f\left(\widehat{\mathbf{J}}_{H,2^{q}}^{\pi_\theta}\right)
-
\pd[m] f\left(\widehat{\mathbf{J}}_{H,2^{q-1}}^{\pi_\theta}\right)
\Big)^2\nonumber\\
&\leq \sum_{q=1}^{\lfloor\log_2 B_{\max}\rfloor}  2^{q}
L_f^2 \E \norm{\widehat{\mathbf{J}}_{H,2^{q}}^{\pi_\theta}
-
\widehat{\mathbf{J}}_{H,2^{q-1}}^{\pi_\theta}}^2\nonumber\\
&\leq \sum_{q=1}^{\lfloor\log_2 B_{\max}\rfloor}  2^{q+1}
L_f^2 \left(\E \norm{\widehat{\mathbf{J}}_{H,2^{q}}^{\pi_\theta}-\J_{H}^{\pi_\theta}}^2
+
\E\norm{\J_{H}^{\pi_\theta} - \widehat{\mathbf{J}}_{H,2^{q-1}}^{\pi_\theta}}^2\right)\nonumber\\
&\leq \sum_{q=1}^{\lfloor\log_2 B_{\max}\rfloor}  2^{q+1}
L_f^2 \left(\frac{M}{(1-\gamma)^22^q}
+
\frac{M}{(1-\gamma)^22^{q-1}}\right) \leq \frac{6 ML_f^2 \lfloor\log_2 B_{\max}\rfloor }{(1-\gamma)^2} 
\end{align}
This yields
\begin{align}
    \mathbb{E} \left\| \nabla_\theta f(\mathbf{J}^{\pi_{\theta}}) - g(\tau_i^H, \widehat{\mathbf{J}}^{\pi_\theta}_H | \theta) \right\|^2 
    &\leq \mathcal{O} \left( \frac{L_f^2G_1^2 M^3 \lfloor\log_2 B_{\max}\rfloor}{(1-\gamma)^6} \right),
\end{align}
and concludes the proof for Lemma \ref{lem:mlmc}.

\section{Proof of Lemma \ref{lem:vanilla}}
\label{app:vanilla}

Since $\partial_m f$ is $L_{2,f}$-smooth for each $m \in [M]$ by Assumption \ref{assump:second_smooth_f}, we have:
\begin{align}
    -\frac{L_{2,f}}{2}\|\widehat{\mathbf{J}}^{\pi_\theta}_{H,B} - \mathbf{J}^{\pi_\theta}_{H}\|^2 
    \leq \partial_m f(\widehat{\mathbf{J}}^{\pi_\theta}_{H,B}) - \partial_m f(\mathbf{J}^{\pi_\theta}_{H}) - \nabla \partial_m f(\mathbf{J}^{\pi_\theta}_{H})^\top (\widehat{\mathbf{J}}^{\pi_\theta}_{H,B} - \mathbf{J}^{\pi_\theta}_{H}) 
    \leq \frac{L_{2,f}}{2}\|\widehat{\mathbf{J}}^{\pi_\theta}_{H,B} - \mathbf{J}^{\pi_\theta}_{H}\|^2.\nonumber
\end{align}
Taking the expectation and utilizing the fact that the empirical return estimate is unbiased, i.e., $\mathbb{E}[\widehat{\mathbf{J}}^{\pi_\theta}_{H,B}] = \mathbf{J}^{\pi_\theta}_{H}$, the linear term $\mathbb{E}[\nabla \partial_m f(\mathbf{J}^{\pi_\theta}_{H})^\top (\widehat{\mathbf{J}}^{\pi_\theta}_{H,B} - \mathbf{J}^{\pi_\theta}_{H})]$ vanishes. This yields:
\begin{align}
    \left| \mathbb{E}[\partial_m f(\widehat{\mathbf{J}}^{\pi_\theta}_{H,B})] - \partial_m f(\mathbf{J}^{\pi_\theta}_{H}) \right| 
    \leq \frac{L_{2,f}}{2} \mathbb{E} \|\widehat{\mathbf{J}}^{\pi_\theta}_{H,B} - \mathbf{J}^{\pi_\theta}_{H}\|^2.
\end{align}
By Lemma \ref{lem:hat-J-variance}, we have $\mathbb{E} \|\widehat{\mathbf{J}}^{\pi_\theta}_{H,B} - \mathbf{J}^{\pi_\theta}_{H}\|^2 \leq \frac{M}{(1-\gamma)^2 B}$. Substituting this into the inequality above, we obtain:
\begin{equation}
    \left| \mathbb{E}[\partial_m f(\widehat{\mathbf{J}}^{\pi_\theta}_{H,B})] - \partial_m f(\mathbf{J}^{\pi_\theta}_{H}) \right| \leq \mathcal{O}\left( \frac{L_{2,f} M}{(1-\gamma)^2 B} \right).
\end{equation}
Applying this bound to Lemma \ref{lem:grad-bias-variance}$(i)$ concludes the proof of Lemma \ref{lem:vanilla}$(i)$. For part $(ii)$, observe that
\begin{align*}
    \E\left\| \nabla_\theta f(\mathbf{J}^{\pi_{\theta}}) - g(\tau_i^H, \widehat{\mathbf{J}}^{\pi_\theta}_{H,B} | \theta)\right\|^2 &\leq \mathcal{O} \left( \frac{G_1^2 M}{(1-\gamma)^4} \sum_{m=1}^{M} \mathbb{E} \left| \partial_m f(\mathbf{J}_H^{\pi_\theta}) - \partial_m f(\widehat{\mathbf{J}}_{H,B}^{\pi_\theta}) \right|^2 \right)\\
&\leq \mathcal{O} \left( \frac{L_f^2G_1^2 M^2}{(1-\gamma)^4}  \mathbb{E} \norm{ \mathbf{J}_H^{\pi_\theta} - \widehat{\mathbf{J}}_{H,B}^{\pi_\theta} }^2\right)\\
    &\leq \mathcal{O} \left( \frac{G_1^2 M^{3}L_f^2}{(1-\gamma)^6 B} \right),
\end{align*}
which yields Lemma \ref{lem:vanilla} $(ii)$.

\section{Proof of Theorems \ref{thm:mlmc} and \ref{thm:npg}}
\label{app:final}

\subsection{Overall Proof Framework}

We begin with the error decomposition given by Lemma \ref{lem:general-framework}.  
For the natural policy gradient update
\[
\theta_{k+1} = \theta_k + \alpha \omega_k,
\]
we have

\begin{align}
f(\J^{\pi^\ast}) - \frac{1}{K} \sum_{k=0}^{K-1} \mathbb{E}[f(\J^{\pi_{\theta_k}})]
\le& 
\frac{\sqrt{\varepsilon_{\mathrm{bias}}}}{1-\gamma}
+
\frac{G_1}{K} \sum_{k=0}^{K-1} \|\mathbb{E}[\omega_k] - \omega_k^\ast\|
+
\frac{G_2\alpha}{2K} \sum_{k=0}^{K-1} \mathbb{E}[\|\omega_k\|^2]
\nonumber \\
&+
\frac{1}{\alpha K}
\mathbb{E}_{s \sim d_\rho^{\pi^\ast}}
\left[
\mathrm{KL}(\pi^\ast(\cdot | s) \| \pi_{\theta_0}(\cdot | s))
\right].\label{eq:framework1}
\end{align}

To control $\mathbb{E}\|\omega_k\|^2$ we use

\[
\omega_k=\omega_k^\ast+(\omega_k-\omega_k^\ast)
\]

which implies

\begin{align}
\|\omega_k\|^2
\le
2\|\omega_k^\ast\|^2
+
2\|\omega_k-\omega_k^\ast\|^2 .
\end{align}

From Assumption \ref{assump:fnd},

\begin{align}
\|\omega_k^\ast\|^2
\le
\mu
\|
\nabla_\theta f(\J^{\pi_{\theta_k}})
\|^2 .
\end{align}

Thus

\begin{align}
\|\omega_k\|^2
\le
2\mu
\|
\nabla_\theta f(\J^{\pi_{\theta_k}})
\|^2
+
2
\|
\omega_k-\omega_k^\ast
\|^2 .
\label{eq:omega_bound}
\end{align}

Substituting \eqref{eq:omega_bound} into \eqref{eq:framework1} gives

\begin{align}
&f(\J^{\pi^\ast}) -
\frac{1}{K}\sum_{k=0}^{K-1}\mathbb{E}[f(\J^{\pi_{\theta_k}})]
\le
\frac{\sqrt{\varepsilon_{\mathrm{bias}}}}{1-\gamma}
+
\frac{G_1}{K}\sum_{k=0}^{K-1}\|\mathbb{E}[\omega_k]-\omega_k^\ast\|\nonumber\\
&\quad+
\frac{G_2\alpha\mu}{K}\sum_{k=0}^{K-1}
\mathbb{E}
\|
\nabla_\theta f(\J^{\pi_{\theta_k}})
\|^2
+
\frac{G_2\alpha}{K}\sum_{k=0}^{K-1}
\mathbb{E}
\|
\omega_k-\omega_k^\ast
\|^2
+
\frac{1}{\alpha K}
\mathbb{E}_{s\sim d_\rho^{\pi^\ast}}
\mathrm{KL}(\pi^\ast||\pi_{\theta_0}).
\label{eq:framework2}
\end{align}

From Lemma \ref{lem:stationary}, we have
\begin{align}
\Big(\tfrac{\alpha\mu}{2}-\alpha^2 L_J G^2\Big)\,\frac{1}{K}\sum_{k=0}^{K-1}\|\nabla_\theta f(\J^{\pi_{\theta_k}})\|^2
\le\;& \frac{\mathbb{E}\!\left[f(\J^{\pi_{\theta_{K}}}) - f(\J^{\pi_{\theta_{0}}})\right] }{K}\nonumber \\
&+ \frac{2\alpha}{\mu K}\sum_{k=0}^{K-1}\|\mathbb{E}[\omega_k]-\omega_k^\ast\|^2
+ \frac{L_J\alpha^2}{K}\sum_{k=0}^{K-1}\mathbb{E}\!\left[\|\omega_k-\omega_k^\ast\|^2\right].
\end{align}
Assuming $\alpha \le \frac{\mu}{4L_J G^2}$ (so that $\tfrac{\alpha\mu}{2}-\alpha^2 L_J G^2 \ge \tfrac{\alpha\mu}{4}$),
we can divide both sides to obtain
\begin{align}
\frac{1}{K}\sum_{k=0}^{K-1}\|\nabla_\theta f(\J^{\pi_{\theta_k}})\|^2
\le\;& \frac{4}{\alpha\mu K}\,\mathbb{E}\!\left[f(\J^{\pi_{\theta_{K}}}) - f(\J^{\pi_{\theta_{0}}})\right] \nonumber \\
&+ \frac{8}{\mu^2 K}\sum_{k=0}^{K-1}\|\mathbb{E}[\omega_k]-\omega_k^\ast\|^2
+ \frac{4L_J\alpha}{\mu K}\sum_{k=0}^{K-1}\mathbb{E}\!\left[\|\omega_k-\omega_k^\ast\|^2\right].\label{eq:gradbound}
\end{align}

Substituting \eqref{eq:gradbound} into \eqref{eq:framework2} yields the master bound

\begin{align}
f(\J^{\pi^\ast}) - \frac{1}{K} \sum_{k=0}^{K-1} \mathbb{E}[f(\J^{\pi_{\theta_k}})]
\le\;& \frac{\sqrt{\varepsilon_{\mathrm{bias}}}}{1-\gamma}
+ \frac{G_1}{K} \sum_{k=0}^{K-1} \|\mathbb{E}[\omega_k] - \omega_k^\ast\|
+ \frac{1}{\alpha K} \mathbb{E}_{s \sim d_\rho^{\pi^\ast}} \!\left[ \mathrm{KL}(\pi^\ast(\cdot | s) \| \pi_{\theta_0}(\cdot | s)) \right] \nonumber\\
&\; + 4G_2\,\frac{\mathbb{E}\!\left[f(\J^{\pi_{\theta_{K}}}) - f(\J^{\pi_{\theta_{0}}})\right]}{K}
+ \frac{8G_2\alpha}{\mu K}\sum_{k=0}^{K-1}\|\mathbb{E}[\omega_k]-\omega_k^\ast\|^2 \nonumber\\
&\; + \left(\frac{G_2\alpha}{K} + \frac{4G_2L_J\alpha^2}{K}\right)\sum_{k=0}^{K-1}\mathbb{E}\!\left[\|\omega_k-\omega_k^\ast\|^2\right].
\label{eq:master}
\end{align}

The remaining task is to bound the terms
$\|\mathbb{E}[\omega_k]-\omega_k^\ast\|$ and
$\mathbb{E}\|\omega_k-\omega_k^\ast\|^2$.

From Lemma \ref{lem:npg-error-bounds},

\begin{align}
\mathbb{E}\|\omega_k-\omega_k^\ast\|^2
\le
\mathcal{O}(T_1+T_2+T_3+T_4),
\end{align}

with

\begin{align}
T_1 &= 
\exp\!\left(
-\frac{N\mu^2(1-\gamma)^2}{4CMG_1}
\right)R_0^2,
&\qquad
T_2 &=
\frac{C^2M^2G_1^4}{\mu^2(1-\gamma)^4}
\left(\frac{1}{B}+\gamma^{2H}\right),
\\
T_3 &=
\frac{C^2M^2G_1^4}{\mu^4(1-\gamma)^4}\gamma^H,
&\qquad
T_4 &=
\sigma_g^2\!\left(\frac{1}{G_1^2}+\frac{1}{\mu^2}\right).
\end{align}

Similarly

\begin{align}
\|
\mathbb{E}[\omega_k]-\omega_k^\ast
\|^2
\le
\mathcal{O}(S_1+S_2+S_3+S_4),
\end{align}

where

\begin{align}
S_1 &= T_1,
&\qquad
S_2 &= \frac{G_1^2}{\mu^2}\gamma^H R_0^2,
&\qquad
S_3 &= \frac{G_1^6}{\mu^4}\gamma^H,
&\qquad
S_4 &= \frac{\bar{\delta}_g^2}{\mu^2}.
\end{align}

We now specialize this bound for the two estimators.

\subsection{Proof of Theorem \ref{thm:mlmc}}

We begin by specifying the parameters of Algorithm
\ref{alg:MLMC-NPG}.  
Set

\begin{align}
\alpha &= \frac{\mu\epsilon\log(1/\epsilon)}{4L_JG^2},&\qquad
B_{\max} &= \frac{1}{\epsilon^2}, &\qquad
K &= \Theta\!\left(\frac{1}{\alpha\epsilon}\right), &\qquad
B &= 1 .
\end{align}

The trajectory length and number of inner iterations are chosen as

\begin{align}
H &= 2\frac{\log(1/\epsilon)}{-\log\gamma},
&\qquad
N &=
\frac{4CMG_1}{\mu^2(1-\gamma)^2}
\log\!\left(\frac{R_0^2}{\epsilon^2}\right).
\end{align}

With this choice we obtain

\begin{align}
\gamma^H \le \epsilon^2,
\qquad
\gamma^{2H} \le \epsilon^4,
\label{eq:gamma_bounds}
\end{align}

and

\begin{align}
\exp\!\left(
-\frac{N\mu^2(1-\gamma)^2}{4CMG_1}
\right)R_0^2
\le \epsilon^2 .
\label{eq:T1bound}
\end{align}

\paragraph{Bounding $T_1,\dots,T_4$.}

From \eqref{eq:T1bound},

\begin{align}
T_1 \le \epsilon^2 .
\end{align}

Since $B=1$ and using \eqref{eq:gamma_bounds},

\begin{align}
T_2
&=
\frac{C^2M^2G_1^4}{\mu^2(1-\gamma)^4}
\left(
1+\gamma^{2H}
\right)
\nonumber\\
&\le
\frac{C^2M^2G_1^4}{\mu^2(1-\gamma)^4}
\left(
1+\epsilon^4
\right)
\nonumber\\
&=
\mathcal{O}\!\left(
\frac{1}{(1-\gamma)^4}
\right).
\end{align}

Similarly

\begin{align}
T_3
=
\frac{C^2M^2G_1^4}{\mu^4(1-\gamma)^4}\gamma^H
\le
\mathcal{O}\!\left(
\frac{\epsilon^2}{(1-\gamma)^4}
\right).
\end{align}

To bound $T_4$, we use Lemma \ref{lem:mlmc}.  
The MLMC gradient estimator satisfies

\begin{align}
\sigma_g^2
\le
\mathcal{O}\!\left(
\frac{L_f^2G_1^2M^3\log B_{\max}}{(1-\gamma)^6}
\right).
\end{align}

Since $B_{\max}=1/\epsilon^2$,

\begin{align}
\sigma_g^2
=
\mathcal{O}\!\left(
\frac{\log(1/\epsilon)}{(1-\gamma)^6}
\right).
\end{align}

Thus

\begin{align}
T_4
=
\mathcal{O}\!\left(
\frac{\log(1/\epsilon)}{(1-\gamma)^6}
\right).
\end{align}

Combining the above bounds we obtain

\begin{align}
\mathbb{E}\|\omega_k-\omega_k^\ast\|^2
=
\mathcal{O}\!\left(
\frac{\log(1/\epsilon)}{(1-\gamma)^6}
\right).
\label{eq:Tbound_final}
\end{align}

\paragraph{Bounding $S_1,\dots,S_4$.}

From \eqref{eq:T1bound},

\begin{align}
S_1 = T_1 \le \epsilon^2 .
\end{align}

Using \eqref{eq:gamma_bounds},

\begin{align}
S_2 = \mathcal{O}(\epsilon^2),
\qquad
S_3 = \mathcal{O}(\epsilon^2).
\end{align}

From Lemma \ref{lem:mlmc},

\begin{align}
\bar{\delta}_g
\le
\frac{G_1M^{3/2}}{(1-\gamma)^3\sqrt{B_{\max}}}.
\end{align}

With $B_{\max}=1/\epsilon^2$,

\begin{align}
\bar{\delta}_g
=
\mathcal{O}\!\left(
\frac{\epsilon}{(1-\gamma)^3}
\right).
\end{align}

Thus

\begin{align}
S_4
=
\mathcal{O}\!\left(
\frac{\epsilon^2}{(1-\gamma)^6}
\right).
\end{align}

Combining these bounds gives

\begin{align}
\|
\mathbb{E}[\omega_k]-\omega_k^\ast
\|
=
\mathcal{O}\!\left(
\frac{\epsilon}{(1-\gamma)^3}
\right).
\label{eq:Sbound_final}
\end{align}

\paragraph{Substituting the bounds into \eqref{eq:master}.}

Recall the  bound \eqref{eq:master}:

\begin{align}
f(\J^{\pi^\ast}) - \frac{1}{K} \sum_{k=0}^{K-1} \mathbb{E}[f(\J^{\pi_{\theta_k}})]
\le\;& \frac{\sqrt{\varepsilon_{\mathrm{bias}}}}{1-\gamma}
+ \frac{G_1}{K} \sum_{k=0}^{K-1} \|\mathbb{E}[\omega_k] - \omega_k^\ast\|
+ \frac{1}{\alpha K} \mathbb{E}_{s \sim d_\rho^{\pi^\ast}} \!\left[ \mathrm{KL}(\pi^\ast(\cdot | s) \| \pi_{\theta_0}(\cdot | s)) \right] \nonumber\\
&\; + 4G_2\,\frac{\mathbb{E}\!\left[f(\J^{\pi_{\theta_{K}}}) - f(\J^{\pi_{\theta_{0}}})\right]}{K}
+ \frac{8G_2\alpha}{\mu K}\sum_{k=0}^{K-1}\|\mathbb{E}[\omega_k]-\omega_k^\ast\|^2 \nonumber\\
&\; + \left(\frac{G_2\alpha}{K} + \frac{4G_2L_J\alpha^2}{K}\right)\sum_{k=0}^{K-1}\mathbb{E}\!\left[\|\omega_k-\omega_k^\ast\|^2\right].
\label{eq:master_sub}
\end{align}

From \eqref{eq:Sbound_final}, we have

\[
\|
\mathbb{E}[\omega_k]-\omega_k^\ast
\|
=
\mathcal{O}
\left(
\frac{\epsilon}{(1-\gamma)^3}
\right).
\]

Therefore

\begin{align}
\frac{G_1}{K}
\sum_{k=0}^{K-1}
\|
\mathbb{E}[\omega_k]-\omega_k^\ast
\|
&=
G_1
\cdot
\mathcal{O}
\left(
\frac{\epsilon}{(1-\gamma)^3}
\right)
\nonumber\\
&=
\mathcal{O}
\left(
\frac{\epsilon}{(1-\gamma)^3}
\right).
\label{eq:term1}
\end{align}

Next, using the same bound,

\begin{align}
\frac{8G_2\alpha}{\mu K}
\sum_{k=0}^{K-1}
\|
\mathbb{E}[\omega_k]-\omega_k^\ast
\|^2
&=
\frac{8G_2\alpha}{\mu}
\cdot
\mathcal{O}
\left(
\frac{\epsilon^2}{(1-\gamma)^6}
\right)
\nonumber\\
&=
\mathcal{O}
\left(
\frac{\alpha\epsilon^2}{(1-\gamma)^6}
\right)=
\widetilde{\mathcal{O}}
\left(
\frac{\epsilon^3}{(1-\gamma)^4}
\right).
\label{eq:term2}
\end{align}

From \eqref{eq:Tbound_final},

\[
\mathbb{E}
\|
\omega_k-\omega_k^\ast
\|^2
=
\mathcal{O}
\left(
\frac{\log(1/\epsilon)}{(1-\gamma)^6}
\right).
\]

Thus

\begin{align}
\left(
\frac{G_2\alpha
+
4G_2L_J\alpha^2}{K}
\right)
\sum_{k=0}^{K-1}
\mathbb{E}
\|
\omega_k-\omega_k^\ast
\|^2
\nonumber\\
=
\left(
G_2\alpha
+
4G_2L_J\alpha^2 
\right)
\mathcal{O}
\left(
\frac{\log(1/\epsilon)}{(1-\gamma)^6}\right) = \widetilde{\mathcal{O}}\left(
\frac{\epsilon}{(1-\gamma)^4}
\right).
\label{eq:term3}
\end{align}

Since $K=\Theta\!\left(\frac{1}{\alpha\epsilon}\right)$ and  the KL divergence term
$\mathrm{KL}(\pi^\ast(\cdot|s)\|\pi_{\theta_0}(\cdot|s))$
is bounded by a constant for bounded policy classes, we have
\begin{equation}
\frac{1}{\alpha K}
\mathbb{E}_{s\sim d_\rho^{\pi^\ast}}
\mathrm{KL}(\pi^\ast(\cdot|s)\|\pi_{\theta_0}(\cdot|s))
=
\mathcal{O}(\epsilon).
\end{equation}
The remaining  term,

\begin{equation}
\frac{4G_2}{K}
\mathbb{E}[f(\J^{\pi_{\theta_K}})-f(\J^{\pi_{\theta_0}})] =\mathcal{O}(1/K) = \widetilde{\mathcal{O}}(\epsilon^2)\label{lastt1}
\end{equation}

Combining \eqref{eq:term1}--\eqref{lastt1} with
\eqref{eq:master_sub} yields

\begin{align}
&f(\J^{\pi^\ast})
-
\frac{1}{K}
\sum_{k=0}^{K-1}
\mathbb{E}[f(\J^{\pi_{\theta_k}})]
\nonumber\\
&\le
\frac{\sqrt{\varepsilon_{\mathrm{bias}}}}{1-\gamma}
+
\mathcal{O}
\left(
\frac{\epsilon}{(1-\gamma)^3}
\right)
+
\widetilde{\mathcal{O}}
\left(
\frac{\epsilon}{(1-\gamma)^4}
\right) + \mathcal{O}(\epsilon)\\
&\le
\frac{\sqrt{\varepsilon_{\mathrm{bias}}}}{1-\gamma}
+
\widetilde{\mathcal{O}}
\left(
\frac{\epsilon}{(1-\gamma)^4}
\right).
\end{align}

\subsection{Proof of Theorem \ref{thm:npg}}

We begin by specifying the parameters of Algorithm
\ref{alg:NPG}.  
Set

\begin{align}
\alpha &= \frac{\mu}{4L_JG_1^2}, &\qquad
B &= \frac{1}{\epsilon(1-\gamma)^2}, &\qquad
K &= \Theta\!\left(\frac{1}{\alpha\epsilon}\right).
\end{align}

The trajectory length and number of inner iterations are chosen as

\begin{align}
H &= 2\frac{\log(1/\epsilon)}{-\log\gamma},
&\qquad
N &=
\frac{4CMG_1}{\mu^2(1-\gamma)^2}
\log\!\left(\frac{R_0^2}{\epsilon^2}\right).
\end{align}

With this choice we obtain

\begin{align}
\gamma^H \le \epsilon^2,
\qquad
\gamma^{2H} \le \epsilon^4,
\label{eq:gamma_bounds_vanilla}
\end{align}

and

\begin{align}
\exp\!\left(
-\frac{N\mu^2(1-\gamma)^2}{4CMG_1}
\right)R_0^2
\le \epsilon^2 .
\label{eq:T1bound_vanilla}
\end{align}

\paragraph{Bounding $T_1,\dots,T_4$.}

From \eqref{eq:T1bound_vanilla},

\begin{align}
T_1 \le \epsilon^2 .
\end{align}

Using \eqref{eq:gamma_bounds_vanilla},

\begin{align}
T_2
&=
\frac{C^2M^2G_1^4}{\mu^2(1-\gamma)^4}
\left(
\frac{1}{B}+\gamma^{2H}
\right)
\nonumber\\
&\le
\frac{C^2M^2G_1^4}{\mu^2(1-\gamma)^4}
\left(
(1-\gamma)^2\epsilon+\epsilon^4
\right)
\nonumber\\
&=
\mathcal{O}
\left(
\frac{\epsilon}{(1-\gamma)^2}
\right).
\end{align}

Similarly

\begin{align}
T_3
=
\frac{C^2M^2G_1^4}{\mu^4(1-\gamma)^4}\gamma^H
\le
\mathcal{O}
\left(
\frac{\epsilon^2}{(1-\gamma)^4}
\right).
\end{align}

To bound $T_4$, we use Lemma \ref{lem:vanilla}.  
The variance of the vanilla gradient estimator satisfies

\begin{align}
\sigma_g^2
\le
\mathcal{O}
\left(
\frac{L_f^2G_1^2M^3}{(1-\gamma)^6 B}
\right).
\end{align}

Since $B = 1/((1-\gamma)^2\epsilon)$, we obtain

\begin{align}
\sigma_g^2
=
\mathcal{O}
\left(
\frac{\epsilon}{(1-\gamma)^4}
\right).
\end{align}

Thus

\begin{align}
T_4
=
\mathcal{O}
\left(
\frac{\epsilon}{(1-\gamma)^4}
\right).
\end{align}

Combining the above bounds yields

\begin{align}
\mathbb{E}\|\omega_k-\omega_k^\ast\|^2
=
\mathcal{O}
\left(
\frac{\epsilon}{(1-\gamma)^4}
\right).
\label{eq:Tbound_vanilla}
\end{align}

\paragraph{Bounding $S_1,\dots,S_4$.}

From \eqref{eq:T1bound_vanilla},

\begin{align}
S_1 = T_1 \le \epsilon^2 .
\end{align}

Using \eqref{eq:gamma_bounds_vanilla},

\begin{align}
S_2 = \mathcal{O}(\epsilon^2),
\qquad
S_3 = \mathcal{O}(\epsilon^2).
\end{align}

From Lemma \ref{lem:vanilla},

\begin{align}
\bar{\delta}_g
\le
\frac{L_{2,f}G_1M^2}{(1-\gamma)^4 B}.
\end{align}

Substituting the value of $B$,

\begin{align}
\bar{\delta}_g
=
\mathcal{O}
\left(
\frac{\epsilon}{(1-\gamma)^2}
\right).
\end{align}

Thus

\begin{align}
S_4
=
\mathcal{O}
\left(
\frac{\epsilon^2}{(1-\gamma)^4}
\right).
\end{align}

Combining these bounds gives

\begin{align}
\|
\mathbb{E}[\omega_k]-\omega_k^\ast
\|
=
\mathcal{O}
\left(
\frac{\epsilon}{(1-\gamma)^2}
\right).
\label{eq:Sbound_vanilla}
\end{align}

\paragraph{Substituting the bounds into \eqref{eq:master}.}

Using \eqref{eq:Sbound_vanilla},

\begin{align}
\frac{G_1}{K}
\sum_{k=0}^{K-1}
\|
\mathbb{E}[\omega_k]-\omega_k^\ast
\|
=
\mathcal{O}
\left(
\frac{\epsilon}{(1-\gamma)^2}
\right).
\end{align}

Similarly,

\begin{align}
\frac{8G_2\alpha}{\mu K}
\sum_{k=0}^{K-1}
\|
\mathbb{E}[\omega_k]-\omega_k^\ast
\|^2
=
\mathcal{O}
\left(
\frac{\epsilon^2}{(1-\gamma)^2}
\right).
\end{align}

From \eqref{eq:Tbound_vanilla},

\begin{align}
\left(
\frac{G_2\alpha}{K}
+
\frac{4G_2L_J\alpha^2}{K}
\right)
\sum_{k=0}^{K-1}
\mathbb{E}
\|
\omega_k-\omega_k^\ast
\|^2
=
\mathcal{O}
\left(
\frac{\epsilon}{(1-\gamma)^2}
\right).
\end{align}

Further, 

\begin{equation}
\frac{1}{\alpha K}
\mathbb{E}_{s\sim d_\rho^{\pi^\ast}}
\mathrm{KL}(\pi^\ast(\cdot|s)\|\pi_{\theta_0}(\cdot|s))
=
\mathcal{O}(\epsilon).
\end{equation}
The remaining  term,

\begin{equation}
\frac{4G_2}{K}
\mathbb{E}[f(\J^{\pi_{\theta_K}})-f(\J^{\pi_{\theta_0}})] =\mathcal{O}(1/K) = \mathcal{O}(\epsilon).
\end{equation}

Combining the above bounds yields

\begin{align}
&f(\J^{\pi^\ast})
-
\frac{1}{K}
\sum_{k=0}^{K-1}
\mathbb{E}[f(\J^{\pi_{\theta_k}})]
\nonumber\\
&\le
\frac{\sqrt{\varepsilon_{\mathrm{bias}}}}{1-\gamma}
+
\mathcal{O}
\left(
\frac{\epsilon}{(1-\gamma)^2}
\right)
+
\mathcal{O}
\left(
\frac{\epsilon}{(1-\gamma)^2}
\right) + \mathcal{O}(\epsilon)\\
&\le
\frac{\sqrt{\varepsilon_{\mathrm{bias}}}}{1-\gamma}
+
{\mathcal O}
\left(
\frac{\epsilon}{(1-\gamma)^2}
\right).
\end{align}

\end{document}